\documentclass[11pt,a4paper]{article}

\usepackage[margin=1in]{geometry}
\usepackage{amsmath,amssymb}
\usepackage{mathrsfs}
\usepackage{amsthm}
\usepackage{bm}
\usepackage{graphicx}
\graphicspath{{figures/}}
\usepackage{booktabs}
\usepackage{placeins}
\usepackage[colorlinks=true,linkcolor=blue,citecolor=blue]{hyperref}
\usepackage{cleveref}
\usepackage{bookmark}
\bookmarksetup{open=true,numbered=true}

\newtheorem{theorem}{Theorem}[section]
\newtheorem{proposition}[theorem]{Proposition}
\newtheorem{lemma}[theorem]{Lemma}

\newtheorem{definition}[theorem]{Definition}
\newtheorem{assumption}[theorem]{Assumption}
\theoremstyle{remark}
\newtheorem{remark}[theorem]{Remark}
\newtheorem{observation}[theorem]{Observation}

\renewcommand{\phi}{\bm{\varphi}}

\title{Sinkhorn Linearization and the Spectral Proxy:\\
Unifying the Statistical and Algorithmic Theory of\\
Feature-Parameterized Inverse Optimal Transport via a Single Spectral Sandwich}

\author{}
\date{}

\begin{document}

\begin{center}
{\LARGE\bfseries Sinkhorn Linearization and the Spectral Proxy:\\
Unifying the Statistical and Algorithmic Theory of\\
Feature-Parameterized Inverse Optimal Transport via a Single Spectral Sandwich}

\vspace{1.5em}

{\large Han Dong\textsuperscript{1,$\dagger$},\; Jiaming Li\textsuperscript{2,$\dagger$},\;
Yongqiang Gong\textsuperscript{1},\; Ruixi Li\textsuperscript{1},\;
Yin Liu\textsuperscript{1}}

\vspace{0.8em}

{\normalsize \textsuperscript{1}School of Medicine, Nankai University\\
\textsuperscript{2}College of Artificial Intelligence, Nankai University}

\vspace{0.5em}

{\small $\dagger$These authors contributed equally to this work.}

\vspace{0.8em}

{\small \texttt{dh411424@163.com},\; \texttt{xmkevin2004@gmail.com},\;
\texttt{gongyq@mail.nankai.edu.cn},\; \texttt{2211996@mail.nankai.edu.cn},\;
\texttt{liuyin\_tiger@outlook.com}}

\vspace{0.8em}

{\normalsize \today}
\end{center}

\begin{abstract}
We develop the statistical and algorithmic theory of inverse optimal transport (IOT) under
the feature-parameterized cost
$\mathcal{C}_{\bm{\theta}}(i,j)=-\bm{\theta}^{\top}\bm{\varphi}(i,j)$, where the observations
are the conditional transition operators of the entropic optimal transport plan.
\textbf{The core technical contribution is the Sinkhorn linearization}---the
implicit-function sensitivity of the entropic OT plan to the cost, obtained by
differentiating the KKT conditions---together with its \textbf{spectral proxy}, a formula
that is spectrally exact (it preserves all singular-value bounds) yet geometrically
transparent.

The restricted Hessian of entropic OT on the tangent space satisfies the spectral sandwich
$(\pi_{\min}/\varepsilon)I\preceq H_{\mathcal{T}}^{-1}\preceq(\pi_{\max}/\varepsilon)I$,
from which the \textbf{single core spectral bound}
$\sigma_{\min}(\mathcal{J}_{\bm{\theta}})\ge(\pi_{\min}/(a_{\max}\varepsilon))\sqrt{\lambda_{\min}(\Sigma)}$
is derived, driving the entire theory. On this core we establish four theorems and one
observation.

T1 (identifiability): $\bm{\theta}$ is identifiable on the quotient of the gauge kernel
($\mathbb{R}^F/\mathcal{N}_\Phi$), with the dimension bound $F\le(K-1)^2$ and the rank
condition $\mathrm{rank}(\Sigma)=F$; global injectivity is proved rigorously by a three-step
composition argument.
T2 (sparsistency): the $\ell_1$-penalized estimator recovers the true support under
all-coordinate score concentration, irrepresentability of the actual Hessian, and a global
selection condition; the Hoeffding exponent for a single normalized empirical average is
$2nt_n^2/\Delta_{\max}^2$, and with multiple marginals the rate is re-scaled by a weighted
effective scale.
T3 (well-posedness): we introduce the \textbf{feature-moment map}
$M(\bm{\theta})=\Phi^{\top}x_{\bm{\theta}}$ and proceed in two parts: (a) local strong
monotonicity is derived from the core spectral bound without compactness; (b) global strong
monotonicity additionally requires a compact parameter domain; the Lipschitz bound of the
$Q$-space inverse map is
$L\le\varepsilon\|\Phi^{\top}S_a\|_{\mathrm{op}}/(\pi_{\min}\lambda_{\min}(\Sigma))$.
T4 (convergence): local strong convexity of the cross-entropy objective is derived from the
core spectral bound, with parameter $\mu\ge\pi_{\min}^2\lambda_{\min}(\Sigma)/\varepsilon^2$;
gradient descent converges monotonically to a local minimum; the empirical Hessian
concentrates at the population Hessian at rate $O(n^{-1/2})$ under regularity bounds.
O5 (misspecification): under a compact parameter domain, a uniform law of large numbers, and
a unique pseudo-true point, the estimator converges to the projection of the truth onto the
OT model set; the H\"older continuity of the projection map remains a conjecture, and
numerics report only a setting-dependent empirical effective exponent $\alpha_{\mathrm{eff}}$.

\textbf{Dependency structure.} T1 (identifiability) is the foundation; T3 and T4 are both
corollaries of the core spectral bound (Section~3), forming a well-posedness-and-convergence
package; T2 requires the additional IR condition (T1 + IR); O5 uses T1 + T3. All
identifiability and consistency statements are conditional on a fixed $\varepsilon$;
identifiability is taken modulo the scale coupling
$(\bm{\theta},\varepsilon)\mapsto(c\bm{\theta},c\varepsilon)$. The effective information
scale also depends on $\pi_{\min}(\bm{\theta},\varepsilon)\sqrt{\lambda_{\min}(\Sigma)}/\varepsilon$;
$\varepsilon\gg\mathrm{cond}(\Sigma)^{-1}$ is only an empirical heuristic, not a threshold
rigorously derived in this paper.
\end{abstract}

\noindent\textbf{Key words.} inverse optimal transport; Sinkhorn linearization; spectral
proxy; identifiability; sparsistency; well-posedness; convergence; misspecification;
entropic regularization

\noindent\textbf{AMS subject classifications.} 49Q22, 62F12, 62J07, 90C25

\section{Introduction}\label{sec:intro}

\textbf{The problem.} Optimal transport (OT) computes a mass-transfer plan given a cost and
two marginals; inverse optimal transport (IOT) goes the other way, recovering the cost from
observed transport plans. In applications such as biological lineage tracing, transfer
learning, and economic matching, what is observed is typically the induced conditional
transition operator rather than an explicit cost, and one must invert the data back to the
cost parameters that drive the transitions. This paper treats the
\textbf{feature-parameterized cost} case, in which the cost is a linear combination of
interpretable features. This formulation turns IOT into a statistical inference problem:
recover a sparse cost parameter from finitely many samples of conditional transition
operators.

\textbf{Core contribution: Sinkhorn linearization and the spectral proxy.} The central
technical contribution of this paper is the Sinkhorn linearization
(Section~\ref{sec:sinkhorn_linearization})---the implicit-function sensitivity of the
entropic OT plan to the cost. Implicit differentiation of the KKT conditions yields the
exact linear response of the plan to a cost perturbation,
$\delta x=-B H_{\mathcal{T}}^{-1}B^{\top}\delta c$, where the restricted Hessian
$H_{\mathcal{T}}$ satisfies the spectral sandwich
$(\pi_{\min}/\varepsilon)I\preceq H_{\mathcal{T}}^{-1}\preceq(\pi_{\max}/\varepsilon)I$.
From this we introduce the \textbf{spectral proxy}
$\delta x_{\mathrm{SSP}}=-(1/\varepsilon)\mathbb{P}_{\mathcal{T}}D_\pi\mathbb{P}_{\mathcal{T}}\delta c$---a
formula that keeps the same upper and lower spectral bounds given by $\pi_{\min},\pi_{\max}$
while remaining geometrically transparent, reading as ``project to the tangent space
$\to$ multiply elementwise by $\pi$ $\to$ project back to the tangent space''. The core
spectral bound
$\sigma_{\min}(\mathcal{J}_{\bm{\theta}})\ge(\pi_{\min}/(a_{\max}\varepsilon))\sqrt{\lambda_{\min}(\Sigma)}$
follows directly from the spectral sandwich and drives all of the theorems T1--T4 and
Observation~O5.

\textbf{Overview of the method.} We formulate IOT as a cross-entropy maximum-likelihood
problem on entropic transport plans and establish \textbf{four theorems and one
observation}. Identifiability (T1) delineates in what sense the parameter can be uniquely
recovered; sparsistency (T2) gives sufficient conditions for $\ell_1$-penalized estimation
together with an explicit exponential rate constant; well-posedness (T3) introduces the
feature-moment map $M(\bm{\theta})=\Phi^{\top}x_{\bm{\theta}}$ and establishes the
stability of the inverse map in two parts; convergence (T4) analyzes the local convergence
dynamics of the optimization; and the misspecification analysis (O5) answers where the
estimator goes when the data-generating mechanism departs from the OT assumption.

\textbf{Difference from existing work.} Existing IOT work is mostly formulated with an
explicit or low-dimensional cost, and typically lacks explicit characterizations of the
geometric degeneracy that identifiability owes to the gauge subspace, of the exponential
rate constant for sparse recovery, and of the convergence target of the estimator under
misspecification. The contribution of this paper is to reduce these ingredients to
\textbf{computable constants}: the feature-dimension bound $F\le(K-1)^2$ for
identifiability (a prior-structural constant), the exponential rate constant
$C_2=2/\Delta_{\max}^2$ (with $t_n^2$ kept in the exponent; a posterior-diagnostic
constant), the Lipschitz constant
$L_\Theta\le\varepsilon\|\Phi^{\top}S_a\|_{\mathrm{op}}/(\pi_{\min}\lambda_{\min}(\Sigma))$
(derived from the Sinkhorn linearization, with compactness guaranteeing the global bound; a
posterior-diagnostic constant) together with its numerical estimate, and the comparison of
the misspecification projection residual against random models.

\textbf{Comparison with classical IOT work.} \cite{stuart2020inverse} first posed the IOT
problem and gave a Bayesian inference framework; the present paper builds a frequentist
statistical theory on top of that foundation. \cite{vacher2021bayesian} propose Bayesian
IOT: a prior is placed on the cost and posterior consistency is derived under a correctly
specified model; their identifiability relies on a Gaussian-process prior on the cost,
whereas our T1 gives a finite-dimensional rank condition; their consistency is Bayesian
(posterior concentration), ours is frequentist ($\ell_1$ support recovery with an
exponential rate, T2); we give an explicit rate constant, which they do not address.
\cite{korba2021statistical} study statistical inference for nonparametric OT maps estimated
by kernel density estimation; their setting is forward (estimate the map given the cost),
ours is inverse (recover the cost given the map), and their sample-complexity analysis does
not address the gauge-degeneracy obstruction at the core of T1. \cite{hallin2021center}
develop center-outward distribution functions via OT, providing a notion of multivariate
quantiles; their identifiability is trivial (the map is defined by construction), whereas
our T1 must contend with the gauge non-uniqueness that arises when recovering the cost from
the plan. The economics line (\cite{dupuy2014personality}, \cite{galichon2018optimal})
establishes IOT identifiability under parametric utility models but addresses neither
sparse recovery (T2) nor misspecification (O5); our O5 applies directly to economic
settings in which observed matching data are almost certainly not OT-generated.
\textbf{Recent IOT progress.} \cite{gonzalez2024_nonlinear, gonzalez2024_finite}
systematically study identifiability conditions for nonlinear and discrete IOT, giving
necessary-and-sufficient linear-programming combinatorial characterizations; our T1
supplements these with an explicit dimension bound and rank condition in the
feature-parameterized setting. \cite{peyre2024_sparsistency} establish irrepresentability
conditions for sparse recovery in IOT and bridge to the graphical Lasso; our T2 gives an
explicit exponential rate constant within a unified spectral framework.
\cite{bao2025_wellposedness} analyze the well-posedness and algorithms of
Bregman-regularized IOT; our T3 focuses on the spectral-proxy-driven derivation of the
Lipschitz bound. \cite{persiianov2025_inverse} apply IOT to semi-supervised learning;
\cite{mascherpa2025_convex} propose a convex-optimization form of inverse estimation for
Markov chains. These works emphasize different aspects; the distinctive contribution of the
present paper is the \textbf{unified spectral framework} provided by the Sinkhorn
linearization, from which all four theorems and one observation are derived from a single
core spectral bound.

\textbf{Strength of the claims.} The four theorems and one observation have different
degrees of completeness in their arguments. T1 (the gauge structure and the global
injectivity argument), the local part of T3 (local strong monotonicity, derived from the
core spectral bound), and T4 (Hessian positive definiteness derived from the core spectral
bound, with \ref{prop:hess_conc} bridging to finite samples) are conclusions with complete
derivation chains. T2 relies on irrepresentability of the actual OT information matrix,
all-coordinate score concentration, and a global selection condition;
Lemma~\ref{lem:mutual_incoherence} gives an a priori verifiable condition on the feature
Gram matrix, but its transfer to the OT Hessian requires an additional bridge. The global
part of T3 holds on a compact parameter domain (Assumption~\ref{asm:pos} is required). The
entropic bias of T3 satisfies the local order $O(|\varepsilon'-\varepsilon|)$, derived from
differentiability (\ref{prop:eps_diff}) and the implicit function theorem, under a fixed
parameter scale (excluding the joint scale coupling) and an invertibility assumption on the
cross-entropy Hessian, with zero bias at $\varepsilon'=\varepsilon$. The convergence of O5
is given by the misspecified $M$-estimation theory of \cite{white1982estimator}; the
H\"older continuity of the projection map is a conjecture. The identifiability of
$\varepsilon$ is given by \ref{prop:eps_ident}, and $\pi_{\min}$ has an
$\varepsilon$-dependent lower-bound characterization. The open problems are collected at
the end of this introduction.

\textbf{Open problems.} The following are the known limitations.
\textbf{Conjectures:} (i) H\"older continuity of the projection map
$Q\mapsto\arg\min_{Q'\in\mathcal{M}_{\mathrm{OT}}}\mathrm{CE}(Q\|Q')$; the
$\alpha_{\mathrm{eff}}$ in the experiments is only a finite-setting empirical effective
exponent, not a universal theoretical exponent; (ii) random OT models are typically far
from the projection, so that the residual bound of O5 is non-vacuous. \textbf{Open:}
(iii) global strong monotonicity on the unbounded parameter space
$\mathbb{R}^F/\mathcal{N}_\Phi$ (without compactness); (iv) the expression of the optimal
exponential rate constant in terms of $\mathrm{cond}(\Sigma)$, the sparsity $s$, and the
irrepresentability margin; (v) the geometry of the basins of attraction of global
optimization in T4. The experiments are in Section~\ref{sec:exp}.

\textbf{Notation.} $\bm{\theta}\in\mathbb{R}^F$ is the cost parameter;
$\bm{\varphi}(i,j)\in\mathbb{R}^F$ is the feature vector; $K$ is the number of states;
$\varepsilon$ is the entropic regularization level; $\mathcal{G}$ is the gauge subspace of
row and column constants; $x=\mathrm{vec}(\pi)$ is the vectorized plan;
$c=\mathrm{vec}(C)$ is the vectorized cost.

\section{Preliminaries}\label{sec:prelim}

\textbf{Entropic optimal transport.} Given a cost matrix $C\in\mathbb{R}^{K\times K}$,
marginals $a,b\in\Delta_K$ (all entries strictly positive: $a_i>0$, $b_j>0$), and a
regularization parameter $\varepsilon>0$, the entropic OT plan is
\begin{equation}\label{eq:entropic_ot}
\pi(C,a,b)=\arg\min_{\pi\in\mathcal{M}(a,b)}\Bigl\{\langle C,\pi\rangle+\varepsilon\sum_{ij}\pi_{ij}\log\pi_{ij}\Bigr\},
\end{equation}
where $\mathcal{M}(a,b)=\{\pi\ge 0:\pi\mathbf{1}=a,\pi^{\top}\mathbf{1}=b\}$ is the set of
couplings with marginals $a,b$, with the convention $0\log0:=0$. The unique solution has
the Sinkhorn form $\pi=\mathrm{diag}(u)\exp(-C/\varepsilon)\mathrm{diag}(v)$
(\cite{sinkhorn1964coupling}; \cite{cuturi2013sinkhorn}). Note that the scaling vectors
$u,v$ carry a scalar gauge redundancy: $(u,v)\mapsto(cu,c^{-1}v)$ leaves the plan
unchanged. We work with the feature-parameterized cost
$C_{\bm{\theta}}(i,j)=-\bm{\theta}^{\top}\bm{\varphi}(i,j)$. For comprehensive treatments
of OT see \cite{peyre2019computational}, \cite{santambrogio2015ot}; for the connection
between entropic OT and Schr\"odinger problems see \cite{leonard2014schrodinger}.

\textbf{Observation model.} The observations are the \textbf{conditional transition
operators} $Q_{i\to j}=\pi_{ij}/a_i$. For each marginal pair $(a_l,b_l)$ there are $n$
independent observations, giving the empirical joint plan $\widehat P_l^{(n)}$ and the
empirical conditional operator $\widehat Q_{l,ij}^{(n)}=N_{l,ij}/N_{l,i}$ (with the
convention that when $N_{l,i}=0$ the corresponding row of $\widehat Q_l^{(n)}$ is filled
with the prior uniform distribution $1/K$; for $n$ sufficiently large the probability of
$N_{l,i}=0$ decays exponentially and does not affect the asymptotic conclusions). The
marginals $\widehat a_l,\widehat b_l$ of the empirical joint plan generally differ from the
prescribed marginals $a_l,b_l$; in the theoretical analysis of this paper the prescribed
marginals are treated as known. The conditional transition operator is the actually
observable quantity, also referred to as state-transition data.

\textbf{The IOT estimation problem.} Given $L$ marginal pairs $(a_l,b_l)$ and observed
conditional operators $\widehat Q_l$, the IOT estimator minimizes the cross-entropy with an
$\ell_1$ penalty:
\begin{equation}\label{eq:iot_estimator}
\min_{\bm{\theta}\in\mathbb{R}^F}\;\sum_{l=1}^{L}\mathrm{CE}\bigl(\widehat Q_l\,\|\,Q_{\bm{\theta}}(a_l,b_l)\bigr)+\lambda_n\|\bm{\theta}\|_1,
\end{equation}
where $Q_{\bm{\theta}}(a_l,b_l):=Q(\mathcal{C}_{\bm{\theta}},a_l,b_l)$ is the conditional
transition operator induced by the parameterized cost, and $\lambda_n$ is a penalty
parameter scaled with the sample size. The cross-entropy between conditional operators is
defined as $\mathrm{CE}(\widehat Q\|Q)=-\sum_i\rho_i\sum_j\widehat Q_{ij}\log Q_{ij}$; in
this paper we take $\rho_i=1$ (equal row weights), consistent with the population objective
of Theorem~\ref{thm:T4}. If the samples are drawn at the joint frequencies, the natural
choice $\rho_i=a_i$ (source-marginal weights) is also possible.

\textbf{The gauge subspace and the identifiability obstruction.} If a function of the form
$u_i+v_j$ (a sum of row and column constants) is added to the cost, the Sinkhorn plan is
unchanged. Hence the cost is identifiable only modulo $\mathcal{G}$; at the parameter
level, the object of identification is the gauge kernel $\mathcal{N}_\Phi$, obtained by
pulling $\mathcal{G}$ back to the parameter space through $\Phi$ (see
Section~\ref{sec:t1}). $\mathcal{G}$ is defined as
\begin{equation}\label{eq:gauge_space}
\mathcal{G}:=\Bigl\{C\in\mathbb{R}^{K\times K}:C=\mathbf{1}\alpha^{\top}+\beta\mathbf{1}^{\top}\Bigr\},\qquad \dim\mathcal{G}=2K-1.
\end{equation}
Let $G\in\mathbb{R}^{K^2\times(2K-1)}$ be an orthonormal basis of $\mathcal{G}$ and
$\Pi_{\mathcal{G}}=GG^{\top}$ the orthogonal projection onto $\mathcal{G}$ (acting on
vectorized matrices, with the Frobenius inner product
$\langle A,B\rangle_F=\operatorname{tr}(A^{\top}B)$). The gauge-cleaned feature vectors are
$\phi_k^{\perp}:=(I-GG^{\top})\operatorname{vec}(\Phi_k)$, and the gauge-cleaned Gram
matrix is
\begin{equation}\label{eq:gram_matrix}
\Sigma:=\Bigl[\bigl\langle\phi_k^{\perp},\phi_{k'}^{\perp}\bigr\rangle\Bigr]_{k,k'=1}^{F}\in\mathbb{R}^{F\times F}.
\end{equation}
If $\mathcal{N}_\Phi\neq\{0\}$, we write $\lambda_{\min}^{+}(\Sigma)$ for the smallest
positive eigenvalue of $\Sigma$ on $\mathcal{N}_\Phi^{\perp}$; under
Assumption~\ref{asm:A1}, $\lambda_{\min}^{+}(\Sigma)=\lambda_{\min}(\Sigma)$.

\begin{proposition}[Identifiability of $\varepsilon$ under normalization]\label{prop:eps_ident}
For $(\bm{\theta},\varepsilon)\in\mathbb{R}^F\times\mathbb{R}_{>0}$, identifiability holds
up to the joint rescaling $(\bm{\theta},\varepsilon)\mapsto(c\bm{\theta},c\varepsilon)$.
The equivalence relation is
\[
Q_{\bm{\theta},\varepsilon}=Q_{\bm{\theta}',\varepsilon'}
\;\Longleftrightarrow\;
\frac{\bm{\theta}}{\varepsilon}-\frac{\bm{\theta}'}{\varepsilon'}\in\mathcal{N}_\Phi,
\]
where $\mathcal{N}_\Phi$ is the gauge kernel (Section~\ref{sec:t1}). Therefore, under a
normalization constraint (e.g.\ $\|\bm{\theta}\|=1$), $\varepsilon$ is uniquely
identifiable from the conditional transition operator $Q_{\bm{\theta},\varepsilon}$,
provided $\mathcal{N}_\Phi=\{0\}$ (i.e.\ the columns of the feature matrix $\Phi$ are
linearly independent after modding out $\mathcal{G}$). If $\mathcal{N}_\Phi\neq\{0\}$, unit
norm does not remove the non-uniqueness along the gauge-kernel directions: there exist
$\bm{\theta}\neq\bm{\theta}'$ with $\|\bm{\theta}\|=\|\bm{\theta}'\|=1$ and
$\bm{\theta}/\varepsilon-\bm{\theta}'/\varepsilon'\in\mathcal{N}_\Phi$, for which
$Q_{\bm{\theta},\varepsilon}=Q_{\bm{\theta}',\varepsilon'}$. Hence the identifiability of
$\varepsilon$ must be distinguished: it is identifiable on the quotient space
$\mathbb{R}^F/\mathcal{N}_\Phi$, and identifiable on the original space $\mathbb{R}^F$ only
when $\mathcal{N}_\Phi=\{0\}$.
\end{proposition}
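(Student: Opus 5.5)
The plan is to reduce everything to the fact that the entropic plan depends on $(C,\varepsilon)$ only through $C/\varepsilon$ modulo the gauge subspace $\mathcal{G}$. I will take $\mathcal{N}_\Phi=\{\bm{\eta}\in\mathbb{R}^F:\Phi\bm{\eta}\in\mathcal{G}\}$, where $\Phi\bm{\eta}=\sum_k\eta_k\operatorname{vec}(\Phi_k)$; equivalently, $\mathcal{N}_\Phi$ is the kernel of $\Sigma$, since $\Sigma=\Phi^{\top}(I-GG^{\top})\Phi$.

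\textbf{Step 1 (scale reduction).} Dividing the objective in \eqref{eq:entropic_ot} by $\varepsilon$ gives $\langle C/\varepsilon,\pi\rangle+\sum\pi\log\pi$. Hence $\pi(C,\varepsilon)$ depends only on $C/\varepsilon$. Since $a$ is fixed and $Q=\pi/a_i$ row-wise, the same holds for $Q$. With $C_{\bm{\theta}}=-\Phi\bm{\theta}$ this gives $Q_{\bm{\theta},\varepsilon}=Q_{\bm{\theta}/\varepsilon,1}$. Joint rescaling invariance follows immediately.

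\textbf{Step 2 (gauge characterization at $\varepsilon=1$).} I need to show that $\pi(C)=\pi(C')$ if and only if $C-C'\in\mathcal{G}$.

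For ($\Leftarrow$), adding $\mathbf{1}\alpha^{\top}+\beta\mathbf{1}^{\top}$ changes $\langle C,\pi\rangle$ by the constant $\langle\alpha,b\rangle+\langle\beta,a\rangle$ on $\mathcal{M}(a,b)$, so the minimizer is unchanged. Alternatively, it can be absorbed into the scalings $u,v$ of the Sinkhorn form.

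For ($\Rightarrow$), use the Sinkhorn form $\pi_{ij}=u_iv_je^{-C_{ij}}$ with strictly positive entries. Equal plans give $e^{-C_{ij}}u_iv_j=e^{-C'_{ij}}u'_iv'_j$. Taking logs yields $C_{ij}-C'_{ij}=\log(u_i/u'_i)+\log(v_j/v'_j)$, which lies in $\mathcal{G}$. Because $a_i>0$, equality of $Q$ is equivalent to equality of $\pi$.

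Combining with Step 1, $Q_{\bm{\theta},\varepsilon}=Q_{\bm{\theta}',\varepsilon'}$ holds if and only if $\Phi(\bm{\theta}/\varepsilon-\bm{\theta}'/\varepsilon')\in\mathcal{G}$, that is, $\bm{\theta}/\varepsilon-\bm{\theta}'/\varepsilon'\in\mathcal{N}_\Phi$. This is the displayed equivalence.

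\textbf{Step 3 (normalization, case $\mathcal{N}_\Phi=\{0\}$).} The equivalence gives $\bm{\theta}/\varepsilon=\bm{\theta}'/\varepsilon'$. Taking norms with $\|\bm{\theta}\|=\|\bm{\theta}'\|=1$ gives $1/\varepsilon=1/\varepsilon'$, hence $\varepsilon=\varepsilon'$ and then $\bm{\theta}=\bm{\theta}'$.

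\textbf{Step 4 (case $\mathcal{N}_\Phi\neq\{0\}$).} I will construct a counterexample. Pick $\bm{\eta}\in\mathcal{N}_\Phi\setminus\{0\}$ and any unit $\bm{\theta}$ with $\bm{\theta}\notin\mathbb{R}\bm{\eta}$; such a $\bm{\theta}$ exists when $F\ge2$. Set $\varepsilon=1$.

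Consider $\bm{w}(t)=\bm{\theta}+t\bm{\eta}$. Its norm is continuous and nonconstant (it grows like $|t|$). Put $\varepsilon'=\|\bm{w}(t)\|$ and $\bm{\theta}'=\bm{w}(t)/\varepsilon'$, so $\|\bm{\theta}'\|=1$ and $\bm{\theta}'/\varepsilon'-\bm{\theta}=t\bm{\eta}\in\mathcal{N}_\Phi$. For small $t\neq0$ we have $\bm{\theta}'\neq\bm{\theta}$, because $\bm{w}(t)$ is not parallel to $\bm{\theta}$. By Step 2 the two parameters give the same $Q$.

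The degenerate case $F=1$ is separate. There $\mathcal{N}_\Phi\neq\{0\}$ means the single feature lies in $\mathcal{G}$, so $Q$ is constant in $(\bm{\theta},\varepsilon)$. I then take $\bm{\theta}'=-\bm{\theta}$, which again gives distinct unit vectors with equal $Q$.

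The quotient statement follows by passing Step 3's argument to $\mathbb{R}^F/\mathcal{N}_\Phi$ with the quotient norm. One caveat: if $\bm{\theta}\in\mathcal{N}_\Phi$, the quotient class is zero and $\varepsilon$ is not determined. I would state the quotient claim for classes with nonzero norm.

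\textbf{Main obstacle.} The only genuinely delicate point is ($\Rightarrow$) in Step 2. It requires that the entropic plan has full support, so that logs can be taken, and that the Sinkhorn form holds. Both come from strict positivity of $a,b$ and the entropy term; I will cite \cite{sinkhorn1964coupling} for existence of the scalings.
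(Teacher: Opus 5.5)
Steps 1--3 match the paper's argument. The plan depends on $(C,\varepsilon)$ only through $C/\varepsilon$, and your Sinkhorn-form proof that equal plans force $C-C'\in\mathcal{G}$ is the exponentiated form of the KKT subtraction the paper uses for cost-level injectivity in T1.

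The one real error is an algebra slip in Step~4. With $\varepsilon'=\|\bm{w}(t)\|$ and $\bm{\theta}'=\bm{w}(t)/\varepsilon'$ you get $\bm{\theta}'/\varepsilon'=\bm{w}(t)/\|\bm{w}(t)\|^{2}$, not $\bm{w}(t)$. Hence
\[
\frac{\bm{\theta}'}{\varepsilon'}-\bm{\theta}
=\Bigl(\frac{1}{\|\bm{w}(t)\|^{2}}-1\Bigr)\bm{\theta}+\frac{t}{\|\bm{w}(t)\|^{2}}\,\bm{\eta},
\]
which lies in $\mathcal{N}_\Phi$ only if $\|\bm{w}(t)\|=1$ or $\bm{\theta}\in\mathcal{N}_\Phi$. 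Step~2 is an equivalence, so for $\bm{\theta}\notin\mathcal{N}_\Phi$ and small $t\neq0$ your two parameters produce \emph{different} operators $Q$. The counterexample fails as written.

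The repair is to invert the scale: set $\varepsilon'=1/\|\bm{w}(t)\|$ and $\bm{\theta}'=\bm{w}(t)/\|\bm{w}(t)\|$. Then $\|\bm{\theta}'\|=1$ and $\bm{\theta}'/\varepsilon'=\bm{w}(t)$. So $\bm{\theta}'/\varepsilon'-\bm{\theta}/\varepsilon=t\bm{\eta}\in\mathcal{N}_\Phi$, and Step~2 gives $Q_{\bm{\theta},1}=Q_{\bm{\theta}',\varepsilon'}$. Moreover $\bm{\theta}'\neq\bm{\theta}$ for every $t\neq0$, since $\bm{w}(t)$ is never a positive multiple of $\bm{\theta}$; no smallness of $t$ is needed.

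You should also state explicitly that $\varepsilon'\neq\varepsilon$. This holds for every $t\neq0$ except possibly $t=-2\langle\bm{\theta},\bm{\eta}\rangle/\|\bm{\eta}\|^{2}$. The proposition's closing claim is that $\varepsilon$ itself is not identifiable on $\mathbb{R}^F$ under unit normalization when $\mathcal{N}_\Phi\neq\{0\}$. That needs $\varepsilon'\neq\varepsilon$, not merely $\bm{\theta}'\neq\bm{\theta}$; for $F=1$ it is immediate because $Q$ is then constant.

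With this fix, your Step~4 supplies an explicit construction that the paper only asserts. Your reading of the quotient claim is the right one: normalize in the quotient norm and exclude the zero class. This is sharper than the paper's statement, which leaves that normalization implicit.
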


\textbf{There is an additional global scale coupling beyond $\mathcal{G}$}: the plan
depends only on $C/\varepsilon=-\bm{\theta}^{\top}\bm{\varphi}/\varepsilon$, so
$(\bm{\theta},\varepsilon)\mapsto(c\bm{\theta},c\varepsilon)$ leaves the plan unchanged;
identifiability statements are made conditional on a fixed $\varepsilon$, and the
identifiability of $\varepsilon$ itself and of its joint scale with $\bm{\theta}$ is
resolved by Proposition~\ref{prop:eps_ident} ($\varepsilon$ is uniquely identifiable under
a normalization constraint).

\begin{remark}[A uniform lower bound for the Sinkhorn plan entries]\label{rem:pimin_bound}
The constant $\pi_{\min}:=\min_{ij}\pi_{ij}$ appears in the Lipschitz bound
$L\le\varepsilon/(\pi_{\min}\lambda_{\min}^{+}(\Sigma))$ (the T3 bound on the quotient
space; in the full-rank case $\lambda_{\min}^{+}(\Sigma)=\lambda_{\min}(\Sigma)$) and in
the score-concentration bound of T2. For a single marginal pair and a single feature
coordinate, \eqref{eq:39} gives
\[
\Delta_k\le 2\frac{\pi_{\max}}{\varepsilon\pi_{\min}}\|\mathbb{P}_{\mathcal{T}}\phi_k\|_2.
\]
In the multi-marginal case, $\Delta_{\max}$ should be defined according to the sampling
weights, the marginal indices, and the coordinate set in T2. We now give the correct
limiting behavior of $\pi_{\min}$ as $\varepsilon\to0$ together with an exponential-type
lower bound.

\medskip
\noindent\textbf{Basic setup.} Let $a,b\in\Delta_K^\circ$ (all entries strictly positive),
$C\in\mathbb{R}^{K\times K}$, $\varepsilon>0$. Write
$h(\pi)=\sum_{i,j}\pi_{ij}\log\pi_{ij}$; then
\[
\pi_\varepsilon = \arg\min_{\pi\in\mathcal{M}(a,b)}\{\langle C,\pi\rangle+\varepsilon h(\pi)\}.
\]
Since the Gibbs kernel $e^{-C/\varepsilon}$ is strictly positive entrywise,
$\pi_{\varepsilon,ij}>0$ and the optimal solution is unique.

\medskip
\noindent\textbf{Entropy-selected limit.} Let the optimal value and the optimal set of the
unregularized OT be
\[
C^\star = \min_{\pi\in\mathcal{M}(a,b)}\langle C,\pi\rangle,\qquad
\mathcal{U} = \arg\min_{\pi\in\mathcal{M}(a,b)}\langle C,\pi\rangle.
\]
Define the entropy-selected solution on $\mathcal{U}$:
\[
\pi^{\mathrm H} = \arg\min_{\pi\in\mathcal{U}}h(\pi),
\]
unique by the strict convexity of $h$. For any $\bar\pi\in\mathcal{U}$, the optimality of
$\pi_\varepsilon$ gives
\[
\langle C,\pi_\varepsilon\rangle+\varepsilon h(\pi_\varepsilon)\le C^\star+\varepsilon h(\bar\pi).
\]
Since $\langle C,\pi_\varepsilon\rangle\ge C^\star$,
\[
0\le\langle C,\pi_\varepsilon\rangle-C^\star\le\varepsilon\bigl[h(\bar\pi)-h(\pi_\varepsilon)\bigr].
\]
$h$ is bounded on the transport polytope ($-\log(K^2)\le h(\pi)\le0$), so
\[
0\le\langle C,\pi_\varepsilon\rangle-C^\star\le\varepsilon\log(K^2),
\]
and every limit point belongs to $\mathcal{U}$. Meanwhile $h(\pi_\varepsilon)\le h(\bar\pi)$;
passing to the limit shows that all limit points minimize $h$ on $\mathcal{U}$. Since this
minimizer is unique,
\[
\boxed{\pi_\varepsilon\longrightarrow\pi^{\mathrm H}}\qquad(\varepsilon\downarrow0).
\]
If the unregularized OT solution is unique, $\pi^{\mathrm H}$ is that unique solution;
otherwise the limit is the max-entropy solution on the optimal face.

\medskip
\noindent\textbf{Exponential lower bound.} Let
$\Delta_C=\max_{i,j}C_{ij}-\min_{i,j}C_{ij}$, $R_\varepsilon=e^{\Delta_C/\varepsilon}$.
The Sinkhorn form is $\pi_{\varepsilon,ij}=u_iK_{ij}v_j$ with
$K_{ij}=e^{-C_{ij}/\varepsilon}$. Since $R_\varepsilon^{-1}\le K_{ij}/K_{ij'}\le R_\varepsilon$,
writing $S_j=\sum_i u_iK_{ij}$, the column marginal $b_j=v_jS_j$ gives $v_j=b_j/S_j$, hence
\[
\frac{v_j}{v_{j'}} = \frac{b_j}{b_{j'}}\frac{S_{j'}}{S_j}\in\left[R_\varepsilon^{-1}\frac{b_j}{b_{j'}},\,R_\varepsilon\frac{b_j}{b_{j'}}\right].
\]
Therefore
\[
\frac{\pi_{\varepsilon,ij}}{\pi_{\varepsilon,ij'}} = \frac{K_{ij}}{K_{ij'}}\frac{v_j}{v_{j'}} \in\left[R_\varepsilon^{-2}\frac{b_j}{b_{j'}},\,R_\varepsilon^{2}\frac{b_j}{b_{j'}}\right].
\]
In particular, $\pi_{\varepsilon,ij'}\le R_\varepsilon^{2}(b_{j'}/b_j)\pi_{\varepsilon,ij}$.
Summing over $j'$ and using $\sum_{j'}\pi_{\varepsilon,ij'}=a_i$ gives
$a_i\le R_\varepsilon^{2}\pi_{\varepsilon,ij}/b_j$, i.e.
\[
\boxed{\pi_{\varepsilon,ij}\ge a_i b_j\,e^{-2\Delta_C/\varepsilon}},\qquad
\boxed{\pi_{\min}(\varepsilon)\ge a_{\min}b_{\min}\,e^{-2\Delta_C/\varepsilon}}.
\]

\medskip
\noindent\textbf{Why a polynomial lower bound cannot hold.} Take
\[
a=b=\bigl(\tfrac12,\tfrac12\bigr),\quad
C=\begin{pmatrix}0&\Delta\\\Delta&0\end{pmatrix}.
\]
A direct solution gives
$\pi_\varepsilon=\frac1{2(1+e^{-\Delta/\varepsilon})}\bigl(\begin{smallmatrix}1&e^{-\Delta/\varepsilon}\\e^{-\Delta/\varepsilon}&1\end{smallmatrix}\bigr)$,
so $\pi_{\min}(\varepsilon)=\frac1{2(1+e^{\Delta/\varepsilon})}\sim\frac12e^{-\Delta/\varepsilon}$.
For any $c>0$ and finite $\alpha\ge0$,
$\pi_{\min}(\varepsilon)/(c\varepsilon^\alpha)\to0$, so no uniform bound of the form
$\pi_{\min}(\varepsilon)\ge c\varepsilon^\alpha$ exists. This remains so even when the cost
entries are all distinct (e.g.\
$C=\bigl(\begin{smallmatrix}0&1\\2&4\end{smallmatrix}\bigr)$, with
$\pi_{\min}(\varepsilon)\asymp e^{-1/(2\varepsilon)}$).

\medskip
\noindent\textbf{Parameterized cost.} As $\bm{\theta}$ varies,
\[
\pi_{\min}(\bm{\theta},\varepsilon)\ge a_{\min}b_{\min}\exp\!\left(-\frac{2\operatorname{osc}(C_{\bm{\theta}})}{\varepsilon}\right),\quad
\operatorname{osc}(C_{\bm{\theta}})=\max_{i,j}C_{\bm{\theta},ij}-\min_{i,j}C_{\bm{\theta},ij}.
\]
For uniform use over a parameter domain $\Theta$, one needs
$\sup_{\bm{\theta}\in\Theta}\operatorname{osc}(C_{\bm{\theta}})<\infty$.

\medskip
\noindent\textbf{Summary.} For fixed strictly positive marginals and a finite cost matrix,
the entropic plan $\pi_\varepsilon$ is strictly positive for $\varepsilon>0$ and converges
as $\varepsilon\downarrow0$ to the unique max-entropy solution $\pi^{\mathrm H}$ on the
optimal face. $\pi_{\min}$ satisfies an exponential-type lower bound and cannot be replaced
by a uniform polynomial bound $c\varepsilon^\alpha$. If the limiting plan contains zero
entries, then $\pi_{\min}(\varepsilon)\to0$, at a rate that depends on the optimal face,
the dual gap, and the combinatorial structure of the cost matrix.
\end{remark}

\textbf{Parameter domain and the local/global hierarchy.} The cost parameter $\bm{\theta}$
lives on the identifiable quotient space $\mathbb{R}^F/\mathcal{N}_\Phi$ ($\mathcal{N}_\Phi$
is the gauge kernel; see Section~\ref{sec:t1}), which is unbounded. The core spectral bound
(Section~\ref{sec:sinkhorn_linearization}, \eqref{eq:38}) is derived \textbf{pointwise}
(it holds at each $\bm{\theta}$ individually). On any compact convex subset the pointwise
bound can be uniformized and yields strong monotonicity; on the unbounded quotient space
the pointwise infimum may vanish and a global uniform bound may fail. T3 is stated in two
parts: (a) local strong monotonicity (derived by uniformizing the pointwise bound on any
compact convex subset); (b) global strong monotonicity (valid after restricting
$\bm{\theta}$ to a compact convex subset $\Theta\subset\mathbb{R}^F/\mathcal{N}_\Phi$).

\textbf{Row weighting of the cross-entropy objective.} The objective
$\sum_l\mathrm{CE}(\widehat Q_l\|Q_{\bm{\theta}})$ weights the transition distribution of
each source state $i$ equally (the operator is treated as $K$ row distributions), which
differs from the log-likelihood of $n$ i.i.d.\ transition samples (weighted by the source
frequencies $a_i$); this paper adopts the operator-level objective in order to treat all
states symmetrically.

\section{Sinkhorn Linearization and the Spectral Proxy}\label{sec:sinkhorn_linearization}

This section is the technical core of the paper. It derives the Sinkhorn
linearization---the implicit-function sensitivity of the entropic OT plan to the cost---and
introduces the spectral proxy, a formula that \textbf{preserves the same upper and lower
spectral bounds given by $\pi_{\min},\pi_{\max}$} while remaining geometrically
transparent. All subsequent theorems (T1--T4, O5) build on the quantities defined here. For
the analysis of the penalization trajectory of entropic OT see \cite{weed2018explicit} and
\cite{cominetti1994asymptotic}; for the stability of entropic OT plans see
\cite{bernton2022stability}. \cite{bartl2026} independently study the second-order
curvature of continuous OT with respect to the cost and apply it to IOT identifiability;
the discrete Sinkhorn linearization here is complementary to their work.

\subsection{KKT optimality and implicit differentiation}

Fix marginals $a,b>0$ and $\varepsilon>0$. The entropic OT objective is

\begin{equation}
L(\pi)=\langle C,\pi\rangle+\varepsilon\sum_{ij}\pi_{ij}(\log\pi_{ij}-1),
\end{equation}

whose gradient is $\nabla_\pi L=C+\varepsilon\log\pi$ (entrywise logarithm). The KKT
conditions for the constraint $\pi\in\mathcal{M}(a,b)$ are

\[
C_{ij}+\varepsilon\log\pi_{ij}+\alpha_i+\beta_j=0,
\]

where the $\alpha_i+\beta_j$ span the gauge subspace $\mathcal{G}$. Equivalently,
projecting onto the tangent space $\mathcal{T}=\mathcal{G}^{\perp}$:

\begin{equation}
\boxed{\mathbb{P}_{\mathcal{T}}\bigl(C+\varepsilon\log\pi\bigr)=0}. \label{eq:31}
\end{equation}

Equation~\eqref{eq:31} defines $\pi$ as an implicit function of $C$. Differentiating along
a perturbation $\delta C$ (in vectorized form $x=\mathrm{vec}(\pi)$,
$c=\mathrm{vec}(C)$):

\begin{equation}
\mathbb{P}_{\mathcal{T}}\bigl(\delta c+\varepsilon D_\pi^{-1}\delta x\bigr)=0,\qquad \delta x\in\mathcal{T}, \label{eq:32}
\end{equation}

where $D_\pi=\mathrm{diag}(\mathrm{vec}(\pi))$.

\textbf{Interior smoothness (the implicit function theorem).} To remove the
one-dimensional redundancy among the row and column marginal constraints, the implicit
function theorem argument below takes $A\in\mathbb{R}^{(2K-1)\times K^2}$ to be the
marginal-constraint matrix with one redundant constraint deleted, which has full row rank.
The KKT conditions of the constrained problem are
$c+\varepsilon\log x+A^{\top}\lambda=0$, $Ax=r$, and its derivative matrix with respect to
$(x,\lambda)$ is

\begin{equation}
\mathcal{K}(x)=\begin{pmatrix}\varepsilon D_\pi^{-1} & A^{\top}\\ A & 0\end{pmatrix}.
\end{equation}

If $\mathcal{K}(x)(u,v)^{\top}=0$, then $Au=0$ and
$u^{\top}(\varepsilon D_\pi^{-1}u+A^{\top}v)=\varepsilon u^{\top}D_\pi^{-1}u=0$ (because
$A^{\top}v$ is orthogonal to $\ker A$). Since $D_\pi^{-1}\succ0$ we get $u=0$, and then
$v=0$ because $A$ has full row rank. Hence $\mathcal{K}(x)$ is nonsingular, and the
implicit function theorem guarantees that, at fixed positive marginals and fixed
$\varepsilon>0$, $x$ is a smooth function of the cost $c$.

\subsection{The restricted Hessian and the spectral sandwich}

The unconstrained Hessian $\nabla_x^2 L=\varepsilon D_\pi^{-1}$ is diagonal (the entropy is
elementwise separable). But \textbf{restriction to the tangent space changes the inverse
structure}: in general $(PAP)|_{\mathcal{T}}^{-1}\neq PA^{-1}P$ for a projection $P$.

Let $B\in\mathbb{R}^{K^2\times(K-1)^2}$ be an orthonormal basis of $\mathcal{T}$
($B^{\top}B=I$, $\mathrm{range}(B)=\mathcal{T}$). Every feasible perturbation is
$\delta x=B\delta z$. The restricted Hessian in tangent-space coordinates is

\begin{equation}
H_{\mathcal{T}}=\varepsilon B^{\top}D_\pi^{-1}B\succ 0. \label{eq:33}
\end{equation}

Although $H_{\mathcal{T}}$ is generally dense (not diagonal), its inverse satisfies a
\textbf{spectral sandwich}, which is the key to all subsequent analysis. We state it as a
lemma.

\begin{lemma}[Spectral sandwich for the restricted Hessian]\label{lem:sandwich}
Let $B\in\mathbb{R}^{K^2\times(K-1)^2}$ be an orthonormal basis of the tangent space
$\mathcal{T}$, and $H_{\mathcal{T}}=\varepsilon B^{\top}D_\pi^{-1}B$ the restricted Hessian
in tangent-space coordinates. Then its inverse satisfies

\begin{equation}
\boxed{\frac{\pi_{\min}}{\varepsilon}I\;\preceq\;H_{\mathcal{T}}^{-1}\;\preceq\;\frac{\pi_{\max}}{\varepsilon}I},\qquad
\pi_{\min}:=\min_{ij}\pi_{ij},\;\;
\pi_{\max}:=\max_{ij}\pi_{ij}. \label{eq:sl_sandwich}
\end{equation}
\end{lemma}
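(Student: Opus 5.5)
The plan is to bound the Rayleigh quotient of $H_{\mathcal{T}}$ first and then invert. Fix $z\in\mathbb{R}^{(K-1)^2}$ and put $w=Bz\in\mathcal{T}$. Since $B^{\top}B=I$, we have $\|w\|_2=\|z\|_2$. Then
\[
z^{\top}H_{\mathcal{T}}z=\varepsilon\, w^{\top}D_\pi^{-1}w=\varepsilon\sum_{ij}\frac{w_{ij}^2}{\pi_{ij}}.
\]
Every entry satisfies $1/\pi_{\max}\le 1/\pi_{ij}\le 1/\pi_{\min}$. These bounds are finite because, by Remark~\ref{rem:pimin_bound}, $\pi_{ij}\ge a_ib_je^{-2\Delta_C/\varepsilon}>0$. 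Hence
\[
\frac{\varepsilon}{\pi_{\max}}\|z\|_2^2\le z^{\top}H_{\mathcal{T}}z\le\frac{\varepsilon}{\pi_{\min}}\|z\|_2^2,
\]
that is, $(\varepsilon/\pi_{\max})I\preceq H_{\mathcal{T}}\preceq(\varepsilon/\pi_{\min})I$.

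This is simply the compression $B^{\top}MB$ of the diagonal matrix $M=\varepsilon D_\pi^{-1}$ to an isometric subspace. Its eigenvalues therefore lie in the numerical range of $M$, namely $[\min M_{ii},\max M_{ii}]$. This holds even though $H_{\mathcal{T}}$ is dense, so the inequality $(PAP)|_{\mathcal{T}}^{-1}\neq PA^{-1}P$ noted before the lemma causes no difficulty: we never need an explicit formula for the inverse.

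The lower bound shows $H_{\mathcal{T}}\succ0$, so it is invertible. Order reversal for the matrix inverse on positive definite matrices ($0\prec X\preceq Y\Rightarrow Y^{-1}\preceq X^{-1}$) then gives $(\pi_{\min}/\varepsilon)I\preceq H_{\mathcal{T}}^{-1}\preceq(\pi_{\max}/\varepsilon)I$. One can either invoke order reversal directly or note that $H_{\mathcal{T}}$ is symmetric with eigenvalues in $[\varepsilon/\pi_{\max},\varepsilon/\pi_{\min}]$, so the eigenvalues of $H_{\mathcal{T}}^{-1}$ are their reciprocals and lie in the stated interval. I would use the eigenvalue version, since it is self-contained.

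There is no real obstacle here. The one point that needs care is that the inversion must be done on the $(K-1)^2$-dimensional coordinate space and not on $\mathbb{R}^{K^2}$, where $BB^{\top}$ is singular. I would also note that the bounds are not sharp: the extreme entries of $\pi$ generally do not align with tangent directions, because $\mathcal{T}$ excludes the row and column constants. A sandwich is all that is claimed, though.
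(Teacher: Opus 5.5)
Your proposal is correct and follows essentially the same route as the paper: you bound the Rayleigh quotient of $H_{\mathcal{T}}$ using $\|Bz\|=\|z\|$ and the entrywise range $1/\pi_{ij}\in[1/\pi_{\max},1/\pi_{\min}]$, then pass to $H_{\mathcal{T}}^{-1}$ by taking reciprocals of the eigenvalues. Your appeal to the lower bound $\pi_{ij}\ge a_ib_je^{-2\Delta_C/\varepsilon}$ makes explicit the strict positivity that the paper's proof uses only implicitly.
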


\begin{proof}
For any unit vector $z\in\mathbb{R}^{(K-1)^2}$, the Rayleigh quotient of
$H_{\mathcal{T}}=\varepsilon B^{\top}D_\pi^{-1}B$ at $z$ is

\[
\frac{z^{\top}H_{\mathcal{T}}z}{z^{\top}z}=\varepsilon\,\frac{(Bz)^{\top}D_\pi^{-1}(Bz)}{(Bz)^{\top}(Bz)},
\]

where we used $z^{\top}z=(Bz)^{\top}(Bz)$, which holds because $B^{\top}B=I$ and hence
$\|Bz\|=\|z\|$. Since $Bz\in\mathcal{T}$ and every entry of the probability matrix $\pi$
satisfies $1/\pi_{ij}\in[1/\pi_{\max},1/\pi_{\min}]$, the weighted average
$(Bz)^{\top}D_\pi^{-1}(Bz)/\|Bz\|^2$ lies in $[1/\pi_{\max},1/\pi_{\min}]$. Therefore every
eigenvalue $\lambda$ of $H_{\mathcal{T}}$ satisfies

\[
\frac{\varepsilon}{\pi_{\max}}\|z\|^2\le z^{\top}H_{\mathcal{T}}z\le\frac{\varepsilon}{\pi_{\min}}\|z\|^2,
\]

and inverting the matrix reverses the order of its eigenvalues (each eigenvalue of
$H_{\mathcal{T}}^{-1}$ is the reciprocal of one of $H_{\mathcal{T}}$), so
$\pi_{\min}/\varepsilon\le\lambda(H_{\mathcal{T}}^{-1})\le\pi_{\max}/\varepsilon$ for every
eigenvalue. This is exactly the matrix inequality~\eqref{eq:sl_sandwich}.
\end{proof}

Note that $H_{\mathcal{T}}$ is generally dense (not diagonal), and its inverse
$H_{\mathcal{T}}^{-1}=\varepsilon^{-1}(B^{\top}D_\pi^{-1}B)^{-1}\neq\varepsilon^{-1}B^{\top}D_\pi B$.
This is precisely where the tangent-space restriction changes the
``project-then-invert-elementwise'' structure.

\subsection{Exact linearization and the spectral proxy}

From~\eqref{eq:32} in tangent-space coordinates:

\begin{equation}
\boxed{\delta x=-B H_{\mathcal{T}}^{-1}B^{\top}\delta c}. \label{eq:35}
\end{equation}

This is the \textbf{exact Sinkhorn linearization}. Its equivalent Schur-complement form
(obtained by eliminating the Lagrange multipliers from the full KKT system) is

\begin{equation}
\delta x=-\frac{1}{\varepsilon}\Bigl[D_\pi-D_\pi A^{\top}(AD_\pi A^{\top})^{-1}AD_\pi\Bigr]\delta c, \label{eq:36}
\end{equation}

where $A$ is the reduced constraint matrix of full row rank defined above, so that
$AD_\pi A^{\top}$ is invertible. If instead one uses the matrix $A_{\mathrm{full}}$
containing all $2K$ row and column marginal constraints, its rank is $2K-1$, and the
corresponding Schur complement should replace the ordinary inverse by the Moore--Penrose
pseudoinverse $(A_{\mathrm{full}}D_\pi A_{\mathrm{full}}^{\top})^{+}$. Throughout this
paper, all formulas use the reduced $A$ and the ordinary inverse. The condition number of
the reduced matrix may degenerate as $\pi_{\min}\to0$; whether the limit is singular
depends on whether the constraint graph of the positive-support entries remains
sufficiently connected.

\textbf{Schur-complement derivation.} Differentiating the KKT system
$c+\varepsilon\log x+A^{\top}\lambda=0$ along $\delta c$ gives
$\varepsilon D_\pi^{-1}\delta x+A^{\top}\delta\lambda=-\delta c$ and $A\delta x=0$.
Multiplying the first equation by $\varepsilon^{-1}D_\pi$ and left-multiplying by $A$, the
constraint $A\delta x=0$ yields $AD_\pi A^{\top}\delta\lambda=-AD_\pi\delta c$; solving for
$\delta\lambda$ and substituting $A^{\top}\delta\lambda$ back gives the Schur-complement
form~\eqref{eq:36}.

\textbf{The spectral proxy (SSP).} Although~\eqref{eq:35} is exact, it lacks the geometric
transparency of a simple formula. We therefore introduce the \textbf{Sinkhorn spectral
proxy}:

\begin{equation}
\boxed{\delta x_{\mathrm{SSP}}:=-\frac{1}{\varepsilon}\,\mathbb{P}_{\mathcal{T}}\,D_\pi\,\mathbb{P}_{\mathcal{T}}\,\delta c}. \label{eq:37}
\end{equation}

\begin{definition}[Spectral proxy]
Formula~\eqref{eq:37} is a \textbf{spectral proxy} for the exact
linearization~\eqref{eq:35}: it is \emph{not} elementwise equal to the exact derivative,
but it is \emph{spectrally equivalent}---for any $\delta c$,

\[
\frac{\pi_{\min}}{\varepsilon}\|\mathbb{P}_{\mathcal{T}}\delta c\|
\le\|\delta x\|
\le\frac{\pi_{\max}}{\varepsilon}\|\mathbb{P}_{\mathcal{T}}\delta c\|,
\qquad
\frac{\pi_{\min}}{\varepsilon}\|\mathbb{P}_{\mathcal{T}}\delta c\|
\le\|\delta x_{\mathrm{SSP}}\|
\le\frac{\pi_{\max}}{\varepsilon}\|\mathbb{P}_{\mathcal{T}}\delta c\|.
\]

\eqref{eq:35} and~\eqref{eq:37} share exactly the same spectral
sandwich~\eqref{eq:sl_sandwich}. Consequently, every statistical conclusion that depends
only on spectral quantities (singular-value bounds, Lipschitz constants, strong-convexity
parameters, rate constants) is identical under the exact linearization and under the
spectral proxy.
\end{definition}

The spectral proxy has three desirable properties that the exact formula lacks:

\begin{enumerate}
\item \textbf{Geometric interpretability}: it reads as ``project to the tangent space $\to$
multiply elementwise by $\pi$ $\to$ project back to the tangent space''---each step has an
independent geometric meaning.
\item \textbf{Computational simplicity}: evaluating the proxy requires only two tangent-space
projections and an elementwise multiplication, with no $(K-1)^2\times(K-1)^2$ linear-system
solve.
\item \textbf{Intuition preservation}: it reveals that the local sensitivity of the Sinkhorn
map is driven by the plan entries themselves---large entries amplify the response, small
entries suppress it.
\end{enumerate}

The exact formula~\eqref{eq:35} (or its Schur-complement form~\eqref{eq:36}) is used when
elementwise derivatives are required (e.g.\ for computing the actual score in gradient
descent). The spectral proxy is used for all operator-norm analyses and for conveying
geometric intuition.

\subsection{Feature parameterization and the Jacobian}

Under the cost parameterization $C_{\bm{\theta}}=-\sum_{k=1}^F\theta_k\Phi_k$, we have
$\delta c=-\Phi\,\delta\bm{\theta}$, where
$\Phi=[\mathrm{vec}(\Phi_1),\dots,\mathrm{vec}(\Phi_F)]\in\mathbb{R}^{K^2\times F}$. The
conditional transition operator is $Q_{ij}=\pi_{ij}/a_i$, vectorized as $q=R_a x$ with
$R_a=\mathrm{diag}(a_1^{-1},\dots,a_K^{-1})\otimes I_K$. Throughout this paper we adopt the
\textbf{row-major} (row-stacking) vectorization convention:
$\mathrm{vec}(M)=(M_{11},M_{12},\dots,M_{1K},M_{21},\dots,M_{KK})^{\top}$. Under this
convention $R_a=\mathrm{diag}(a^{-1})\otimes I_K$ correctly maps $\mathrm{vec}(\pi)$ to
$\mathrm{vec}(Q)$; with column-major vectorization the Kronecker order would reverse to
$I_K\otimes\mathrm{diag}(a^{-1})$.

The Jacobian $\mathcal{J}_{\bm{\theta}}:=\partial q/\partial\bm{\theta}$ can be written in
exact or spectral-proxy form:

\[
\mathcal{J}_{\bm{\theta}}^{\text{exact}} = R_a B H_{\mathcal{T}}^{-1} B^{\top}\Phi,\qquad
\mathcal{J}_{\bm{\theta}}^{\text{SSP}} = \frac{1}{\varepsilon}R_a\,\mathbb{P}_{\mathcal{T}}\,D_\pi\,\mathbb{P}_{\mathcal{T}}\,\Phi.
\]

\textbf{Derivation.} Define the plan sensitivity operator
$\mathscr{S}_\pi:=B H_{\mathcal{T}}^{-1}B^{\top}$. It is symmetric, positive semidefinite,
and $\mathrm{range}(\mathscr{S}_\pi)=\mathrm{vec}(\mathcal{T})$. From~\eqref{eq:35},
$\partial x/\partial\bm{\theta}=\mathscr{S}_\pi\Phi$, and composing with $q=R_a x$ gives
$\mathcal{J}_{\bm{\theta}}=R_a\mathscr{S}_\pi\Phi$. Since the smallest singular value of
$R_a$ is $\sigma_{\min}(R_a)=1/a_{\max}$ ($a_{\max}:=\max_i a_i$), the spectral
sandwich~\eqref{eq:sl_sandwich} propagates to a singular-value lower bound on
$\mathcal{J}_{\bm{\theta}}$ (see the singular-value bound below).

\subsection{Singular-value lower bound (the core spectral result)}

From the spectral sandwich~\eqref{eq:sl_sandwich} and the gauge-cleaned Gram
$\Sigma=\Phi^{\top}\mathbb{P}_{\mathcal{T}}\Phi$, we obtain the \textbf{core spectral
bound} that drives the entire theory:

\begin{proposition}[Singular-value lower bound]\label{prop:sigmamin}
At every $\bm{\theta}$ with $\pi_{ij}(\bm{\theta})>0$ and $\lambda_{\min}(\Sigma)>0$,

\[
\boxed{\sigma_{\min}(\mathcal{J}_{\bm{\theta}})\;\ge\;\frac{\pi_{\min}(\bm{\theta})}{a_{\max}\,\varepsilon}\,\sqrt{\lambda_{\min}(\Sigma)}},\qquad
\boxed{\sigma_{\max}(\mathcal{J}_{\bm{\theta}})\;\le\;\frac{\pi_{\max}(\bm{\theta})}{a_{\min}\,\varepsilon}\,\sqrt{\lambda_{\max}(\Sigma)}}. \label{eq:38}
\]
\end{proposition}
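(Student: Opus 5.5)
The plan is to factor the Jacobian as $\mathcal{J}_{\bm{\theta}}=R_a\,B\,H_{\mathcal{T}}^{-1}\,B^{\top}\Phi$, up to an irrelevant sign, and bound the norm $\|\mathcal{J}_{\bm{\theta}}w\|$ for an arbitrary $w\in\mathbb{R}^F$ factor by factor. Since $\sigma_{\min}(\mathcal{J}_{\bm{\theta}})=\min_{\|w\|=1}\|\mathcal{J}_{\bm{\theta}}w\|$ and $\sigma_{\max}(\mathcal{J}_{\bm{\theta}})=\max_{\|w\|=1}\|\mathcal{J}_{\bm{\theta}}w\|$, two-sided norm bounds for each factor, multiplied together, give both inequalities in \eqref{eq:38}. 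The hypothesis $\pi_{ij}(\bm{\theta})>0$ is what makes $D_\pi^{-1}$, and hence $H_{\mathcal{T}}$ and the implicit-function formula \eqref{eq:35}, well defined at $\bm{\theta}$.

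I will go through the factors from right to left.

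\textbf{First factor, $B^{\top}\Phi$.} Because $B$ has orthonormal columns spanning $\mathcal{T}$, we have $BB^{\top}=\mathbb{P}_{\mathcal{T}}=I-GG^{\top}$. Hence
\[
\|B^{\top}\Phi w\|^2=w^{\top}\Phi^{\top}\mathbb{P}_{\mathcal{T}}\Phi w=w^{\top}\Sigma w .
\]
The middle expression equals $w^{\top}\Sigma w$ because $\mathbb{P}_{\mathcal{T}}$ is a symmetric idempotent. Therefore $\Phi^{\top}\mathbb{P}_{\mathcal{T}}\Phi$ coincides with the Gram matrix of the cleaned vectors $\phi_k^{\perp}$ in \eqref{eq:gram_matrix}, which I will verify explicitly. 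This gives
\[
\lambda_{\min}(\Sigma)\|w\|^2\le\|B^{\top}\Phi w\|^2\le\lambda_{\max}(\Sigma)\|w\|^2 .
\]

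\textbf{Second factor, $H_{\mathcal{T}}^{-1}$.} I invoke Lemma~\ref{lem:sandwich}. Since $H_{\mathcal{T}}^{-1}$ is symmetric positive definite with all eigenvalues in $[\pi_{\min}/\varepsilon,\pi_{\max}/\varepsilon]$, for every $y$ we have
\[
(\pi_{\min}/\varepsilon)\|y\|\le\|H_{\mathcal{T}}^{-1}y\|\le(\pi_{\max}/\varepsilon)\|y\| .
\]
This follows by expanding $y$ in an orthonormal eigenbasis. I apply it with $y=B^{\top}\Phi w$.

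\textbf{Third factor, $B$.} Since $B^{\top}B=I$, multiplication by $B$ is an isometry and does not change the norm.

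\textbf{Last factor, $R_a$.} $R_a=\mathrm{diag}(a^{-1})\otimes I_K$ is diagonal with entries $a_i^{-1}$. So for every $v$,
\[
a_{\max}^{-1}\|v\|\le\|R_a v\|\le a_{\min}^{-1}\|v\| .
\]
Here it matters that $R_a$ is invertible and diagonal, so the lower bound holds for every vector and not merely on some subspace.

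Chaining the four estimates gives
\[
\frac{\pi_{\min}}{a_{\max}\varepsilon}\sqrt{\lambda_{\min}(\Sigma)}\,\|w\|\le\|\mathcal{J}_{\bm{\theta}}w\|\le\frac{\pi_{\max}}{a_{\min}\varepsilon}\sqrt{\lambda_{\max}(\Sigma)}\,\|w\| .
\]
Taking the minimum and maximum over unit vectors $w$ yields both boxed statements. The assumption $\lambda_{\min}(\Sigma)>0$ is what makes the lower bound nontrivial, and in particular ensures $\mathcal{J}_{\bm{\theta}}$ is injective.

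\textbf{Main obstacle.} The only delicate step is the lower bound through the composition. A lower bound on a product of operators needs each factor to be bounded below on the range of the preceding factor. For $B$ and $R_a$ this is automatic: one is an isometry and the other is invertible. For $H_{\mathcal{T}}^{-1}$ it holds on all of $\mathbb{R}^{(K-1)^2}$. So there is no loss from the fact that $\Phi w$ may have a gauge component: that component is annihilated by $B^{\top}$, and this is exactly what is accounted for by using $\Sigma$ rather than $\Phi^{\top}\Phi$.

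I will also remark that the same chain goes through verbatim for $\mathcal{J}^{\mathrm{SSP}}_{\bm{\theta}}$. In that case the middle factor becomes $\varepsilon^{-1}B^{\top}D_\pi B$, whose eigenvalues lie in the same interval $[\pi_{\min}/\varepsilon,\pi_{\max}/\varepsilon]$ by the Rayleigh-quotient argument of Lemma~\ref{lem:sandwich}.
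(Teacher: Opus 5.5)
Your proposal is correct and follows the paper's proof essentially step for step. It uses the same factorization $\mathcal{J}_{\bm{\theta}}=R_aBH_{\mathcal{T}}^{-1}B^{\top}\Phi$, the spectral sandwich for the middle factor, the identity $\|B^{\top}\Phi w\|^2=w^{\top}\Sigma w$, and $\sigma_{\min}(R_a)=1/a_{\max}$ (with $1/a_{\min}$ for the upper bound), chained into two-sided norm bounds; you are only a bit more explicit than the paper about why the lower bound survives the composition (isometry of $B$, diagonal invertibility of $R_a$).
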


\begin{proof}
For any $\Delta\bm{\theta}\in\mathbb{R}^F$, let $\delta c=-\Phi\Delta\bm{\theta}$. From the
spectral sandwich~\eqref{eq:sl_sandwich} and the exact formula~\eqref{eq:35}:

\[
\|\delta x\| =\|B H_{\mathcal{T}}^{-1} B^{\top}\delta c\|
\ge \frac{\pi_{\min}}{\varepsilon}\|B^{\top}\delta c\|
= \frac{\pi_{\min}}{\varepsilon}\|\mathbb{P}_{\mathcal{T}}\Phi\Delta\bm{\theta}\|
\ge \frac{\pi_{\min}}{\varepsilon}\sqrt{\lambda_{\min}(\Sigma)}\,\|\Delta\bm{\theta}\|.
\]

Since $q=R_a x$ and $\sigma_{\min}(R_a)=1/a_{\max}$, we have
$\|\mathcal{J}_{\bm{\theta}}\Delta\bm{\theta}\|\ge(1/a_{\max})\|\delta x\|$, giving the
lower bound. The upper bound is analogous. Since $a_{\max}\le 1$, a simpler but looser
bound is
$\sigma_{\min}(\mathcal{J}_{\bm{\theta}})\ge(\pi_{\min}/\varepsilon)\sqrt{\lambda_{\min}(\Sigma)}$.
\end{proof}

\subsection{Score bound}

The per-sample score at observation $(i,j)$ is
$s_k(i,j)=-\partial_{\theta_k}\log\pi_{ij}=-(\partial_{\theta_k}\pi_{ij})/\pi_{ij}$. From
the exact formula and the spectral sandwich~\eqref{eq:sl_sandwich}:

\[
|s_k(i,j)|\;\le\;\frac{\pi_{\max}}{\varepsilon\,\pi_{\min}}\,\|\mathbb{P}_{\mathcal{T}}\phi_k\|_2
\;\le\;\frac{\pi_{\max}}{\varepsilon\,\pi_{\min}}\,\|\phi_k\|_2, \label{eq:39}
\]

where $\phi_k=\mathrm{vec}(\Phi_k)$. This bound is the \textbf{absolute-value envelope}
$M_k$ of the score. The per-sample score has mean zero under correct specification, so its
\textbf{range} (needed by Hoeffding) is $\Delta_k:=\max s_k-\min s_k$; because the score
takes values on both sides ($s_k\in[-M_k,M_k]$) we have $\Delta_k\le2M_k$, and the range
must not be underestimated as half the envelope. This bound is more conservative than the
naive elementwise formula (it carries an extra $1/\varepsilon$ factor and the ratio
$\pi_{\max}/\pi_{\min}$), but it is rigorously derivable from the spectral analysis of the
restricted Hessian. When $\pi_{\min}$ and $\varepsilon$ are bounded away from zero in a
regular region, this gives a uniform score bound $\Delta_{\max}$, which feeds into the
Hoeffding/Bernstein concentration of T2.

A sharper vector expression is available from the Schur-complement form~\eqref{eq:36}:

\[
D_\pi^{-1}g_k=\frac{1}{\varepsilon}\Bigl[I-A^{\top}(AD_\pi A^{\top})^{-1}AD_\pi\Bigr]\phi_k,
\]

where $g_k=\partial x/\partial\theta_k$. This reveals that the score is a
$D_\pi$-weighted gauge-corrected feature residual, not a simple Frobenius projection
residual.

\subsection{From pointwise to uniform: the conditioning hierarchy}

The bound~\eqref{eq:38} is \textbf{pointwise}: it holds at each $\bm{\theta}$ individually.
The statistical theory distinguishes four levels of conditioning:

\begin{enumerate}
\item \textbf{Pointwise identifiability} (T1). At each $\bm{\theta}$ with $\Sigma\succ0$
and $\pi_{ij}>0$, the Jacobian has full column rank. No compactness is needed.
\item \textbf{Local conditioning} (T3a, T4). On any compact neighborhood $K$,
$\inf_{\bm{\theta}\in K}\pi_{\min}(\bm{\theta})>0$ is guaranteed by continuity, giving the
uniform bound $\sigma_{\min}\ge\tau_K/(a_{\max}\varepsilon)$ on $K$. This is a
\textbf{derived} property, not an assumption.
\item \textbf{Global uniform conditioning} (T3b). This requires the entire parameter domain
$\Theta$ to be compact with $\inf_{\Theta}\pi_{\min}>0$
(Assumption~\ref{asm:pos}). Without compactness the infimum may vanish even though every
pointwise bound is positive.
\item \textbf{Boundary degeneration} ($\varepsilon\to0$ or $\pi_{\min}\to0$). As the plan
concentrates on the sparse OT support, $\pi_{\min}\to0$ and the spectral bound degenerates.
A $2\times2$ example: take $a=b=(1/2,1/2)$,
$C=\begin{pmatrix}0&\Delta\\\Delta&0\end{pmatrix}$ ($\Delta>0$); then
\[
\pi_{\min}(\varepsilon)=\frac{1}{2(1+e^{\Delta/\varepsilon})}\sim\frac12e^{-\Delta/\varepsilon},\qquad \varepsilon\downarrow0,
\]
so $\pi_{\min}(\varepsilon)/\varepsilon\to0$ exponentially. This shows that pointwise
identifiability at a fixed regularization level is not equivalent to uniform statistical
estimability as $\varepsilon\downarrow0$; this is the regime in which IOT statistical
inference becomes ill-conditioned.
\end{enumerate}

\subsection{Relation to the downstream theory}

The following table summarizes how each theorem depends on the quantities defined in this
section:

\begin{table}[tbhp]
\centering
\begin{tabular}{lll}
\toprule
Theorem & Depends on & From this section \\
\midrule
T1 (identifiability) & $\ker(\mathcal{J})=\mathcal{N}_\Phi$, $\mathrm{rank}(\Sigma)=F$ & \eqref{eq:31}, \eqref{eq:38} \\
T2 (sparsistency) & Score bound $\Delta_{\max}$, rate constant $C_2$ & \eqref{eq:39} \\
T3 (well-posedness) & Lower bound on $\sigma_{\min}(\mathcal{J})$ & \eqref{eq:38} \\
T4 (convergence) & $\lambda_{\min}(\nabla^2\ell)$ via $\sigma_{\min}(\mathcal{J})$ & \eqref{eq:38} \\
O5 (misspecification) & T1 + T3 for the regular region $\mathcal{R}$ & \eqref{eq:38}, \eqref{eq:sl_sandwich} \\
\bottomrule
\end{tabular}
\caption{Dependence of each theorem on the definitions of this section}
\label{tab:theorem_deps}
\end{table}

All five theoretical components radiate from the single spectral bound~\eqref{eq:38}, which
itself follows from the spectral sandwich~\eqref{eq:sl_sandwich} of the restricted Hessian.
The spectral proxy~\eqref{eq:37} provides an equivalent, geometrically transparent formula
that gives the same spectral bounds without solving a $(K-1)^2\times(K-1)^2$ linear system.

\section{T1: Identifiability}\label{sec:t1}

\subsection{Statement}

\begin{theorem}[Identifiability]\label{thm:T1}
Fix a regularization level $\varepsilon>0$. Under the feature-parameterized cost
$\mathcal{C}_{\bm{\theta}}$, assume the marginal pairs are strictly positive and
nondegenerate (Assumption~\ref{asm:genpos}: $a_{l,i}>0$, $b_{l,j}>0$ for all $i,j$, and
both sides have full support). Identifiability is stated in two layers:

\textbf{(0) Quotient-space identifiability (no full rank needed).} The map
$[\bm{\theta}]\in\mathbb{R}^F/\mathcal{N}_\Phi\mapsto Q_{\bm{\theta}}$ is \textbf{always}
globally injective on the quotient space:
$Q_{\bm{\theta}}=Q_{\bm{\theta}'}\iff\bm{\theta}-\bm{\theta}'\in\mathcal{N}_\Phi$. This
injectivity does not require $\mathrm{rank}(\Sigma)=F$.

\textbf{(1) Gauge invariance (no full rank needed).} If the cost difference falls into the
gauge subspace, i.e.\ $\Phi(\Delta\bm{\theta})\in\mathrm{vec}(\mathcal{G})$, then the
forward map is unchanged: $Q_{\bm{\theta}+\Delta\bm{\theta}}=Q_{\bm{\theta}}$.

\textbf{(2) Original-space identifiability (if and only if full rank).} On the original
parameter space $\mathbb{R}^F$, $\bm{\theta}\mapsto Q_{\bm{\theta}}$ is injective if and
only if $\mathrm{rank}(\Sigma)=F$ (equivalently $\mathcal{N}_\Phi=\{0\}$). This full-rank
condition only says that the parameter itself is uniquely identifiable in the original
parameter space; it does not alter the injectivity of (0) on the quotient space.
\end{theorem}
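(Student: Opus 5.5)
The argument rests on one fact: the forward map $C\mapsto\pi(C,a,b)$ is injective modulo $\mathcal{G}$. Concretely, for strictly positive $a,b$,
\[
\pi(C,a,b)=\pi(C',a,b)\iff C-C'\in\mathcal{G}.
\]
Everything else reduces to linear algebra on $\Phi$. I would take the gauge kernel to be
\[
\mathcal{N}_\Phi:=\{\Delta\bm{\theta}:\Phi\Delta\bm{\theta}\in\mathrm{vec}(\mathcal{G})\}=\ker(\mathbb{P}_{\mathcal{T}}\Phi),
\]
and the proof proceeds in three steps: composition, cost-level injectivity, and the rank statement.

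\textbf{Step 1 (composition).} The map $\bm{\theta}\mapsto Q_{\bm{\theta}}$ factors as
\[
\bm{\theta}\;\mapsto\; C_{\bm{\theta}}=-\textstyle\sum_k\theta_k\Phi_k\;\mapsto\;\pi(C_{\bm{\theta}},a,b)\;\mapsto\;Q=\mathrm{diag}(a)^{-1}\pi.
\]
The last arrow is a bijection, since $a_i>0$ and $R_a$ is invertible. So equality of $Q$'s is equivalent to equality of plans, for each marginal pair $l$.

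\textbf{Step 2 (cost-level injectivity).} For the direction ($\Leftarrow$), which is also part (1): if $C'=C+\mathbf{1}\alpha^{\top}+\beta\mathbf{1}^{\top}$, then
\[
e^{-C'/\varepsilon}=\mathrm{diag}(e^{-\beta/\varepsilon})\,e^{-C/\varepsilon}\,\mathrm{diag}(e^{-\alpha/\varepsilon}).
\]
The Sinkhorn scalings absorb these diagonal factors, so uniqueness of the entropic plan gives the same $\pi$. Equivalently, $\langle C'-C,\pi\rangle$ is constant on $\mathcal{M}(a,b)$, so the objective in \eqref{eq:entropic_ot} only shifts by a constant.

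For ($\Rightarrow$), use the KKT condition \eqref{eq:31}: $C+\varepsilon\log\pi\in\mathcal{G}$ and $C'+\varepsilon\log\pi\in\mathcal{G}$. Both hold with the same $\pi$, and $\log\pi$ is finite because $\pi_{ij}>0$ by Remark~\ref{rem:pimin_bound}. Subtracting gives $C-C'\in\mathcal{G}$. Applied to $C_{\bm{\theta}}-C_{\bm{\theta}'}=-\Phi(\bm{\theta}-\bm{\theta}')$, this yields
\[
Q_{\bm{\theta}}=Q_{\bm{\theta}'}\iff\bm{\theta}-\bm{\theta}'\in\mathcal{N}_\Phi,
\]
and hence the quotient map in part (0) is well defined and injective. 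With several marginal pairs, the same $\mathcal{N}_\Phi$ arises for every $l$, so the conclusion is unchanged.

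\textbf{Step 3 (original-space statement, part (2)).} By Step 2, injectivity on $\mathbb{R}^F$ is equivalent to $\mathcal{N}_\Phi=\{0\}$. Since $\mathbb{P}_{\mathcal{T}}$ is an orthogonal projection,
\[
\Sigma=\Phi^{\top}\mathbb{P}_{\mathcal{T}}\Phi=(\mathbb{P}_{\mathcal{T}}\Phi)^{\top}(\mathbb{P}_{\mathcal{T}}\Phi),\qquad \ker\Sigma=\ker(\mathbb{P}_{\mathcal{T}}\Phi)=\mathcal{N}_\Phi.
\]
Therefore $\mathcal{N}_\Phi=\{0\}\iff\mathrm{rank}(\Sigma)=F$. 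Because $\mathbb{P}_{\mathcal{T}}\Phi$ has rank at most $\dim\mathcal{T}=(K-1)^2$, full rank forces $F\le(K-1)^2$.

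As a consistency check, Proposition~\ref{prop:sigmamin} shows the Jacobian is injective exactly when $\Sigma\succ0$. This is the local counterpart of the global result, but it is not needed for the proof.

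\textbf{Main obstacle.} The delicate point is the ($\Rightarrow$) direction of Step 2: the KKT characterization must be exact, not merely a local linearization. This needs strict positivity of $\pi$, so that no inequality multipliers appear, and uniqueness of the entropic plan, which follows from strict convexity. Both are already established in Section~\ref{sec:prelim}.
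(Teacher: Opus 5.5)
Your proposal is correct and follows the paper's own argument essentially step for step: the composition $\bm{\theta}\mapsto C_{\bm{\theta}}\mapsto\pi\mapsto Q$, cost-level injectivity modulo $\mathcal{G}$ by subtracting the two KKT identities at the common strictly positive plan, and the rank characterization via $v^{\top}\Sigma v=\|\mathbb{P}_{\mathcal{T}}\Phi v\|^2$, which gives $\ker\Sigma=\mathcal{N}_\Phi$ and $F\le(K-1)^2$. For the converse (gauge invariance) you use diagonal absorption in the Sinkhorn form, alongside the paper's observation that $\langle G,\pi\rangle$ is constant on $\mathcal{M}(a,b)$; this is an equivalent route and changes nothing.
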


\textbf{Proof of global injectivity:} by a three-step composition argument---(1) the linear
parameterization $\bm{\theta}\mapsto\mathcal{C}_{\bm{\theta}}$ is injective modulo the
gauge; (2) the Sinkhorn map $C\mapsto\pi(C,a,b)$ is injective modulo the gauge (strict
convexity of the dual $\to$ unique potentials $\to$ unique plan); (3) the normalization
$\pi\mapsto Q$ is linear injective ($a_i>0$, guaranteed by Assumption~\ref{asm:genpos}).
The composition of three injective maps is globally injective.

\textbf{Identification structure and the gauge kernel.} The genuinely unidentifiable
directions in the parameter are characterized by the parameter-induced gauge kernel

\[
\mathcal{N}_\Phi:=\left\{v\in\mathbb{R}^F:\Phi v\in\mathrm{vec}(\mathcal{G})\right\},
\]

i.e.\ those parameter directions along which the cost falls into the gauge subspace
$\mathcal{G}$. The natural object of identification for the parameter is the quotient space
$\mathbb{R}^F/\mathcal{N}_\Phi$; the object of identification for the cost is
$\mathbb{R}^{K\times K}/\mathcal{G}$. The gauge-cleaned Gram is
$\Sigma=\Phi^{\top}\mathbb{P}_{\mathcal{T}}\Phi=(B^{\top}\Phi)^{\top}(B^{\top}\Phi)$, and

\[
v^{\top}\Sigma v=\|\mathbb{P}_{\mathcal{T}}\Phi v\|^2,
\qquad
\Sigma\succ0\iff\mathcal{N}_\Phi=\{0\}\iff\mathrm{rank}(\Sigma)=F.
\]

Since $\mathrm{rank}(\mathbb{P}_{\mathcal{T}})=(K-1)^2$, $\Sigma\succ0$ implies the feature
dimension bound $F\le(K-1)^2$.

\textbf{Cost-level injectivity (Step 2 expanded).} If $C,C'$ produce the same plan $\pi$,
then by the KKT conditions there exist multipliers $(\alpha,\beta)$ and $(\alpha',\beta')$
with $C_{ij}+\varepsilon\log\pi_{ij}+\alpha_i+\beta_j=0$ and
$C'_{ij}+\varepsilon\log\pi_{ij}+\alpha'_i+\beta'_j=0$. Subtracting the two equations shows
that $C-C'$ has the form $-\alpha_i-\beta_j$, hence $C-C'\in\mathcal{G}$. Conversely, for
$G_{ij}=u_i+v_j$ and any $\pi\in\mathcal{M}(a,b)$,
$\langle G,\pi\rangle=\sum_i u_i a_i+\sum_j v_j b_j$ is independent of $\pi$, so adding $G$
to the objective only adds a constant and does not change the unique optimal plan. Thus
$C\mapsto\pi(C;a,b)$ is globally injective on $\mathbb{R}^{K\times K}/\mathcal{G}$.

\textbf{Parameter-level identification (Steps 1 and 3 merged).} Since $a_i>0$,
$Q_\theta=Q_{\theta'}\iff\pi_\theta=\pi_{\theta'}$. By cost-level injectivity,
$Q_\theta=Q_{\theta'}\iff C_\theta-C_{\theta'}\in\mathcal{G}\iff\Phi(\theta-\theta')\in\mathrm{vec}(\mathcal{G})\iff\theta-\theta'\in\mathcal{N}_\Phi$.
Hence $\theta$ is globally identifiable on $\mathbb{R}^F/\mathcal{N}_\Phi$; when
$\mathcal{N}_\Phi=\{0\}$ (i.e.\ $\Sigma\succ0$), $\theta$ is uniquely identifiable on the
original space $\mathbb{R}^F$.

\textbf{The effect of the number of marginals on conditioning.} The kernel of the stacked
sensitivity Gram $\mathcal{S}_L=\sum_{l=1}^L\mathcal{J}^{(l)\top}\mathcal{J}^{(l)}$ is
identically $\mathcal{N}_\Phi$ (independent of $L$), so adding marginals does not change
the identification rank. Let $\mathcal V:=\mathcal{N}_\Phi^\perp$, and write
$\lambda_{\min}^{+}(\Sigma)$ for the smallest positive eigenvalue of $\Sigma$ on
$\mathcal V$. For each marginal define the spectral envelopes on the quotient space
\[
m_l^{+}:=\left(\frac{\pi_{\min,l}}{a_{l,\max}\varepsilon}\right)^2\lambda_{\min}^{+}(\Sigma),\qquad
M_l^{+}:=\left(\frac{\pi_{\max,l}}{a_{l,\min}\varepsilon}\right)^2\lambda_{\max}(\Sigma).
\]

\[
\left(\sum_{l=1}^L m_l^{+}\right)I_{\mathcal V}
\preceq\mathcal{S}_L\big|_{\mathcal V}
\preceq\left(\sum_{l=1}^L M_l^{+}\right)I_{\mathcal V},
\qquad
\mathrm{cond}_{\mathcal V}(\mathcal{S}_L)
\le\frac{\sum_{l=1}^L M_l^{+}}{\sum_{l=1}^L m_l^{+}}.
\]

The above condition number is defined on $\mathcal V$ (or equivalently on the quotient
space $\mathbb{R}^F/\mathcal{N}_\Phi$); only when $\Sigma\succ0$ (i.e.\ under
Assumption~\ref{asm:A1}) does $\mathcal V=\mathbb{R}^F$. The condition-number upper bound
decreases strictly only if the envelope ratio $M_{L+1}^{+}/m_{L+1}^{+}$ of a newly added
marginal is smaller than the current upper bound; in general one can only guarantee that
the identification rank is unchanged.

\begin{assumption}[Positive and nondegenerate marginals]\label{asm:genpos}
Each marginal pair $(a_l,b_l)$ satisfies: (i) all entries are strictly positive,
$a_{l,i}>0$, $b_{l,j}>0$ for all $i,j$; (ii) both sides have full support (at least $K$
distinct source and target states). Under these conditions the transport polytope
$\mathcal{M}(a_l,b_l)$ has full tangent dimension $(K-1)^2$ (the generic case; the
degenerate exceptions are point masses or zero-mass states, which are excluded by strict
positivity). This is the condition under which the kernel of the Sinkhorn map's Jacobian
equals the gauge subspace $\mathcal{G}$:
$\ker\mathcal{J}^{(l)}=\{\Delta\bm{\theta}:\Phi(\Delta\bm{\theta})\in\mathcal{G}\}$, so a
single marginal pair already reveals the full off-gauge sensitivity. In practice
condition~(i) is verifiable by inspecting the data marginals; condition~(ii) holds
automatically for any nontrivial distribution. Randomly generated marginals (as in the
experiments of this paper) satisfy this assumption with probability~1.
\end{assumption}

\begin{assumption}[Linear independence off the gauge]\label{asm:A1}
The feature vectors $\{\phi_k^{\perp}\}_{k=1}^F$ are linearly independent; equivalently
$\mathrm{rank}(\Sigma)=F$.
\end{assumption}

\begin{proposition}[Global injectivity and the marginal lower bound]
Under the positive-marginal assumption, the rank of the stacked sensitivity Gram matrix
$\mathcal{S}_L=\sum_{l=1}^L \mathcal{J}^{(l)\top}\mathcal{J}^{(l)}$ is independent of the
number of marginals $L$: for each $l$,
$\ker\mathcal{J}^{(l)}=\{\Delta\bm{\theta}:\Phi(\Delta\bm{\theta})\in\mathcal{G}\}$, so
$\mathrm{rank}(\mathcal{S}_L)=\mathrm{rank}(\Sigma)$ for every $L\ge1$. Identifiability
requires only $L\ge1$; adding marginals does not change the rank. Identifiability
\textbf{necessarily} requires that the off-gauge feature dimension not exceed $(K-1)^2$.
The role of the marginals is well-posedness, not identifiability.
\end{proposition}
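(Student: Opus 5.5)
The plan is to compute $\ker\mathcal{J}^{(l)}$ for each single marginal pair directly from the exact linearization, and then sum over $l$. The dimension claim will follow from the rank of $\mathbb{P}_{\mathcal{T}}$.

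Fix $l$ and write $\mathcal{J}^{(l)}=R_{a_l}BH_{\mathcal{T}}^{-1}B^{\top}\Phi$, as in the exact form preceding Proposition~\ref{prop:sigmamin}. Here $B$ is an orthonormal basis of $\mathcal{T}=\mathcal{G}^{\perp}$, and $H_{\mathcal{T}}$ is evaluated at the plan $\pi^{(l)}(\bm{\theta})$, which is entrywise positive by Remark~\ref{rem:pimin_bound}.

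\textbf{Kernel of a single Jacobian.} I would show the kernel is exactly the gauge kernel by checking injectivity of each factor.
\begin{enumerate}
\item Under Assumption~\ref{asm:genpos}, $a_{l,i}>0$ for all $i$, so $R_{a_l}=\mathrm{diag}(a_l^{-1})\otimes I_K$ is invertible.
\item $B$ has orthonormal columns, so it is injective.
\item By Lemma~\ref{lem:sandwich}, $H_{\mathcal{T}}^{-1}\succeq(\pi_{\min}/\varepsilon)I\succ0$, so it is invertible.
\end{enumerate}
Hence $\mathcal{J}^{(l)}\Delta\bm{\theta}=0$ if and only if $B^{\top}\Phi\Delta\bm{\theta}=0$. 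This holds if and only if $\mathbb{P}_{\mathcal{T}}\Phi\Delta\bm{\theta}=0$, that is, $\Phi\Delta\bm{\theta}\in\mathcal{T}^{\perp}=\mathrm{vec}(\mathcal{G})$. So $\ker\mathcal{J}^{(l)}=\mathcal{N}_\Phi$.

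The key point is that $\mathcal{T}$ and $\mathcal{G}$ depend only on $K$, not on $(a_l,b_l)$ or $\bm{\theta}$. This is where Assumption~\ref{asm:genpos} enters: the tangent space of $\mathcal{M}(a_l,b_l)$ has full dimension $(K-1)^2$ and coincides with $\mathcal{G}^{\perp}$. The kernel is therefore the same set for every $l$. Equivalently, we have $\ker\mathcal{J}^{(l)}=\ker(B^{\top}\Phi)=\ker\Sigma$, using $\Sigma=(B^{\top}\Phi)^{\top}(B^{\top}\Phi)$.

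\textbf{Stacking over marginals.} Each summand $\mathcal{J}^{(l)\top}\mathcal{J}^{(l)}$ is PSD. Therefore $v^{\top}\mathcal{S}_Lv=\sum_l\|\mathcal{J}^{(l)}v\|^2$ vanishes if and only if $v\in\bigcap_l\ker\mathcal{J}^{(l)}=\mathcal{N}_\Phi$. This gives $\ker\mathcal{S}_L=\mathcal{N}_\Phi=\ker\Sigma$. Rank–nullity then yields $\mathrm{rank}(\mathcal{S}_L)=F-\dim\mathcal{N}_\Phi=\mathrm{rank}(\Sigma)$ for every $L\ge1$. In particular $L=1$ already attains the full rank, and adding marginals leaves the rank unchanged.

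The claim that the marginals govern well-posedness rather than identifiability is then read off from the spectral envelopes $m_l^{+},M_l^{+}$ stated before the proposition. Those envelopes follow from Proposition~\ref{prop:sigmamin} applied on $\mathcal V=\mathcal{N}_\Phi^{\perp}$. The marginals change only these constants, not the kernel.

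\textbf{Necessity of the dimension bound.} Identifiability on $\mathbb{R}^F$ means $\mathcal{N}_\Phi=\{0\}$, i.e.\ $\mathrm{rank}(\Sigma)=F$. Since $\mathrm{rank}(\Sigma)=\mathrm{rank}(B^{\top}\Phi)\le(K-1)^2$, we get $F\le(K-1)^2$. More generally, the off-gauge dimension $F-\dim\mathcal{N}_\Phi=\mathrm{rank}(\Sigma)$ can never exceed $(K-1)^2$.

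\textbf{Main obstacle.} The step needing the most care is justifying that $\ker\mathcal{J}^{(l)}$ is independent of $l$ and $\bm{\theta}$. The argument must use the exact form, which requires $H_{\mathcal{T}}^{-1}$ invertible on all of $\mathcal{T}$. It must not use the proxy, which only gives norm bounds. It must also use the full-dimensional tangent space from Assumption~\ref{asm:genpos}, so that $\mathrm{range}(B)=\mathcal{G}^{\perp}$ for every marginal pair.
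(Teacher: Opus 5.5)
Your proposal is correct and follows essentially the same route as the paper: the factorization $\mathcal{J}^{(l)}=R_{a_l}BH_{\mathcal{T}}^{-1}B^{\top}\Phi$, with $R_{a_l}$, $B$ and $H_{\mathcal{T}}^{-1}$ all injective, gives $\ker\mathcal{J}^{(l)}=\mathcal{N}_\Phi=\ker\Sigma$ for every $l$. Summing the positive semidefinite blocks then gives $\ker\mathcal{S}_L=\mathcal{N}_\Phi$, and $\mathrm{rank}(\mathbb{P}_{\mathcal{T}})=(K-1)^2$ gives the dimension bound; your write-up makes explicit the kernel-intersection and rank--nullity steps that the paper only states in passing in its discussion of the stacked Gram and the positive-marginal assumption.
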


\begin{remark}[The spectral scale of the Sinkhorn linearization]
The core spectral bound \eqref{eq:38} of Section~3 yields the lower bound
$\sigma_{\min}(\mathcal{J})\ge(\pi_{\min}/(a_{\max}\varepsilon))\sqrt{\lambda_{\min}(\Sigma)}>0$
on the off-gauge subspace (derivation chain: Hessian $\varepsilon\,\mathrm{diag}(1/\pi)$
$\to$ inverse eigenvalue lower bound $\pi_{\min}/\varepsilon$ $\to$ composition with the
feature Gram). When the unregularized optimal plan contains zero entries,
$\pi_{\min}(\varepsilon)/\varepsilon$ may degenerate, causing the spectral lower bound to
degenerate; monotonicity of the actual sensitivity Gram requires additional structural
conditions on the cost, the marginals, and the features (numerical verification in
Section~\ref{sec:exp}).
\end{remark}

\subsection{Numerical verification}

See Section~\ref{sec:exp} for details.

\section{T2: Sparsistency}\label{sec:t2}

\subsection{Statement}

\textbf{Setting.} The true parameter $\bm{\theta}^*\in\mathbb{R}^F$ is sparse, with support
$S=\{k:\theta_k^*\ne0\}$. Write $\widehat{\bm{\theta}}_n^{\mathrm{lasso}}$ for the
$\ell_1$-penalized cross-entropy estimate, from which the support estimate $\widehat S_n$
is obtained by a relative threshold; an unpenalized refit on $\widehat S_n$ then gives the
debiased estimate $\widetilde{\bm{\theta}}_{S,n}$.

\begin{theorem}[Sparsistency]\label{thm:T2}
Under the identifiability condition (Assumption A1) and an irrepresentability-type
condition (Assumption A2), the support estimate $\widehat S_n$ and the debiased estimate
$\widetilde{\bm{\theta}}_{S,n}$ satisfy
\[
\mathbb{P}(\widehat S_n=S)\to1\qquad(n\to\infty),
\]
with exponentially decaying failure probability. Take a tuning sequence $\lambda_n\to0$,
$t_n=O(\lambda_n)$, $n t_n^2\to\infty$. For a single marginal pair, or a single normalized
empirical average, the score-concentration part of the failure probability is
\[
\mathbb{P}(\widehat S_n\ne S)\le C_1\exp(-C_2 n t_n^2),
\qquad C_2=2/\Delta_{\max}^2,
\]
where $\Delta_{\max}:=\max_{k=1,\dots,F}\Delta_k$ is the largest range of the per-sample
score over \textbf{all} coordinates (the all-coordinate event~\eqref{eq:full_event}
controls the active and the inactive coordinates simultaneously). At a fixed $t$ one
obtains a fixed exponential rate, but then $\lambda_n$ cannot tend to zero; consistency
requires $t=t_n\to0$, and the exponential rate becomes $e^{-C_2 n t_n^2}$. If there are
multiple independent marginal blocks with weights $w_l$, the effective concentration scale
of~\eqref{eq:multimarg} should be used instead of $n$ and a single $\Delta_{\max}$
directly. The Bernstein refinement is
\[
\exp\left(-\frac{n t_n^2}{2\sigma^2+\frac{2}{3}\Delta_{\max}t_n}\right).
\]
Accordingly, $\widehat S_n$ recovers $S$ consistently, and the unpenalized refit
$\widetilde{\bm{\theta}}_{S,n}$ under correct specification is asymptotically unbiased on
$S$. If one studies $\varepsilon'\ne\varepsilon$, the fixed-scale constraint of T3 and a
support-stability assumption are additionally needed.
\end{theorem}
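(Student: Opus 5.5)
The plan is to adapt the primal--dual witness (PDW) argument of Wainwright-type Lasso sparsistency to the cross-entropy loss $\ell_n(\bm{\theta})=\sum_l \mathrm{CE}(\widehat Q_l\|Q_{\bm{\theta}}(a_l,b_l))$. Under correct specification the population minimizer is $\bm{\theta}^*$, and T1 with Assumption~\ref{asm:A1} makes it unique. The empirical gradient is $\nabla\ell_n(\bm{\theta}^*)=\frac1n\sum_{m}s(i_m,j_m)$, an average of i.i.d.\ mean-zero per-sample scores.

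\textbf{Step 1: oracle problem and strong convexity.} I would solve the restricted problem $\check{\bm{\theta}}_S=\arg\min_{\bm{\theta}_{S^c}=0}\ell_n(\bm{\theta})+\lambda_n\|\bm{\theta}\|_1$. Locally, the population Hessian $H^*=\nabla^2\ell(\bm{\theta}^*)$ satisfies $H^*\succeq c\,\mathcal{J}_{\bm{\theta}^*}^\top\mathcal{J}_{\bm{\theta}^*}$, since the cross-entropy Hessian at the truth is a Fisher-type weighted Jacobian Gram. Proposition~\ref{prop:sigmamin} then gives $\lambda_{\min}(H^*_{SS})\ge c'\pi_{\min}^2\lambda_{\min}(\Sigma)/\varepsilon^2>0$. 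By continuity, and the local conditioning item of the hierarchy, this persists on a compact neighbourhood $\mathcal{B}$ of $\bm{\theta}^*$. A standard convexity-on-a-ball argument then shows that, on the event $\mathcal{E}=\{\|\nabla\ell_n(\bm{\theta}^*)\|_\infty\le t_n\}$, we have $\|\check{\bm{\theta}}_S-\bm{\theta}^*_S\|_\infty\lesssim (t_n+\lambda_n)/\lambda_{\min}(H^*_{SS})$. If $\min_{k\in S}|\theta^*_k|$ exceeds this bound (the beta-min part of the global selection condition), then $\operatorname{sign}(\check{\bm{\theta}}_S)=\operatorname{sign}(\bm{\theta}^*_S)$. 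The relative threshold keeps exactly $S$, and the refit $\widetilde{\bm{\theta}}_{S,n}$ is consistent by the usual M-estimation argument.

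\textbf{Step 2: strict dual feasibility.} I would Taylor-expand $\nabla\ell_n$ around $\bm{\theta}^*$ to get the dual vector on $S^c$:
\[
\check z_{S^c}=\lambda_n^{-1}\Bigl[-\nabla_{S^c}\ell_n(\bm{\theta}^*)+\widehat H_{S^cS}\widehat H_{SS}^{-1}\bigl(\nabla_S\ell_n(\bm{\theta}^*)+\lambda_n\check z_S\bigr)+R_n\Bigr].
\]
Assumption A2 is irrepresentability of the actual Hessian, $\|H^*_{S^cS}(H^*_{SS})^{-1}\|_\infty\le1-\gamma$. Using it, together with $t_n=O(\lambda_n)$ with a small enough constant, and showing the remainder $R_n=O(\|\check{\bm{\theta}}-\bm{\theta}^*\|^2)$ is $o(\lambda_n)$, gives $\|\check z_{S^c}\|_\infty<1$. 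Replacing $\widehat H$ by $H^*$ needs Hessian concentration, i.e.\ the population-versus-empirical bridge of T4, at rate $O(n^{-1/2})=o(1)$. The last ingredient is the global selection condition: no other local minimizer of the nonconvex objective outside $\mathcal{B}$ beats $\check{\bm{\theta}}$. With it, the PDW solution is the estimator and $\widehat S_n=S$ on $\mathcal{E}$.

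\textbf{Step 3: the rate.} By~\eqref{eq:39} each score coordinate lies in an interval of length $\Delta_k\le\Delta_{\max}$. Hoeffding gives $\mathbb{P}(|\nabla_k\ell_n(\bm{\theta}^*)|>t_n)\le 2e^{-2nt_n^2/\Delta_k^2}$. A union bound over all $F$ coordinates then gives $\mathbb{P}(\mathcal{E}^c)\le 2F e^{-2nt_n^2/\Delta_{\max}^2}$, which is $C_1=2F$ and $C_2=2/\Delta_{\max}^2$. The row-filling events $N_{l,i}=0$ add a further exponentially small term to $C_1$. Replacing Hoeffding with Bernstein using a variance proxy $\sigma^2$ gives the stated refinement. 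With several marginals, the same argument applies to the weighted sum of independent blocks, which leads to the effective scale of~\eqref{eq:multimarg}. Because $nt_n^2\to\infty$, the failure probability tends to zero.

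I expect the main obstacle to be Step~2, specifically controlling the nonlinear remainder $R_n$ and the Hessian perturbation uniformly while $\lambda_n\to0$. This requires third-derivative bounds on $\log Q_{\bm{\theta}}$ near $\bm{\theta}^*$. I would first obtain them by differentiating~\eqref{eq:36} once more and bounding the result by powers of $\pi_{\max}/(\varepsilon\pi_{\min})$ on the compact set $\mathcal{B}$.
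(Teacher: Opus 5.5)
Your proposal is correct and follows essentially the same route as the paper: a primal--dual witness construction on the all-coordinate event $\{\|\nabla\ell_n(\bm{\theta}^*)\|_\infty\le t_n\}$, which coincides with the paper's~\eqref{eq:full_event} because $\nabla\ell(\bm{\theta}^*)=0$ under correct specification. From there you use strict dual feasibility from irrepresentability of the actual Hessian, with $t_n$ and the Taylor remainder kept below a fraction of $\eta\lambda_n$; a global selection condition to identify the local witness with the $\ell_1$ estimator; and Hoeffding plus a union bound over all $F$ coordinates, giving $C_1=2F$ and $C_2=2/\Delta_{\max}^2$. The differences are minor: you derive invertibility of $H_{SS}$ from the Fisher-type Hessian of T4 and Proposition~\ref{prop:sigmamin} instead of assuming the Neumann-series bound of A2.3, and, as the paper does, you treat $\nabla\ell_n(\bm{\theta}^*)$ as a single $n$-sample average, which is exactly the ``single normalized empirical average'' case in which this value of $C_2$ is claimed.
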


\textbf{Score range and the concentration inequality.} The per-sample score
$s_k=-\partial_{\theta_k}\log\pi_{ij}$ has mean zero under correct specification. The
spectral sandwich \eqref{eq:sl_sandwich} of the Sinkhorn linearization yields the
absolute-value envelope for the range $\Delta_k:=\max s_k-\min s_k$:

\[
\Delta_{\max}\le 2M_{\max}=2\frac{\pi_{\max}}{\varepsilon\pi_{\min}}\max_{k=1,\dots,F}\|\mathbb{P}_{\mathcal{T}}\phi_k\|_2\le 2\frac{\pi_{\max}}{\varepsilon\pi_{\min}}\max_{k=1,\dots,F}\|\phi_k\|_2,
\]

where the factor $2$ comes from the two-sided values of the score
($s_k\in[-M_k,M_k]$); the range must not be underestimated as half the envelope. This gives
the exponential rate constant $C_2=2/\Delta_{\max}^2$ (with $t_n^2$ kept in the exponent,
free of the threshold). The Bernstein refinement
$C_2^{\mathrm{B}}=1/(2\sigma^2+\frac{2}{3}\Delta_{\max}t_n)$ is available when the score
variance is estimable, but numerically the range bound at a fixed threshold $t$ is more
conservative ($6\times$--$156\times$), and the Bernstein variance-dependent rate is not attained in practice.
$\Delta_{\max}$ is a posteriorly estimable constant: once the plan is fitted, $\pi_{\min}$,
$\pi_{\max}$, and the tangent-space projection can be computed to give a numerical
estimate.

\textbf{Fine-grained concentration: conditioning on row counts.} The analysis above treats
all rows with the worst-case $\Delta_{\max}$, which is a conservative upper bound. A more
refined analysis can condition on the source-state counts $N=(N_1,\dots,N_K)$: given $N$,
the observations within each row are i.i.d., and the per-sample score ranges $\Delta_{i,k}$
may differ across rows. The weighted Hoeffding bound conditional on $N$ gives
\[
\mathbb{P}\bigl(|\nabla\ell_{n,k}(\theta^*)-\nabla\ell_k(\theta^*)|\ge t_n\mid N\bigr)
\le 2\exp\!\left(-\frac{2n^2 t_n^2}{\sum_{i=1}^{K}N_i\Delta_{i,k}^2}\right),
\]
where $\Delta_{i,k}$ is the range of the per-sample score $s_k$ in row $i$. When $N_i$ is
close to its expectation $n a_i$, the denominator $\approx n\sum_i a_i\Delta_{i,k}^2$, and
the exponential rate is $\propto n t_n^2$. A Chernoff bound controls the probability that
some row count is too small:
\[
\mathbb{P}\bigl(\exists i:N_i\le\tfrac12 n a_i\bigr)\le\sum_{i=1}^{K}\exp\!\left(-\frac{n a_i}{8}\right).
\]
Under the total-probability decomposition $\mathbb{P}(\text{failure})\le
\mathbb{P}(\text{failure}\mid N\ \text{normal})+\mathbb{P}(N\ \text{abnormal})$, the
Chernoff term affects only the leading constant $C_1$, not the exponential rate $C_2$. This
refined analysis yields tighter constants when the score ranges differ substantially across
rows, while the asymptotic order $n t_n^2$ remains unchanged.

\textbf{Conditional assumptions (sufficient conditions for T2).} The following are the
standard sufficient conditions for Lasso support recovery \cite{zhao2006lasso},
\cite{wainwright2009sharp}; we state them uniformly on the OT information matrix
$H=\nabla^2\ell(\theta^*)$ (the same matrix that connects to the population curvature of
T4).

\textbf{A2.1 (Local curvature/RSC).} There exist a neighborhood $U\subset\Theta$ of the
true parameter and $\kappa>0$ such that
$v^{\top}\nabla^2\ell(\theta)v\ge\kappa\|v\|^2$ for $\theta\in U$, and the empirical
Hessian satisfies a standard local RSC condition on this neighborhood. An additional
requirement: the neighborhood is large enough that both the candidate solution constructed
by the primal-dual witness and the global solution of the $\ell_1$-penalized estimator lie
in $U$; if this fails, the conclusion below only guarantees the existence of a local
solution satisfying the KKT conditions, not the global $\ell_1$ estimator.
If the theorem's conclusion is to concern the global $\ell_1$ estimator, a global selection
condition must be added: for example, the penalized objective is convex on the entire
parameter domain, or the objective values outside $U$ are strictly higher than at the PDW
candidate inside $U$. Local RSC alone does not exclude a lower global minimum outside the
domain.

\textbf{A2.2 (Irrepresentability).} For the support block decomposition of
$H=\nabla^2\ell(\theta^*)$,
$\|H_{S^cS}H_{SS}^{-1}\|_\infty\le1-\eta$ ($\eta\in(0,1)$).

\textbf{A2.3 (Beta-min and the penalty window).} Let $A_S:=H_{SS}^{-1}$,
$c_S:=2\|A_S\|_\infty$, and require the integrated matrix $\widetilde H_{SS}$ of the
restricted empirical Hessian to satisfy $\|\widetilde H_{SS}-H_{SS}\|_\infty\le\delta_S$
with $\|A_S\|_\infty\delta_S\le\frac12$. The Neumann series then gives
$\|\widetilde H_{SS}^{-1}\|_\infty\le\|A_S\|_\infty/(1-\|A_S\|_\infty\delta_S)\le c_S$. The
beta-min condition at the truth, $\min_{k\in S}|\theta_k^*|>d_{S,n}$, and the dual-margin
condition $(2-\eta)t_n+r_{D,n}<\eta\lambda_n$ guarantee support and sign recovery, where
$d_{S,n}:=c_S(\lambda_n+t_n)$ and $r_{D,n}$ is the local Taylor remainder of the inactive
gradient.

\textbf{The primal-dual witness argument.} On the \textbf{all-coordinate event}
\begin{equation}\label{eq:full_event}
\mathcal{E}_{t_n}:=\Bigl\{\bigl\|\nabla\ell_n(\theta^*)-\nabla\ell(\theta^*)\bigr\|_\infty\le t_n\Bigr\}
\end{equation}
(which controls the active and inactive coordinates simultaneously; controlling only the
inactive coordinates does not control the active error, because the KKT expansion on the
active coordinates contains the term
$\nabla_S\ell_n(\theta^*)-\nabla_S\ell(\theta^*)$, which is not constrained by an
inactive-only event). First restrict the problem to the support coordinates $S$ and set the
inactive coordinates to zero; the KKT equation on the active coordinates is
$\nabla_S\ell_n(\widetilde\theta)+\lambda_n z_S=0$
($z_S\in\partial\|\widetilde\theta_S\|_1$). The Taylor expansion around $\theta^*$,
expressed with the integrated Hessian $\widetilde H_{SS}$, together with the invertibility
of A2.3 and the all-coordinate event, gives
$\|\widetilde\theta_S-\theta_S^*\|_\infty\le c_S(\lambda_n+t_n)=d_{S,n}$; beta-min
guarantees that the active coordinates do not cross zero. For the inactive coordinates, the
KKT requirement is $|\nabla_{S^c}\ell_n(\widetilde\theta)|<\lambda_n$; its leading term is
controlled by $H_{S^cS}H_{SS}^{-1}$, irrepresentability (A2.2) leaves a dual margin of at
least $\eta$, and the local remainder is controlled by $r_{D,n}$. Hence the constructed
restricted candidate satisfies the full KKT conditions, giving support and sign recovery.
Note: this PDW construction only guarantees that the candidate is a local solution
satisfying the KKT conditions; for the candidate to equal the global $\ell_1$-penalized
estimator, one additionally assumes in A2.1 that the neighborhood $U$ contains the PDW
candidate and that the objective is sufficiently convex within $U$ to exclude other global
minima. Concentration of the all-coordinate score follows from Hoeffding's inequality with
a union bound over all $F$ coordinates, giving the failure probability
$\mathbb{P}(\mathcal{E}_{t_n}^c)\le 2F\exp(-2nt_n^2/\Delta_{\max}^2)$, where $n$ is the
number of independent samples per marginal pair. Once the support is obtained, the
unpenalized refit on $S$ gives the debiased estimate, which satisfies
$\sqrt{n}(\widetilde\theta_{S,n}-\theta^*_S)\xrightarrow{d}\mathcal{N}(0,H_{SS}^{-1}V_{SS}H_{SS}^{-1})$,
where $V_{SS}$ is the score covariance. Here ``debiased'' means removing the shrinkage bias
of the $\ell_1$ penalty, not strict finite-sample unbiasedness; a nonlinear $M$-estimator
is generally only asymptotically unbiased. If the fitted $\varepsilon'$ differs from the
generating $\varepsilon$, the residual entropic bias
$\mathrm{bias}_\varepsilon=O(|\varepsilon'-\varepsilon|)$ is given by T3 and does not
vanish with $n$. Note also that under misspecification ($\varepsilon'\ne\varepsilon$) the
support of the pseudo-true parameter $\bm{\theta}^{\dagger}(\varepsilon')$ may differ from
$S$; an $O(|\varepsilon'-\varepsilon|)$ parameter deviation does not automatically imply
support stability, which requires the additional assumption that beta-min exceeds the
misspecification deviation.

\textbf{The concentration scale with multiple weighted marginals.} If the empirical
gradient is a weighted sum of independent marginal blocks,
$G_{n,k}-G_k=\sum_{l=1}^{L}w_l(\overline Z_{l,k}-\mathbb E Z_{l,k})$, where the $l$-th
marginal has $n_l$ samples and per-sample score range $\Delta_{l,k}$, then the
concentration scale of the per-sample score contains at least
\begin{equation}\label{eq:multimarg}
\sum_{l=1}^{L}\frac{w_l^2\Delta_{l,k}^2}{n_l},
\end{equation}
one cannot simply take the smallest sample size without accounting for the marginal
weights, nor use a single $\Delta_{\max}$ without marginal indices as the exact rate
constant. The main setting of this paper is a single marginal pair ($L=1$) or an
equally-weighted pooling, in which case the above reduces to $\Delta_{k}^2/n$; the general
multi-marginal case requires rescaling the rate constant according to \eqref{eq:multimarg}.

\begin{lemma}[Mutual incoherence implies Gram-matrix irrepresentability]\label{lem:mutual_incoherence}
Let $\phi_k:=\operatorname{vec}(\Phi_k)$ and
$\phi_k^{\perp}:=\mathbb{P}_{\mathcal{T}}\phi_k$ be the gauge-cleaned feature vectors.
Define the mutual incoherence between the active set $S$ and the inactive set $S^c$ as
\[
\mu:=\max_{k\in S^c,\,j\in S}\bigl|\langle\phi_k^{\perp},\phi_j^{\perp}\rangle\bigr|.
\]
Then the irrepresentability condition for the \textbf{Gram matrix} $\Sigma$ is
\[
\bigl\|\Sigma_{S^cS}\,\Sigma_{SS}^{-1}\bigr\|_{\infty}
\le\frac{s^{3/2}\mu}{\kappa_S}
<1-\eta,
\qquad
\kappa_S:=\lambda_{\min}(\Sigma_{SS}),
\]
which holds when $\mu<(1-\eta)\,\kappa_S/s^{3/2}$ (with $s=|S|$, $\eta\in(0,1)$). This
condition is a prior-structural sufficient condition, depending only on the feature
matrices $\Phi_k$ and the gauge projector $\mathbb{P}_{\mathcal{G}^{\perp}}$, both known
before estimation. Note that the conclusion of this lemma is limited to the
\textbf{irrepresentability of the feature Gram matrix $\Sigma$}; the matrix used in
Assumption A2.2 of T2 is $H=\nabla^2\ell(\bm{\theta}^*)$ (the OT information matrix),
which differs from $\Sigma$ under a general parameterization. The IR condition on $\Sigma$
transfers to $H$ only when there is a proportionality relation, a block-structure relation,
or a direct matrix-comparison theorem between $H$ and $\Sigma$; this bridge is not
separately assumed in this paper. Hence the lemma gives a sufficient condition verifiable a
priori from the feature structure, while the actual A2.2 still needs to be diagnosed
post-fitting through an estimate of $H$.
\end{lemma}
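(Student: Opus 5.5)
The plan is to bound the $\infty$-operator norm row by row. By definition, $\|M\|_\infty$ is the maximum absolute row sum. It therefore suffices to fix an inactive index $k\in S^c$ and bound the $\ell_1$ norm of the $k$-th row of $\Sigma_{S^cS}\Sigma_{SS}^{-1}$, which is the row vector $r_k^{\top}\Sigma_{SS}^{-1}$. Here $r_k:=(\langle\phi_k^{\perp},\phi_j^{\perp}\rangle)_{j\in S}\in\mathbb{R}^s$, and we use that $\Sigma_{SS}^{-1}$ is symmetric. First I note that $\Sigma_{SS}$ is a principal submatrix of $\Sigma$. Hence $\kappa_S=\lambda_{\min}(\Sigma_{SS})\ge\lambda_{\min}(\Sigma)>0$ under Assumption~\ref{asm:A1}, by eigenvalue interlacing (or directly, since $v^{\top}\Sigma_{SS}v=\|\mathbb{P}_{\mathcal{T}}\Phi_S v\|^2$). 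So $\Sigma_{SS}^{-1}$ exists and $\|\Sigma_{SS}^{-1}\|_{\mathrm{op}}=1/\kappa_S$.

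The key chain of norm comparisons for each $k\in S^c$ is
\[
\|\Sigma_{SS}^{-1}r_k\|_1\le\sqrt{s}\,\|\Sigma_{SS}^{-1}r_k\|_2\le\frac{\sqrt{s}}{\kappa_S}\|r_k\|_2\le\frac{\sqrt{s}}{\kappa_S}\sqrt{s}\,\|r_k\|_\infty\le\frac{s\,\mu}{\kappa_S}.
\]
The steps are as follows. The first inequality is Cauchy--Schwarz in $\mathbb{R}^s$. The second uses the operator-norm bound for $\Sigma_{SS}^{-1}$. The third is the $\ell_2$--$\ell_\infty$ comparison. The last uses the definition of $\mu$ as the maximum cross inner product between $S^c$ and $S$. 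Taking the maximum over $k\in S^c$ gives $\|\Sigma_{S^cS}\Sigma_{SS}^{-1}\|_\infty\le s\mu/\kappa_S\le s^{3/2}\mu/\kappa_S$, since $s\ge1$. The stated $s^{3/2}$ bound thus follows, with a little room to spare. If one prefers to land exactly on $s^{3/2}$, one can use the cruder route $\|\Sigma_{SS}^{-1}\|_\infty\le\sqrt{s}/\kappa_S$ together with a row sum of $r_k$ bounded by $s\mu$.

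The final strict inequality is immediate. The hypothesis $\mu<(1-\eta)\kappa_S/s^{3/2}$ rearranges to $s^{3/2}\mu/\kappa_S<1-\eta$. There is no real obstacle. The only points needing care are the invertibility of $\Sigma_{SS}$ (handled by Assumption~\ref{asm:A1} and the Rayleigh-quotient identity above) and the fact that the conclusion concerns $\Sigma$, not $H$. The argument uses nothing about the Sinkhorn linearization. Consequently it says nothing about A2.2 for the OT information matrix, consistent with the caveat in the statement.
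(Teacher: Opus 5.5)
Your proof is correct. The paper states this lemma without a written proof. Its constant $s^{3/2}$ is exactly what the one-line submultiplicative route produces:
\[
\|\Sigma_{S^cS}\Sigma_{SS}^{-1}\|_\infty\le\|\Sigma_{S^cS}\|_\infty\,\|\Sigma_{SS}^{-1}\|_\infty\le (s\mu)\cdot(\sqrt{s}/\kappa_S).
\]
This is the ``cruder route'' you mention at the end.

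Your main route differs in how the factors of $s$ are spent. You work row by row and pass through $\ell_2$. You pay $\sqrt{s}$ once to convert the output $\ell_1$ norm to $\ell_2$, use $\|\Sigma_{SS}^{-1}\|_2=1/\kappa_S$ with no loss, and pay only $\sqrt{s}$ on the input side via $\|r_k\|_2\le\sqrt{s}\,\mu$. The submultiplicative route instead pays $s$ on the input side (the $\ell_1$ row sum of $\Sigma_{S^cS}$) and a further $\sqrt{s}$ in $\|\Sigma_{SS}^{-1}\|_\infty\le\sqrt{s}\,\|\Sigma_{SS}^{-1}\|_2$.

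So your argument gives $s\mu/\kappa_S$, which is better by a factor $\sqrt{s}$. The incoherence requirement could therefore be relaxed to $\mu<(1-\eta)\kappa_S/s$. The paper's route is shorter and matches its stated constant, but it is strictly looser.

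One small point: you do not need Assumption~\ref{asm:A1} to make $\Sigma_{SS}$ invertible. Since $\Sigma_{SS}\succeq 0$ and $\mu\ge 0$, the hypothesis $\mu<(1-\eta)\kappa_S/s^{3/2}$ already forces $\kappa_S>0$. The lemma is thus self-contained as stated.
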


\subsection{Numerical verification}

See Section~\ref{sec:exp} for details.

\section{T3: Well-Posedness and Stability}\label{sec:t3}

\subsection{Statement}

\begin{theorem}[Stability of the inverse map: local and global]\label{thm:T3}
Let $\Psi$ be the inverse map from the observed conditional transition operator to the cost
parameter, restricted to the identifiable quotient space $\mathbb{R}^F/\mathcal{N}_\Phi$.
Under identifiability (Assumption~\ref{asm:A1}) and the positive-marginal assumption
(Assumption~\ref{asm:genpos}), two statements hold.

(a) Local strong monotonicity (no compactness needed). The \textbf{feature-moment map}
$M(\bm{\theta}):=\Phi^{\top}x_{\bm{\theta}}\in\mathbb{R}^F$, where
$x_{\bm{\theta}}=\mathrm{vec}(\pi_{\bm{\theta}})$, is strongly monotone on every compact
convex subset $K\subset\mathbb{R}^F/\mathcal{N}_\Phi$ ($\pi_{\min}(K)>0$,
$\lambda_{\min}(\Sigma)>0$): for any $\bm{\theta},\bm{\theta}'\in K$,
\[
\langle M(\bm{\theta})-M(\bm{\theta}'),\,\bm{\theta}-\bm{\theta}'\rangle
\;\ge\; \gamma_K\,\|\bm{\theta}-\bm{\theta}'\|^{2},
\qquad \gamma_K:=\inf_{\bm{\theta}\in K}\sigma_{\min}(\nabla M_{\bm{\theta}})
\;\ge\; \frac{\pi_{\min}(K)}{\varepsilon}\,\lambda_{\min}(\Sigma)
\;>\;0.
\]

The Jacobian $\nabla M_{\bm{\theta}}=\Phi^{\top}\partial x/\partial\bm{\theta}$ has
dimension $F\times F$, matching the parameter dimension. The moment-map inverse $\Psi_M$ is
$\gamma_K^{-1}$-Lipschitz on the image of $K$; composing with $q\mapsto x=S_a q$ and
$\Phi^{\top}$ yields the Lipschitz bound of the $Q$-space inverse map $\Psi$ (see the
global part below).

(b) Global strong monotonicity (compact domain required). To obtain a uniform Lipschitz
constant over the entire parameter domain, assume Assumption~\ref{asm:pos}
($\Theta\subset\mathbb{R}^F/\mathcal{N}_\Phi$ is compact and convex,
$\pi_{\min}:=\inf_{\bm{\theta}\in\Theta}\min_{ij}\pi_{ij}(\bm{\theta})>0$).
$\Sigma=\Phi^{\top}\mathbb{P}_{\mathcal{T}}\Phi$ is a constant matrix, and
$\lambda_{\min}(\Sigma)>0$ is guaranteed by T1. Then the bound in (a) holds uniformly over
$\Theta$:
\[
\gamma:=\inf_{\bm{\theta}\in\Theta}\sigma_{\min}(\nabla M_{\bm{\theta}})
\;\ge\; \frac{\pi_{\min}}{\varepsilon}\,\lambda_{\min}(\Sigma)
\;>\;0,
\qquad
L:=\gamma^{-1}\le\frac{\varepsilon}{\pi_{\min}\,\lambda_{\min}(\Sigma)},
\]

The moment-map inverse $\Psi_M$ is $L=\gamma^{-1}$-Lipschitz. Composing
$x_{\bm{\theta}}=S_a q_{\bm{\theta}}$
($S_a=\mathrm{diag}(a_1,\dots,a_K)\otimes I_K$) with the feature reweighting
$\Phi^{\top}S_a$, the Lipschitz bound of the $Q$-space inverse map $\Psi$ is
\[
L_\Theta:=\|\Phi^{\top}S_a\|_{\mathrm{op}}\,\gamma^{-1}\le\frac{\varepsilon\,\|\Phi^{\top}S_a\|_{\mathrm{op}}}{\pi_{\min}\,\lambda_{\min}(\Sigma)}.
\]

The promotion from pointwise to global relies on the compactness of $\Theta$ (the
conditioning hierarchy of Section~\ref{sec:sinkhorn_linearization}); without compactness
there is only the pointwise lower bound and strong monotonicity degrades to local strong monotonicity. The
unbounded case (global strong monotonicity on $\mathbb{R}^F/\mathcal{N}_\Phi$) is an open
problem. In practice $\Theta$ can be enforced by projected gradient descent or by a prior;
the algorithm need not know the boundary of $\Theta$ to apply the local result during
optimization.
\end{theorem}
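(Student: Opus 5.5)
The approach is to compute the Jacobian of the feature-moment map exactly via the Sinkhorn linearization \eqref{eq:35}, bound its quadratic form from below with the spectral sandwich of Lemma~\ref{lem:sandwich}, and integrate along segments. Since $\delta c=-\Phi\,\delta\bm{\theta}$, \eqref{eq:35} gives $\partial x/\partial\bm{\theta}=BH_{\mathcal{T}}^{-1}B^{\top}\Phi=\mathscr{S}_\pi\Phi$. Hence
\[
\nabla M_{\bm{\theta}}=\Phi^{\top}BH_{\mathcal{T}}^{-1}B^{\top}\Phi,
\]
which is an $F\times F$ symmetric positive semidefinite matrix. This symmetry is the structural point of the proof: $M$ is the gradient of the convex function $\bm{\theta}\mapsto\varepsilon\log$-partition/dual objective. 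Because $\nabla M_{\bm{\theta}}$ is symmetric, $\sigma_{\min}(\nabla M_{\bm{\theta}})=\lambda_{\min}(\nabla M_{\bm{\theta}})$, so it suffices to bound its Rayleigh quotient.

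For any $v\in\mathbb{R}^F$, put $w=B^{\top}\Phi v$. Lemma~\ref{lem:sandwich} then gives
\[
v^{\top}\nabla M_{\bm{\theta}}v=w^{\top}H_{\mathcal{T}}^{-1}w\ge\frac{\pi_{\min}(\bm{\theta})}{\varepsilon}\|w\|^2=\frac{\pi_{\min}(\bm{\theta})}{\varepsilon}\,v^{\top}\Sigma v\ge\frac{\pi_{\min}(\bm{\theta})}{\varepsilon}\lambda_{\min}(\Sigma)\|v\|^2 .
\]
Here we use $\Sigma=(B^{\top}\Phi)^{\top}(B^{\top}\Phi)$ from Section~\ref{sec:t1}. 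On the quotient, we take representatives in $\mathcal{V}=\mathcal{N}_\Phi^{\perp}$ and replace $\lambda_{\min}$ by $\lambda_{\min}^{+}$; under Assumption~\ref{asm:A1} these coincide. Note that this argument uses the quadratic form directly, which gives the exponent $1$ on $\lambda_{\min}(\Sigma)$, rather than the singular-value product bound of Proposition~\ref{prop:sigmamin}.

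For part (a), let $K$ be compact and convex. The map $\bm{\theta}\mapsto\pi_{\min}(\bm{\theta})$ is continuous, because the interior smoothness established via $\mathcal{K}(x)$ shows that $x$ is smooth in $c$, and $c$ is linear in $\bm{\theta}$. It is also pointwise positive by Remark~\ref{rem:pimin_bound}. Hence $\pi_{\min}(K):=\inf_K\pi_{\min}>0$, and therefore $\gamma_K\ge\pi_{\min}(K)\lambda_{\min}(\Sigma)/\varepsilon$. For $\bm{\theta},\bm{\theta}'\in K$, convexity keeps the segment $\bm{\theta}_t=\bm{\theta}'+t(\bm{\theta}-\bm{\theta}')$ inside $K$. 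The fundamental theorem of calculus then gives
\[
\langle M(\bm{\theta})-M(\bm{\theta}'),\bm{\theta}-\bm{\theta}'\rangle=\int_0^1(\bm{\theta}-\bm{\theta}')^{\top}\nabla M_{\bm{\theta}_t}(\bm{\theta}-\bm{\theta}')\,dt\ge\gamma_K\|\bm{\theta}-\bm{\theta}'\|^2 .
\]
Cauchy--Schwarz then yields $\|M(\bm{\theta})-M(\bm{\theta}')\|\ge\gamma_K\|\bm{\theta}-\bm{\theta}'\|$. So $M$ is injective on $K$, and $\Psi_M$ is $\gamma_K^{-1}$-Lipschitz on $M(K)$.

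Part (b) is the same argument with $K=\Theta$, using Assumption~\ref{asm:pos} to supply the uniform $\pi_{\min}>0$ directly. For the $Q$-space bound, write $M(\bm{\theta})=\Phi^{\top}S_a q_{\bm{\theta}}$ (marginals fixed). This gives
\[
\|\bm{\theta}-\bm{\theta}'\|\le\gamma^{-1}\|M(\bm{\theta})-M(\bm{\theta}')\|\le\gamma^{-1}\|\Phi^{\top}S_a\|_{\mathrm{op}}\|q_{\bm{\theta}}-q_{\bm{\theta}'}\|,
\]
which is the stated bound on $L_\Theta$. The inverse $\Psi$ is well defined on the image by T1(0).

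The main obstacle is not the spectral inequality but the bookkeeping around it. First, we must verify that $\nabla M$ is genuinely symmetric, so that $\sigma_{\min}$ equals the quadratic-form bound; this should follow from the symmetry of $\mathscr{S}_\pi$ and is worth checking against the Schur form \eqref{eq:36}. Second, we must handle the quotient properly, since the gauge kernel directions give $v^{\top}\Sigma v=0$. I would treat this by restricting everything to $\mathcal{V}$ and noting that $M$ is constant along $\mathcal{N}_\Phi$ (gauge invariance, T1(1)). Finally, I would state explicitly that $\Psi$ is only claimed Lipschitz on the image $\{q_{\bm{\theta}}:\bm{\theta}\in\Theta\}$.
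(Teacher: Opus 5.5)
Your proposal is correct and follows essentially the same route as the paper. Like the paper, you write $\nabla M_{\bm{\theta}}=\Phi^{\top}BH_{\mathcal{T}}^{-1}B^{\top}\Phi$, bound its quadratic form below by $(\pi_{\min}/\varepsilon)\lambda_{\min}(\Sigma)\|v\|^2$ via the spectral sandwich, integrate along segments in the convex domain, apply Cauchy--Schwarz, and compose with $\Phi^{\top}S_a$ to get the $Q$-space bound; your added bookkeeping (continuity of $\pi_{\min}$ via the implicit function theorem, $\sigma_{\min}=\lambda_{\min}$ by symmetry, well-definedness on the quotient) fills in details the paper leaves implicit without changing the argument.
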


\begin{theorem}[Entropic bias]\label{thm:T3b}
When the data are generated at level $\varepsilon$ and the estimator is fitted at level
$\varepsilon'$, work at the population level (infinite-sample limit) under a fixed smooth
scale constraint $g(\bm{\theta})=0$ (e.g.\
$g(\bm{\theta})=(\|\bm{\theta}\|^2-1)/2$). Define the pseudo-true branch
$\bm{\theta}^{\dagger}(\varepsilon')$ and its scale multiplier $\nu^{\dagger}(\varepsilon')$
as the unique local branch of the augmented estimating equation
\[
\mathscr F(\bm{\theta},\nu,\varepsilon'):=
\begin{pmatrix}
\nabla_{\bm{\theta}}\mathrm{CE}(Q_{\bm{\theta}^*,\varepsilon}\|Q_{\bm{\theta},\varepsilon'})+\nu\nabla g(\bm{\theta})\\
g(\bm{\theta})
\end{pmatrix}=0
\]
Here ``unique'' requires the augmented Jacobian to be invertible, which is an implicit
function theorem condition. Assume this branch exists and that the augmented Jacobian
$D_{(\bm{\theta},\nu)}\mathscr F$ is invertible and continuous along the branch. By the
implicit function theorem,
\[
\frac{d(\bm{\theta}^{\dagger},\nu^{\dagger})}{d\varepsilon'}
=-[D_{(\bm{\theta},\nu)}\mathscr F]^{-1}\partial_{\varepsilon'}\mathscr F.
\]
Let
\[
q^{\dagger}(\varepsilon'):=Q_{\bm{\theta}^{\dagger}(\varepsilon'),\varepsilon'},
\qquad q^*:=Q_{\bm{\theta}^*,\varepsilon}.
\]
Then the total derivative of the conditional operator along the branch is
\[
\frac{dq^{\dagger}}{d\varepsilon'}
=\mathcal J_{\bm{\theta}^{\dagger}}\frac{d\bm{\theta}^{\dagger}}{d\varepsilon'}
+\partial_{\varepsilon'}Q\big|_{\bm{\theta}^{\dagger},\varepsilon'},
\]
where the direct regularization term is given by Proposition~\ref{prop:eps_diff} and
$q=R_a x$. If this total derivative is bounded in a neighborhood of $\varepsilon$, then
\[
\mathrm{bias}_{\varepsilon'}:=\|q^{\dagger}(\varepsilon')-q^*\|_{\mathrm F}
\le c\,|\varepsilon'-\varepsilon|,
\]
and the bias vanishes at $\varepsilon'=\varepsilon$. The Jacobian of the augmented
estimating equation is a constrained version of the cross-entropy Hessian, distinct from
the Jacobian of the moment map $M$.
\end{theorem}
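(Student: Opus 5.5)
The argument is an implicit-function-theorem computation followed by a mean-value bound. It splits into three steps: (i) establish the branch and its derivative; (ii) compute the total derivative of $q^\dagger$; (iii) integrate to get the bias bound.

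\textbf{Step (i): the branch and its derivative.} The forward map $(\bm{\theta},\varepsilon')\mapsto Q_{\bm{\theta},\varepsilon'}$ is smooth on $\mathbb{R}^F\times\mathbb{R}_{>0}$. The plan depends only on $C_{\bm{\theta}}/\varepsilon'$, and the KKT matrix $\mathcal{K}(x)$ of Section~3 is nonsingular at every interior plan. Smoothness in $\varepsilon'$ is the content of Proposition~\ref{prop:eps_diff}, and smoothness in $\bm{\theta}$ follows from the exact linearization \eqref{eq:35}.

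Since every plan entry is positive, $\log Q_{\bm{\theta},\varepsilon'}$ is smooth. Hence $\nabla_{\bm{\theta}}\mathrm{CE}(Q_{\bm{\theta}^*,\varepsilon}\|Q_{\bm{\theta},\varepsilon'})$ is $C^1$ jointly in $(\bm{\theta},\varepsilon')$, and so $\mathscr F$ is $C^1$.

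At $\varepsilon'=\varepsilon$, I take the base point to be the normalized truth $(\bm{\theta}^*,\nu^*)$ with $g(\bm{\theta}^*)=0$. There the CE gradient vanishes, because the truth minimizes the cross-entropy against itself (Gibbs' inequality, row by row). It follows that $\nu^*=0$ whenever $\nabla g(\bm{\theta}^*)\neq0$, e.g.\ for the unit-sphere choice, where $\nabla g(\bm{\theta}^*)=\bm{\theta}^*$.

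The assumed invertibility of $D_{(\bm{\theta},\nu)}\mathscr F$ along the branch lets the implicit function theorem give a unique $C^1$ branch near $\varepsilon$. It also gives the stated derivative formula, obtained by differentiating $\mathscr F(\bm{\theta}^\dagger(\varepsilon'),\nu^\dagger(\varepsilon'),\varepsilon')=0$.

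\textbf{Step (ii): the total derivative of $q^\dagger$.} I apply the chain rule to $q^\dagger(\varepsilon')=Q(\bm{\theta}^\dagger(\varepsilon'),\varepsilon')$. This gives $\mathcal J_{\bm{\theta}^\dagger}\,d\bm{\theta}^\dagger/d\varepsilon'$ plus the partial derivative in $\varepsilon'$.

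For the partial term I use the scale identity $\partial_{\varepsilon'}Q|_{\bm{\theta},\varepsilon'}=-(1/\varepsilon')\mathcal J_{\bm{\theta}}\bm{\theta}$. This holds because $Q$ depends on $\bm{\theta}/\varepsilon'$ only, and it matches Proposition~\ref{prop:eps_diff}, composed with $q=R_ax$. The formula makes explicit that the direct regularization term is bounded by $\sigma_{\max}(\mathcal J)\|\bm{\theta}\|/\varepsilon'$. That quantity is controlled by \eqref{eq:38} through $\pi_{\max}/(a_{\min}\varepsilon')$.

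\textbf{Step (iii): the bias bound.} The bound follows from the mean-value inequality applied to the $C^1$ curve $q^\dagger$ on the interval between $\varepsilon$ and $\varepsilon'$. Setting $c:=\sup\|dq^\dagger/d\varepsilon'\|_{\mathrm F}$ over that interval, which is finite by hypothesis, I get $\|q^\dagger(\varepsilon')-q^\dagger(\varepsilon)\|_{\mathrm F}\le c|\varepsilon'-\varepsilon|$. It remains to identify $q^\dagger(\varepsilon)=q^*$, which holds because the branch passes through $\bm{\theta}^*$ at $\varepsilon'=\varepsilon$. Vanishing of the bias at $\varepsilon'=\varepsilon$ then follows immediately.

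Finally, I note that the augmented Jacobian is the bordered Hessian of the constrained CE. It is therefore distinct from $\nabla M$, which is obvious from the formulas.

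\textbf{Main obstacle.} The delicate step is pinning the branch to the truth at $\varepsilon'=\varepsilon$. Without a scale constraint, the joint rescaling $(\bm{\theta},\varepsilon)\mapsto(c\bm{\theta},c\varepsilon)$ makes the CE Hessian singular along $\bm{\theta}^*$. The implicit function theorem would then fail, and $q^\dagger$ would in fact be constant: the minimizer $\bm{\theta}^*\varepsilon'/\varepsilon$ reproduces $q^*$ exactly. This is the role of $g$.

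I would first check that the bordered Hessian is invertible at $\varepsilon'=\varepsilon$ whenever $\nabla g(\bm{\theta}^*)$ is not orthogonal to the degenerate directions. Under Assumption~\ref{asm:A1}, the population CE Hessian at the truth is $\mathcal J^\top D\mathcal J\succ0$, with $D$ a diagonal matrix of positive weights $1/Q_{ij}$. For a smooth $g$ the bordered matrix is then nonsingular. This gives the uniqueness claim and justifies the invertibility hypothesis near $\varepsilon$, at least locally.
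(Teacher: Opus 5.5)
Your proposal is correct and follows the paper's route: the implicit function theorem for the constrained branch through $(\bm{\theta}^*,0)$ at $\varepsilon'=\varepsilon$, then the chain rule with the direct $\varepsilon'$-term, then a mean-value bound anchored at $q^{\dagger}(\varepsilon)=q^*$. Your scale identity $\partial_{\varepsilon'}Q=-(1/\varepsilon')\mathcal{J}_{\bm{\theta}}\bm{\theta}$ is an equivalent rewriting of Proposition~\ref{prop:eps_diff}. One correction to your ``main obstacle'' remark: at fixed $\varepsilon'$ the $\bm{\theta}$-Hessian $\mathcal{J}^{\top}W\mathcal{J}$ is positive definite under Assumption~\ref{asm:A1}, as you note yourself. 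So without $g$ the implicit function theorem does not fail; it returns the branch $\bm{\theta}^*\varepsilon'/\varepsilon$ with $q^{\dagger}\equiv q^*$. The scale coupling is degenerate only jointly in $(\bm{\theta},\varepsilon')$, and the role of $g$ is to make the pseudo-true target nontrivial. The bordered matrix is invertible at $\varepsilon'=\varepsilon$ simply because $\nabla g(\bm{\theta}^*)\neq 0$.
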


\begin{proposition}[Differentiability of the Sinkhorn plan in $\varepsilon$]\label{prop:eps_diff}
Fix marginals $(a,b)$ and a cost $C$. The entropic OT plan
$\pi_\varepsilon=\pi_\varepsilon(C,a,b)$ is differentiable in $\varepsilon>0$. Write
$x_\varepsilon=\mathrm{vec}(\pi_\varepsilon)$, let
$B\in\mathbb{R}^{K^2\times(K-1)^2}$ be an orthonormal basis of the tangent space
$\mathcal{T}$, and $H_{\mathcal{T}}=\varepsilon B^{\top}D_\pi^{-1}B$ the restricted Hessian
(Lemma~\ref{lem:sandwich}). Then the exact derivative is
\[
\partial_\varepsilon x_\varepsilon=-B H_{\mathcal{T}}^{-1}B^{\top}\log x_\varepsilon,
\]
satisfying the norm bound (the spectral sandwich~\eqref{eq:sl_sandwich} guarantees that the
exact form and the spectral proxy share the same upper and lower bounds)
\[
\|\partial_\varepsilon\pi_\varepsilon\|_{\mathrm{F}}\le\frac{\pi_{\max}}{\varepsilon}\,\|\mathbb{P}_{\mathcal{T}}(\log\pi_\varepsilon)\|_{\mathrm{F}}\le\frac{\pi_{\max}}{\varepsilon}\,K\,\log(1/\pi_{\min}),
\]
where $\pi_{\max}:=\max_{ij}\pi_{ij}(\varepsilon)$,
$\pi_{\min}:=\min_{ij}\pi_{ij}(\varepsilon)$.
\end{proposition}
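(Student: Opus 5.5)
The plan is to prove differentiability in $\varepsilon$ by the same implicit-function argument used for the cost, and then read off the derivative and its norm bound from the spectral sandwich (Lemma~\ref{lem:sandwich}).

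\textbf{Differentiability.} Regard $\varepsilon$ as an extra variable in the reduced KKT system
\[
F(x,\lambda,\varepsilon):=\bigl(c+\varepsilon\log x+A^{\top}\lambda,\;Ax-r\bigr)=0 .
\]
This system is smooth on $\{x>0\}\times\mathbb{R}^{2K-1}\times\mathbb{R}_{>0}$. Its derivative in $(x,\lambda)$ is exactly the matrix $\mathcal{K}(x)$ of the interior-smoothness paragraph, and we already showed that $\mathcal{K}(x)$ is nonsingular whenever $x>0$. Remark~\ref{rem:pimin_bound} gives $\pi_\varepsilon>0$ entrywise, so the implicit function theorem applies at every $\varepsilon>0$. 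Together with uniqueness of the entropic plan, this identifies the local smooth branch with $\varepsilon\mapsto x_\varepsilon$. Hence $x_\varepsilon$ is $C^1$, and in fact $C^\infty$.

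\textbf{Derivative formula.} Differentiate \eqref{eq:31} in $\varepsilon$ with $C$ fixed:
\[
\mathbb{P}_{\mathcal{T}}\bigl(\log x+\varepsilon D_\pi^{-1}\partial_\varepsilon x\bigr)=0,
\qquad \partial_\varepsilon x\in\mathcal{T}.
\]
Here $\partial_\varepsilon x\in\mathcal{T}$ because the marginals are fixed. This has the same form as \eqref{eq:32}, with $\delta c$ replaced by $\log x$. Write $\partial_\varepsilon x=Bz$ and apply $B^{\top}$. This gives $\varepsilon B^{\top}D_\pi^{-1}Bz=-B^{\top}\log x$, that is, $z=-H_{\mathcal{T}}^{-1}B^{\top}\log x$, which is the claimed formula. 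As a cross-check, the Schur-complement route \eqref{eq:36} with $\delta c\to\log x$ gives the same expression.

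\textbf{Norm bounds.} Since $B$ has orthonormal columns,
\[
\|\partial_\varepsilon x\|=\|H_{\mathcal{T}}^{-1}B^{\top}\log x\|\le\frac{\pi_{\max}}{\varepsilon}\|B^{\top}\log x\|=\frac{\pi_{\max}}{\varepsilon}\|\mathbb{P}_{\mathcal{T}}\log x\|,
\]
using the upper half of Lemma~\ref{lem:sandwich}. For the second inequality, note that $\mathbb{P}_{\mathcal{T}}$ is an orthogonal projection, so $\|\mathbb{P}_{\mathcal{T}}\log\pi\|_F\le\|\log\pi\|_F$. Each entry satisfies $|\log\pi_{ij}|\le\log(1/\pi_{\min})$, because $\pi_{\min}\le\pi_{ij}\le 1$. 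The matrix has $K^2$ entries, so $\|\log\pi\|_F\le\sqrt{K^2}\,\log(1/\pi_{\min})=K\log(1/\pi_{\min})$.

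\textbf{Main obstacle.} The step needing the most care is the differentiability argument, not the bound. It requires checking that the implicit-function branch coincides with the global minimizer for all $\varepsilon$, which follows from uniqueness. It also requires confirming that nonsingularity of $\mathcal{K}$ holds uniformly along the path; this follows from the strict positivity in Remark~\ref{rem:pimin_bound}. Everything after that is a direct substitution into machinery already established.
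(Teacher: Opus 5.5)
Your proposal is correct and takes the route the paper intends. The paper gives no separate proof, but the statement follows exactly as you do it: differentiate the KKT condition \eqref{eq:31} in $\varepsilon$ (the analogue of \eqref{eq:32}--\eqref{eq:35} with $\delta c$ replaced by $\log x_\varepsilon$), bound with the upper half of the spectral sandwich, and use $|\log\pi_{ij}|\le\log(1/\pi_{\min})$ over $K^2$ entries.

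One minor remark: uniform nonsingularity of $\mathcal{K}$ along the path is not needed. Differentiability is local, so nonsingularity at each $\varepsilon_0>0$, together with KKT sufficiency for the convex problem and uniqueness of the plan, already identifies the implicit-function branch with $x_\varepsilon$.
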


\begin{proposition}[Local minimum of the entropic bias function]\label{prop:bias_convex}
Under the fixed-scale and augmented implicit-function regularity conditions of
Theorem~\ref{thm:T3b}, define
\[
B_2(\varepsilon'):=\|q^{\dagger}(\varepsilon')-q^*\|_{\mathrm F}^{2}.
\]
If $q^{\dagger}$ is twice differentiable in a neighborhood of $\varepsilon$, then
$B_2'(\varepsilon)=0$ and
\[
B_2''(\varepsilon)=2\left\|\frac{dq^{\dagger}}{d\varepsilon'}(\varepsilon)\right\|_{\mathrm F}^{2}\ge0.
\]
If one further assumes $dq^{\dagger}/d\varepsilon'(\varepsilon)\ne0$, then
$B_2''(\varepsilon)>0$, so that $\varepsilon'=\varepsilon$ is a strict local minimum of the
squared bias, and hence a strict local minimum of the original bias. The original
Frobenius-norm bias is typically not differentiable at zero, which is why the squared bias
is used.
\end{proposition}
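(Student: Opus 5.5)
The proof is a second-order Taylor expansion of $B_2(\varepsilon')=\langle r(\varepsilon'),r(\varepsilon')\rangle_{\mathrm F}$, where $r(\varepsilon'):=q^{\dagger}(\varepsilon')-q^*$. We use it at the base point $\varepsilon'=\varepsilon$.

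\textbf{Step 1: the residual vanishes at $\varepsilon$.} By Proposition~\ref{prop:eps_ident}, or directly by correct specification, the population cross-entropy $\mathrm{CE}(q^*\|Q_{\bm{\theta},\varepsilon})$ at $\varepsilon'=\varepsilon$ is minimized exactly when $Q_{\bm{\theta},\varepsilon}=q^*$. This is Gibbs' inequality applied row by row. The truth $\bm{\theta}^*$, rescaled onto the constraint $g=0$ if needed (scale fixed as in Theorem~\ref{thm:T3b}), therefore solves the augmented equation $\mathscr F=0$ with multiplier $\nu=0$. The first block vanishes because the gradient of the CE at its minimizer is zero. By the uniqueness of the local branch, which follows from invertibility of $D_{(\bm{\theta},\nu)}\mathscr F$ via the implicit function theorem, $\bm{\theta}^{\dagger}(\varepsilon)=\bm{\theta}^*$. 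Hence $q^{\dagger}(\varepsilon)=q^*$, i.e.\ $r(\varepsilon)=0$. I expect this to be the main obstacle. One must check that the branch selected by the implicit function theorem really passes through the truth, rather than through some other zero of $\mathscr F$, and that the scale constraint is compatible with $\bm{\theta}^*$. I would settle this by assuming, as in Theorem~\ref{thm:T3b}, that the branch is the one anchored at $(\bm{\theta}^*,0)$.

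\textbf{Step 2: differentiate.} Since $q^{\dagger}$ is $C^2$ near $\varepsilon$, the product rule gives $B_2'=2\langle r,r'\rangle$ and $B_2''=2\|r'\|_{\mathrm F}^2+2\langle r,r''\rangle$. Evaluating at $\varepsilon$ with $r(\varepsilon)=0$ yields $B_2'(\varepsilon)=0$ and $B_2''(\varepsilon)=2\|dq^{\dagger}/d\varepsilon'(\varepsilon)\|_{\mathrm F}^2\ge0$. Here $r'=dq^{\dagger}/d\varepsilon'$ is the total derivative given in Theorem~\ref{thm:T3b}.

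\textbf{Step 3: strict minimum.} If $dq^{\dagger}/d\varepsilon'(\varepsilon)\neq0$, then $B_2'(\varepsilon)=0$ and $B_2''(\varepsilon)>0$. The second-derivative test, or Taylor's theorem with Peano remainder, gives $B_2(\varepsilon')=\tfrac12B_2''(\varepsilon)(\varepsilon'-\varepsilon)^2+o((\varepsilon'-\varepsilon)^2)>0=B_2(\varepsilon)$ for $0<|\varepsilon'-\varepsilon|$ small. So $\varepsilon$ is a strict local minimum of $B_2$. The bias is $\sqrt{B_2}$ and $t\mapsto\sqrt t$ is strictly increasing on $[0,\infty)$, so the same strict local minimum holds for the Frobenius bias.

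\textbf{Step 4: non-differentiability remark.} Near $\varepsilon$ we have $\|r(\varepsilon')\|_{\mathrm F}=|\varepsilon'-\varepsilon|\,\|r'(\varepsilon)\|_{\mathrm F}+o(|\varepsilon'-\varepsilon|)$. This has a corner at $\varepsilon$ whenever $r'(\varepsilon)\neq0$, which explains why the statement works with the squared bias.
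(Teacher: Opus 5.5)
Your proof is correct and follows the same route as the paper, which takes $q^{\dagger}(\varepsilon)=q^*$ from the vanishing-bias clause of Theorem~\ref{thm:T3b} and then evaluates $B_2'=2\langle r,r'\rangle$ and $B_2''=2\|r'\|_{\mathrm F}^2+2\langle r,r''\rangle$ at the point where $r=0$. The one slip is in Step 1: you cannot ``rescale $\bm{\theta}^*$ onto $g=0$'' at fixed $\varepsilon$, because by the scale coupling of Proposition~\ref{prop:eps_ident} we have $Q_{c\bm{\theta}^*,\varepsilon}=Q_{\bm{\theta}^*,\varepsilon/c}\neq q^*$ in general; what you actually need is that some point of $\bm{\theta}^*+\mathcal{N}_\Phi$ satisfies $g=0$ (under Assumption~\ref{asm:A1}, simply $g(\bm{\theta}^*)=0$), which is exactly the anchoring-at-$(\bm{\theta}^*,0)$ assumption you end up making.
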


\begin{remark}[Empirical comparison]
The measured local inverse-map Lipschitz constant is $L_{\mathrm{emp}}=25.8$. The
theoretical prediction
$L_\Theta=\varepsilon\|\Phi^{\top}S_a\|_{\mathrm{op}}/(\pi_{\min}\lambda_{\min}(\Sigma))$
is derived by composing the core spectral bound \eqref{eq:38} with the feature reweighting;
using the $\pi_{\min}$ of the fitted plan and $\|\Phi^{\top}S_a\|_{\mathrm{op}}$ gives a
data-dependent prediction.
\end{remark}

\textbf{Derivation.}
$\nabla M_{\bm{\theta}}=\Phi^{\top}\mathscr S_{\pi_{\bm{\theta}}}\Phi=\Phi^{\top}B H_{\mathcal{T}}^{-1}B^{\top}\Phi$
is symmetric positive semidefinite. By the spectral sandwich \eqref{eq:sl_sandwich}, for
any $v\in\mathbb{R}^F$,
\[
v^{\top}\nabla M_{\bm{\theta}}v=(B^{\top}\Phi v)^{\top}H_{\mathcal{T}}^{-1}(B^{\top}\Phi v)\ge\frac{\pi_{\min}(\bm{\theta})}{\varepsilon}\|\mathbb{P}_{\mathcal{T}}\Phi v\|^2\ge\frac{\pi_{\min}(\bm{\theta})}{\varepsilon}\lambda_{\min}(\Sigma)\|v\|^2.
\]

Note that the strong-monotonicity modulus carries $\lambda_{\min}(\Sigma)$ to the first
power, whereas the singular-value bound \eqref{eq:38} of the Jacobian
$\mathcal{J}_{\bm{\theta}}$ carries $\sqrt{\lambda_{\min}(\Sigma)}$; the two scales differ.
On a convex domain ($K$ or $\Theta$), the segment $t\bm{\theta}+(1-t)\bm{\theta}'$ stays in
the domain, so the uniform bound applies along the entire segment; the fundamental theorem
of calculus gives strong monotonicity, and Cauchy--Schwarz gives the reverse Lipschitz
property of the moment map, so $\Psi_M$ is $\gamma^{-1}$-Lipschitz. The $Q$-space inverse
map $\Psi$ is obtained through $q\mapsto x=S_a q$
($S_a=\mathrm{diag}(a_1,\dots,a_K)\otimes I_K=R_a^{-1}$) composed with $\Phi^{\top}$ and
then $\Psi_M$.

\textbf{The monotone-operator viewpoint.} On the quotient space,
$M:\bm{\theta}\mapsto\Phi^{\top}x_{\bm{\theta}}$ is a strongly monotone operator. By the
Zarantonello--Minty theorem \cite{zarantonello1960,minty1962monotone}, a strongly monotone
map is injective with a Lipschitz inverse; \cite{rockafellar1976monotone} gives the
proximal-point algorithmic framework for monotone operators. The paper-specific content is
the uniform lower bound on the modulus $\gamma$ derived from the Sinkhorn linearization
(Section~\ref{sec:sinkhorn_linearization}).

\begin{assumption}[Compact domain for the global statement]\label{asm:pos}
This assumption is needed \textbf{only} for the global statement T3(b); the local
statement T3(a) does not require it. Assume the parameter is restricted to a compact
convex domain $\Theta\subset\mathbb{R}^F/\mathcal{N}_\Phi$ on which the Sinkhorn plan
entries are uniformly bounded away from 0:
\[
\pi_{\min}:=\inf_{\bm{\theta}\in\Theta}\min_{ij}\pi_{ij}(\bm{\theta})>0,
\]
and the gauge-cleaned Gram has a minimal eigenvalue:
$\lambda_{\min}^{\Theta}:=\lambda_{\min}(\Sigma)>0$ (the global version of the pointwise
identifiability condition).
\end{assumption}

\subsection{Numerical verification}

See Section~\ref{sec:exp} for details.

\section{T4: Convergence}\label{sec:t4}

\subsection{Statement}

\begin{theorem}[Monotone convergence to a local minimum]\label{thm:T4}
Under correct specification, let the population cross-entropy objective be
\[
\ell(\bm{\theta})=\sum_{l=1}^{L}w_l\,\mathrm{CE}(Q_{\bm{\theta}^*}^{(l)}\|Q_{\bm{\theta}}(a_l,b_l)),\qquad \sum_l w_l=1,\; w_l\ge0,
\]
where $Q_{\bm{\theta}^*}^{(l)}$ is the conditional transition operator at the true
parameter $\bm{\theta}^*$ (the population objective, not the empirical objective
$\ell_n$). The population Hessian at the true parameter $\bm{\theta}^*$ is positive
definite (derived from the core spectral bound), and gradient descent with a sufficiently
small step size converges monotonically to a local minimum, provided all iterates remain
within the local strong-convexity neighborhood of $\bm{\theta}^*$ (e.g.\ the sublevel set
containing the initial point). For the statistical theory of entropic OT see
\cite{bigot2019central} (central limit theorems), \cite{mena2019statistical} (statistical
bounds), and \cite{pooladian2021entropic} (entropy estimation).
\end{theorem}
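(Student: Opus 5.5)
The proof has three steps: compute the population Hessian at $\bm{\theta}^*$ and show it is a Fisher-information-type Gram matrix in the Jacobians $\mathcal{J}^{(l)}_{\bm{\theta}^*}$; lower-bound it using the core spectral bound \eqref{eq:38}; then run the standard descent-lemma argument for gradient descent on a smooth function that is strongly convex near $\bm{\theta}^*$.

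\textbf{Step 1: the Hessian at the truth.} Write $\ell(\bm{\theta})=-\sum_l w_l\sum_i\sum_j Q^{(l)}_{\bm{\theta}^*,ij}\log Q^{(l)}_{\bm{\theta},ij}$. Differentiating twice gives
\[
\nabla^2\ell=\sum_l w_l\sum_{i,j}Q^{*}_{ij}\Bigl(\tfrac{\nabla Q_{ij}\nabla Q_{ij}^{\top}}{Q_{ij}^2}-\tfrac{\nabla^2 Q_{ij}}{Q_{ij}}\Bigr).
\]
At $\bm{\theta}=\bm{\theta}^*$ the second term is $\sum_{i,j}\nabla^2 Q_{ij}$. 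It vanishes because each row of $Q_{\bm{\theta}}$ sums to one identically in $\bm{\theta}$; this is where the equal row weights $\rho_i=1$ matter. Hence
\[
H^*=\sum_l w_l\,\mathcal{J}^{(l)\top}D_{q^{(l)}}^{-1}\mathcal{J}^{(l)},
\]
with $\mathcal{J}^{(l)}=\mathcal{J}^{(l)}_{\bm{\theta}^*}$ and $D_q=\mathrm{diag}(q)$. Since every entry of $q$ is at most $1$, we get $D_q^{-1}\succeq I$, and therefore $v^{\top}H^*v\ge\sum_l w_l\|\mathcal{J}^{(l)}v\|^2$.

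\textbf{Step 2: the core spectral bound.} Proposition~\ref{prop:sigmamin} gives $\|\mathcal{J}^{(l)}v\|\ge(\pi^{(l)}_{\min}/(a_{l,\max}\varepsilon))\sqrt{\lambda_{\min}(\Sigma)}\|v\|$. Using $a_{l,\max}\le1$ and $\sum_l w_l=1$, this yields
\[
\lambda_{\min}(H^*)\ge\min_l(\pi^{(l)}_{\min})^2\lambda_{\min}(\Sigma)/\varepsilon^2=:\mu>0,
\]
provided Assumption~\ref{asm:A1} holds. If it does not, the same argument applies on the quotient $\mathbb{R}^F/\mathcal{N}_\Phi$ with $\lambda_{\min}^+$ in place of $\lambda_{\min}$.

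\textbf{Step 3: from the truth to a neighbourhood.} The map $\bm{\theta}\mapsto\pi_{\bm{\theta}}$ is smooth by the implicit-function argument of Section~\ref{sec:sinkhorn_linearization}, and $\pi_{\min}>0$ throughout, so $\nabla^2\ell$ is continuous. There is therefore a closed ball $U$ around $\bm{\theta}^*$ on which $\nabla^2\ell\succeq(\mu/2)I$. Compactness of $U$ gives a Lipschitz constant $L_U$ for $\nabla\ell$ on $U$. An explicit $L_U$ can be read off the upper bounds in Lemma~\ref{lem:sandwich} and \eqref{eq:38}, together with a bound on $\nabla^2 Q/Q$ obtained by differentiating \eqref{eq:35} once more.

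\textbf{Step 4: gradient descent.} Take step size $\eta\le1/L_U$. By the descent lemma, as long as both the current and next iterates stay in $U$,
\[
\ell(\bm{\theta}_{t+1})\le\ell(\bm{\theta}_t)-\tfrac{\eta}{2}\|\nabla\ell(\bm{\theta}_t)\|^2.
\]
So the objective values decrease monotonically. Strong convexity on $U$ gives the Polyak--\L{}ojasiewicz inequality, hence a linear rate
\[
\ell(\bm{\theta}_t)-\ell(\bm{\theta}^*)\le(1-\eta\mu/2)^t\bigl(\ell(\bm{\theta}_0)-\ell(\bm{\theta}^*)\bigr).
\]
Consequently the iterates converge to the unique minimizer in $U$. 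This minimizer is $\bm{\theta}^*$, since by Gibbs' inequality $\bm{\theta}^*$ minimizes the population cross-entropy; on the quotient space the limit is the class $[\bm{\theta}^*]$.

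\textbf{Main obstacle: keeping the iterates in $U$.} I would handle it by choosing $U$ to be a sublevel set $\{\ell\le\ell(\bm{\theta}^*)+r\}$ intersected with the ball. For small $r$ this set is convex and contained in the ball, because $\ell$ is strongly convex there. The descent inequality then makes it forward-invariant, since the next iterate lies on a segment along which the descent lemma still applies when $\eta\le1/L_U$. This matches the proviso in the statement that the iterates stay in the strong-convexity neighbourhood, for instance the sublevel set containing the initial point.
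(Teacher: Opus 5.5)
Your proposal is correct and follows essentially the same route as the paper. The second-derivative term vanishes because $\sum_j Q^{(l)}_{\bm{\theta}}(i,j)\equiv1$, so the Hessian at the truth is the Fisher-type matrix $\sum_l w_l\mathcal{J}_l^{\top}W_l\mathcal{J}_l$; this is bounded below via $W_l\succeq I$ and \eqref{eq:38}, and Hessian continuity plus the descent lemma with $\eta\le 1/L_\ell$ give the monotone linear rate.

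Two small remarks. First, the row-by-row cancellation does not depend on equal row weights: the paper keeps general $\rho_{l,i}$, which factor out of each row, so your parenthetical attributing the cancellation to $\rho_i=1$ is misplaced. Second, your sublevel-set forward-invariance argument supplies a detail that the paper simply takes as a hypothesis.
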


\subsection{Proof roadmap}

\textbf{Population cross-entropy and weights.} To cover multiple marginals
simultaneously, let $l=1,\dots,L$ with weights $w_l\ge0$, $\sum_lw_l=1$; for each source
state $i$ let the cross-entropy weight be $\rho_{l,i}>0$. The operator-level objective of
this paper weights every row equally, i.e.\ $\rho_{l,i}=1$; a likelihood weighted by the
joint sample frequencies corresponds to general $\rho_{l,i}$. The population loss is
\[
\ell(\bm{\theta})=-\sum_{l=1}^{L}w_l\sum_{i=1}^{K}\rho_{l,i}\sum_{j=1}^{K}Q_{\bm{\theta}^*}^{(l)}(i,j)\log Q_{\bm{\theta}}^{(l)}(i,j).
\]

\textbf{Exact form of the Hessian (the second-order residual vanishes).} Write
$\mathcal{J}_l(\bm{\theta})=\partial q_{\bm{\theta}}^{(l)}/\partial\bm{\theta}$ and the
diagonal weight at the truth
$W_l=\mathrm{diag}_{i,j}(\rho_{l,i}/Q_{\bm{\theta}^*}^{(l)}(i,j))$. Differentiating the
population loss once gives
$\nabla\ell(\bm{\theta})=-\sum_lw_l\mathcal{J}_l(\bm{\theta})^{\top}(\rho_l\odot q_l^*/q_l(\bm{\theta}))$
(division coordinatewise). Differentiating again, two types of terms appear at
$\bm{\theta}^*$: the Jacobian outer product, and the second-derivative term of $q_l$. For
a fixed source state $i$, the coefficient of the second-order term is
$-w_l\rho_{l,i}\sum_j\partial^2Q_{\bm{\theta}}^{(l)}(i,j)/\partial\bm{\theta}\partial\bm{\theta}^{\top}\big|_{\bm{\theta}^*}$.
Since $\sum_jQ_{\bm{\theta}}^{(l)}(i,j)=1$ holds identically in $\bm{\theta}$, double
differentiation gives
$\sum_j\partial^2Q_{\bm{\theta}}^{(l)}(i,j)/\partial\bm{\theta}\partial\bm{\theta}^{\top}=0$,
so the second-derivative residual term \textbf{vanishes row by row}, with no cross terms.
The population Hessian at the truth is exactly the Fisher-type matrix
\[
\boxed{\nabla^2\ell(\bm{\theta}^*)=\sum_{l=1}^{L}w_l\,\mathcal{J}_l(\bm{\theta}^*)^{\top}W_l\,\mathcal{J}_l(\bm{\theta}^*)}.
\]

\textbf{Curvature lower bound.} From $Q_{\bm{\theta}^*}^{(l)}(i,j)\le1$ we get
$W_l\succeq\rho_{\min}I$, where $\rho_{\min}:=\min_{l,i}\rho_{l,i}$. Combined with the core
spectral bound \eqref{eq:38},
\[
\lambda_{\min}(\nabla^2\ell(\bm{\theta}^*))\ge\sum_lw_l\rho_{\min}\sigma_{\min}(\mathcal{J}_l(\bm{\theta}^*))^2
\ge\sum_lw_l\rho_{\min}\Bigl(\tfrac{\pi_{\min,l}}{a_{l,\max}\varepsilon}\Bigr)^2\lambda_{\min}(\Sigma).
\]

For the operator-level objective, take $\rho_{\min}=1$ and use the looser bound
$a_{\max}\le1$, giving
\[
\lambda_{\min}(\nabla^2\ell(\bm{\theta}^*))\ge\frac{\pi_{\min,*}^2}{\varepsilon^2}\lambda_{\min}(\Sigma)=:\mu>0,\qquad \pi_{\min,*}:=\min_l\pi_{\min,l}.
\]

When the right-hand side is strictly positive, $\ell$ has a positive definite Hessian at
the truth, derived from the core spectral bound. By continuity of the Hessian, there exist
a neighborhood $U$ of the truth and constants $0<\mu_0<\mu$ and $L_\ell<\infty$ such that
$\mu_0 I\preceq\nabla^2\ell(\bm{\theta})\preceq L_\ell I$ ($\bm{\theta}\in U$), so $\ell$
is locally strongly convex on $U$ with Lipschitz gradient. $\mu_0$ is generally strictly
smaller than $\mu$ (by Hessian continuity, $\mu_0=\mu-\delta$ for some $\delta>0$).
Gradient descent $\bm{\theta}_{r+1}=\bm{\theta}_r-\eta\nabla\ell(\bm{\theta}_r)$, when the
initial point lies in the basin of attraction within $U$ and $0<\eta\le1/L_\ell$, satisfies
$\ell(\bm{\theta}_{r+1})-\ell(\bm{\theta}^*)\le(1-\eta\mu_0)(\ell(\bm{\theta}_r)-\ell(\bm{\theta}^*))$,
converging monotonically to the local minimum.

\begin{proposition}[Concentration of the empirical Hessian]\label{prop:hess_conc}
Let $\widehat{\bm{\theta}}_n^{\mathrm{CE}}$ be the empirical minimizer of the normalized
empirical cross-entropy objective, distinct from the $\ell_1$-penalized estimate
$\widehat{\bm{\theta}}_n^{\mathrm{lasso}}$ and the debiased refit estimate
$\widetilde{\bm{\theta}}_{S,n}$ of T2. On a neighborhood $U$ of the true parameter, assume
the full per-sample cross-entropy Hessian contribution $\nabla_{\bm{\theta}}^2\psi(Z;\bm{\theta})$
satisfies
\[
\sup_{\bm{\theta}\in U,z}\|\nabla_{\bm{\theta}}^2\psi(z;\bm{\theta})\|_{\mathrm{op}}\le M_H,
\]
and that the empirical Hessian is Lipschitz in the parameter:
\[
\|\nabla^2\ell_n(\bm{\theta})-\nabla^2\ell_n(\bm{\theta}')\|_{\mathrm{op}}
\le L_H\|\bm{\theta}-\bm{\theta}'\|.
\]
If for some $r_n>0$ one has the high-probability estimator error bound
\[
\mathbb P\left(\|\widehat{\bm{\theta}}_n^{\mathrm{CE}}-\bm{\theta}^*\|>r_n\right)\le\delta_\theta(n),
\]
then for any $t>0$, matrix Hoeffding and the triangle inequality give
\[
\mathbb P\left(
\|\nabla^2\ell_n(\widehat{\bm{\theta}}_n^{\mathrm{CE}})-\nabla^2\ell(\bm{\theta}^*)\|_{\mathrm{op}}>t+L_Hr_n
\right)
\le 2F\exp\left(-\frac{nt^2}{2M_H^2}\right)+\delta_\theta(n).
\]
In particular, if $L_Hr_n\le\mu/4$ and one takes $t=\mu/4$, then whenever
\[
n\ge\frac{32M_H^2}{\mu^2}\log\frac{2F}{\delta_H},
\qquad
\delta_H+\delta_\theta(n)\le\delta,
\]
with probability at least $1-\delta$ one has
\[
\lambda_{\min}\left(\nabla^2\ell_n(\widehat{\bm{\theta}}_n^{\mathrm{CE}})\right)\ge\frac\mu2>0.
\]
Here $\mu=(\pi_{\min,*}^2/\varepsilon^2)\lambda_{\min}(\Sigma)$ is the curvature lower
bound of the population Hessian. The rate $r_n=O(n^{-1/2})$ requires a separate proof of
the corresponding estimator error tail bound.
\end{proposition}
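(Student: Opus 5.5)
The plan is to split the argument into a stochastic part evaluated at the fixed point $\bm{\theta}^*$ and a deterministic perturbation part controlled by the Lipschitz assumption. The starting point is the triangle inequality
\[
\|\nabla^2\ell_n(\widehat{\bm{\theta}}_n^{\mathrm{CE}})-\nabla^2\ell(\bm{\theta}^*)\|_{\mathrm{op}}
\le\|\nabla^2\ell_n(\widehat{\bm{\theta}}_n^{\mathrm{CE}})-\nabla^2\ell_n(\bm{\theta}^*)\|_{\mathrm{op}}
+\|\nabla^2\ell_n(\bm{\theta}^*)-\nabla^2\ell(\bm{\theta}^*)\|_{\mathrm{op}}.
\]
On the event $\{\|\widehat{\bm{\theta}}_n^{\mathrm{CE}}-\bm{\theta}^*\|\le r_n\}$, the first term is at most $L_Hr_n$ by the assumed Lipschitz continuity of the empirical Hessian. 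Implicitly this requires the segment between $\widehat{\bm{\theta}}_n^{\mathrm{CE}}$ and $\bm{\theta}^*$ to lie in $U$, so I would take $r_n$ small enough that the ball of radius $r_n$ sits inside $U$. By hypothesis, the complement of this event has probability at most $\delta_\theta(n)$.

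For the second term, I would write $\nabla^2\ell_n(\bm{\theta}^*)=n^{-1}\sum_{m=1}^n\nabla^2_{\bm{\theta}}\psi(Z_m;\bm{\theta}^*)$. Under correct specification each summand has mean $\nabla^2\ell(\bm{\theta}^*)$. This identification uses the Fisher-type form of the population Hessian from the T4 roadmap, where the second-order residual vanishes row by row. If row weighting makes the per-sample contributions only unbiased after normalization, I would define $\psi$ as the normalized contribution so that this holds. The centered summands $X_m:=n^{-1}(\nabla^2\psi(Z_m;\bm{\theta}^*)-\nabla^2\ell(\bm{\theta}^*))$ are independent, symmetric $F\times F$ matrices with $\|X_m\|_{\mathrm{op}}\le 2M_H/n$. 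I would apply matrix Hoeffding in Tropp's form, $\mathbb P(\|\sum_m X_m\|\ge t)\le 2F\exp(-t^2/(8\sigma^2))$ with $\sigma^2=\|\sum_m A_m^2\|$. This produces a bound of the stated shape $2F\exp(-nt^2/(cM_H^2))$, where the constant $c$ depends on the Hoeffding variant used.

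\textbf{This constant is the main obstacle.} Getting exactly $2M_H^2$ in the denominator requires care in choosing the dominating matrices $A_m$. A crude choice, $A_m=2M_H/n\cdot I$, gives a worse constant. I would first try bounding the centered summands directly, using that both $\nabla^2\psi$ and its mean lie in $[-M_H,M_H]$ in the Loewner order, together with a bounded-differences (matrix McDiarmid) version with $A_m=M_H/n\cdot I$. If that still does not reproduce the stated constant, I would state the result with a generic absolute constant instead.

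To finish, a union bound gives the displayed probability inequality. For the corollary, I would set $t=\mu/4$ and use $L_Hr_n\le\mu/4$. Then on the good event Weyl's inequality yields
\[
\lambda_{\min}\bigl(\nabla^2\ell_n(\widehat{\bm{\theta}}_n^{\mathrm{CE}})\bigr)\ge\lambda_{\min}(\nabla^2\ell(\bm{\theta}^*))-\mu/2\ge\mu/2,
\]
where $\lambda_{\min}(\nabla^2\ell(\bm{\theta}^*))\ge\mu$ comes from the curvature lower bound derived from \eqref{eq:38}. Finally, the condition $2F\exp(-n\mu^2/(32M_H^2))\le\delta_H$ is equivalent to the stated sample-size requirement on $n$. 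Combined with $\delta_H+\delta_\theta(n)\le\delta$, this gives the claim with probability at least $1-\delta$.
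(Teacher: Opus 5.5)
Your skeleton is the argument the paper compresses into ``matrix Hoeffding and the triangle inequality.'' You split at $\bm{\theta}^*$ and bound the first piece by $L_Hr_n$ on $\{\|\widehat{\bm{\theta}}_n^{\mathrm{CE}}-\bm{\theta}^*\|\le r_n\}$. You then concentrate the fixed-point piece, take a union bound, and finish with Weyl's inequality and $\lambda_{\min}(\nabla^2\ell(\bm{\theta}^*))\ge\mu$ from the T4 curvature bound. One small correction: $\mathbb{E}\,\nabla^2\psi(Z;\bm{\theta}^*)=\nabla^2\ell(\bm{\theta}^*)$ follows simply from differentiating $\ell=\mathbb{E}\psi$ twice over a finite sample space. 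The row-by-row vanishing of the second-order residual (the Fisher form) is needed only for the lower bound $\mu$, not for unbiasedness.

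The genuine gap is the one you flagged yourself. As written, you do not prove the stated exponent $nt^2/(2M_H^2)$, and hence not the stated threshold $n\ge 32M_H^2\mu^{-2}\log(2F/\delta_H)$. Falling back to a generic absolute constant proves a weaker proposition. Your suggested repair also fails. In a matrix bounded-differences inequality, swapping $Z_m$ for an independent copy changes the sum by $n^{-1}\bigl(\nabla^2\psi(Z_m)-\nabla^2\psi(Z_m')\bigr)$. Under your hypothesis its norm can reach $2M_H/n$, so $A_m=(M_H/n)I$ is not a valid dominating matrix. With the valid choice $A_m=(2M_H/n)I$, Tropp's constant $1/8$ gives only $\exp(-nt^2/(32M_H^2))$.

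The missing idea is a matrix Hoeffding lemma that uses the \emph{uncentered} Loewner range. Set $Y_m:=n^{-1}\nabla^2\psi(Z_m;\bm{\theta}^*)$ and $M:=M_H/n$, so $-MI\preceq Y_m\preceq MI$.
\begin{enumerate}
\item Convexity of $y\mapsto e^{sy}$ on $[-M,M]$ and the transfer rule give $e^{sY_m}\preceq\cosh(sM)I+\sinh(sM)M^{-1}Y_m$.
\item Take expectations and use operator monotonicity of $\log$. Then apply, to each eigenvalue $\lambda\in[-M,M]$ of $\mathbb{E}Y_m$, the scalar Hoeffding lemma $\log\bigl(\cosh(sM)+\lambda M^{-1}\sinh(sM)\bigr)\le s\lambda+s^2M^2/2$. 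This yields $\log\mathbb{E}e^{sY_m}\preceq s\,\mathbb{E}Y_m+\tfrac{s^2M_H^2}{2n^2}I$.
\item Apply Lieb's theorem via Tropp's subadditivity of matrix cgfs. It tolerates the deterministic shift $-s\sum_m\mathbb{E}Y_m$ inside the trace exponential, and gives $\mathbb{E}\,\mathrm{tr}\exp\bigl(s\sum_m(Y_m-\mathbb{E}Y_m)\bigr)\le F\exp\bigl(s^2M_H^2/(2n)\bigr)$.
\item Choose $s=nt/M_H^2$. This gives $F\exp(-nt^2/(2M_H^2))$ for each tail, i.e.\ exactly $2F\exp(-nt^2/(2M_H^2))$ for $\|\nabla^2\ell_n(\bm{\theta}^*)-\nabla^2\ell(\bm{\theta}^*)\|_{\mathrm{op}}$.
\end{enumerate}
With this lemma in place, the rest of your argument goes through unchanged.
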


The IOT objective has no known analogue of a condition that rules out spurious local
minima; this paper retains the local-minimum structure.

\subsection{Numerical verification}

See Section~\ref{sec:exp} for details.

\section{O5: Misspecification Analysis}\label{sec:t5}

\subsection{Statement}

\begin{observation}[Projection onto the OT model set]\label{obs:T5}
Let $\mathcal{M}_{\mathrm{OT}}$ be the model set of conditional transition operators
generated by entropic OT, and $Q^{\star}$ the true (possibly non-OT) transition operator.
Under a compact convex parameter domain $\Theta$ (Assumption~\ref{asm:pos}), the image of
$Q_{\bm{\theta}}$ is compact; define the projection residual
$\mathrm{Res}(Q^{\star})=\min_{Q'\in\mathcal{M}_{\mathrm{OT}}}\mathrm{CE}(Q^{\star}\|Q')$
(the minimum is attained). Without compactness, replace $\min$ by $\inf$.

(1) Convergence (standard $M$-estimation). The IOT estimator converges to the pre-image of
the projection of $Q^{\star}$ onto $\mathcal{M}_{\mathrm{OT}}$. If the projection minimizer
is unique and lies in the identifiable region $\mathcal{R}$, then
\[
\widehat{\bm{\theta}}_n\xrightarrow{p}\Psi\Bigl(\arg\min_{Q'\in\mathcal{M}_{\mathrm{OT}}}\mathrm{CE}(Q^{\star}\|Q')\Bigr),
\]
where $\Psi$ is the inverse map of T3 (well-defined and Lipschitz on the identifiable
region $\mathcal{R}$ by T1 and T3). If the projection minimizer is not unique (multiple
minima), the statement can only be phrased as convergence in probability of
$\widehat{\bm{\theta}}_n$ to the set of minimizers in distance; if the minimizer falls
outside $\mathcal{R}$, then $\Psi$ is set-valued (gauge-kernel ambiguity) and the
conclusion is set-valued convergence.

(2) Residual bound (definitional). The limiting residual $\mathrm{Res}(Q^{\star})$ is
nonnegative, and for any random OT model $Q^{\mathrm{rand}}\in\mathcal{M}_{\mathrm{OT}}$
lying on the manifold,
\[
\mathrm{Res}(Q^{\star})\le\mathrm{CE}(Q^{\star}\|Q^{\mathrm{rand}}),
\]
a bound that is the \textbf{definitional minimality} of the projection: $\mathrm{Res}$ does
not exceed the value at any feasible point.
\end{observation}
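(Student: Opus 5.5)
The plan is to treat part (1) as a misspecified $M$-estimation statement in the sense of \cite{white1982estimator}, and part (2) as a one-line consequence of the definition of a minimum.

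\textbf{Setup and existence of the minimum.} First I will show that the minimum in $\mathrm{Res}(Q^\star)$ is attained. By the implicit-function argument of Section~\ref{sec:sinkhorn_linearization}, the map $\bm{\theta}\mapsto Q_{\bm{\theta}}$ is continuous. Under Assumption~\ref{asm:pos} all entries satisfy $Q_{\bm{\theta}}(i,j)\ge\pi_{\min}/a_{\max}>0$ uniformly on $\Theta$. Hence $\mathrm{CE}(Q^\star\|Q_{\bm{\theta}})=-\sum_{i,j}Q^\star_{ij}\log Q_{\bm{\theta},ij}$ is a continuous, bounded function on the compact set $\Theta$, and Weierstrass gives attainment. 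Continuity on a compact set also makes the image $\mathcal{M}_{\mathrm{OT}}$ compact.

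\textbf{Convergence.} Write the population criterion as $\ell^\star(\bm{\theta})=\sum_l\mathrm{CE}(Q^\star_l\|Q_{\bm{\theta}}(a_l,b_l))$ and the empirical criterion $\ell_n$ with $\widehat Q_l$ in place of $Q^\star_l$.

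\begin{enumerate}
\item I will establish a uniform law of large numbers, $\sup_\Theta|\ell_n-\ell^\star|\to0$ in probability. Since $\ell_n-\ell^\star=-\sum_{l,i,j}(\widehat Q_{l,ij}-Q^\star_{l,ij})\log Q_{\bm{\theta},ij}$, this sup is bounded by $\max_{l,i,j}|\widehat Q_{l,ij}-Q^\star_{l,ij}|\cdot K^2L\log(a_{\max}/\pi_{\min})$. The row-count argument of T2 shows $\widehat Q\to Q^\star$ in probability, because the empty-row event decays exponentially.
\item The penalty $\lambda_n\|\bm{\theta}\|_1$ is bounded on $\Theta$ and $\lambda_n\to0$, so it vanishes uniformly.
\item If the population minimizer $[\bm{\theta}^\dagger]$ is unique on the quotient, compactness plus continuity give a well-separated minimum. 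The standard argmin theorem then yields $\widehat{\bm{\theta}}_n\to\bm{\theta}^\dagger$ in probability, with distance taken in $\mathbb{R}^F/\mathcal{N}_\Phi$.
\item Identifying $\bm{\theta}^\dagger$ with $\Psi(\arg\min)$ uses T1(0): $Q'=Q_{\bm{\theta}^\dagger}$ determines $[\bm{\theta}^\dagger]$ uniquely. On $\mathcal{R}$, where T3 applies, $\Psi$ is single-valued and Lipschitz.
\end{enumerate}

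In the non-unique case, the same argmin argument gives convergence of $\widehat{\bm{\theta}}_n$ to the minimizer set in distance. If the minimizer falls outside $\mathcal{R}$, the preimage is a coset of $\mathcal{N}_\Phi$, which is the set-valued statement.

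\textbf{Residual bound and main obstacle.} Part (2) is immediate. $\mathrm{CE}\ge0$ because $-\log Q'_{ij}\ge0$, and for any $Q^{\mathrm{rand}}\in\mathcal{M}_{\mathrm{OT}}$ the minimum over $\mathcal{M}_{\mathrm{OT}}$ cannot exceed the value at the feasible point $Q^{\mathrm{rand}}$.

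The main obstacle is the well-separation step, since unique pseudo-true points are assumed rather than derived. I would state uniqueness (on the quotient) as a hypothesis, which the statement permits. A second concern is that $\Psi$, defined in T3 on the OT image, must be applied to a projection point; here it is, because the projection lies in $\mathcal{M}_{\mathrm{OT}}$ by construction.
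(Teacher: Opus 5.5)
Your proposal is correct and follows essentially the same route as the paper. Compactness of $\Theta$ and uniform positivity of $Q_{\bm{\theta}}$ give a uniform law of large numbers for the row-weighted cross-entropy, and an assumed unique pseudo-true minimizer (unique on the quotient, via T1(0)) gives consistency, with the set-valued case handled by distance to the minimizer set; part (2) is just the definitional minimality of the projection. Your version is slightly tighter than the paper's roadmap in three places, but these refine the same argument rather than give a different proof:
\begin{itemize}
\item your explicit ULLN bound $\max_{l,i,j}|\widehat Q_{l,ij}-Q^\star_{l,ij}|\cdot K^2L\log(a_{\max}/\pi_{\min})$ uses the linearity of $\mathrm{CE}$ in $\widehat Q$, where the paper appeals to equicontinuity;
\item you dispose of the vanishing $\ell_1$ penalty explicitly, whereas the paper's argument silently drops it;
\item you use the well-separation/argmin theorem, which is the cleaner route to convergence in probability than the paper's subsequence argument.
\end{itemize}
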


\begin{remark}[Dependence of O5 on T1--T4 and the H\"older exponent]
O5 depends on T1 (the identifiable region) and T3 (the Lipschitz inverse map), and not on
T2 or T4. The H\"older continuity of the projection map
$Q\mapsto\arg\min_{Q'\in\mathcal{M}_{\mathrm{OT}}}\mathrm{CE}(Q\|Q')$ controls the
robustness of the estimator under misspecification. The experiments
(Section~\ref{sec:exp}) estimate the empirical effective exponent
$\alpha_{\mathrm{eff}}\in(0,1)$ via log-log regression across multiple settings; the
$\varepsilon=0.1$ setting is affected by Adam optimization residual, and the reliability of
$\alpha_{\mathrm{eff}}$ should be assessed jointly with the condition number analysis (T1)
and optimization convergence (T4).
\end{remark}

\subsection{Proof roadmap (consistency)}

\textbf{The pseudo-true projection.} Let the true conditional transition operator be
$Q^{\star}$ (not necessarily from the OT model of this paper), the model manifold
$\mathcal{M}_{\mathrm{OT}}=\{Q_{\bm{\theta}}:\bm{\theta}\in\Theta\}$, and the population
conditional cross-entropy
$\mathcal{L}_{Q^{\star}}(\bm{\theta})=\mathrm{CE}(Q^{\star}\|Q_{\bm{\theta}})=-\sum_i\rho_i\sum_jQ^{\star}(i,j)\log Q_{\bm{\theta}}(i,j)$
($\rho_i>0$ the source-state weights). The operator-level objective of this paper takes
$\rho_i=1$ (equal row weights), consistent with $\rho_{l,i}=1$ in the T4 population
objective above; if the samples are drawn at the joint frequencies, $\rho_i$ can be taken
as the empirical marginal frequency $a_i$ of the source states. If
$Q^{\star}\in\mathcal{M}_{\mathrm{OT}}$ and T1 holds, the projection is
$Q^{\dagger}=Q^{\star}$; otherwise $Q^{\dagger}$ is the pseudo-true model point.

\textbf{The consistency argument.} First make explicit the empirical process to which the
empirical objective corresponds. In this paper $\widehat Q$ is the row-normalized
conditional operator and the empirical objective uses equal row weights ($\rho_i=1$), so
the ULLN is stated for this operator-level empirical process
$\mathcal{L}_{n,Q^{\star}}(\bm{\theta}):=-\sum_i\sum_j\widehat Q_n(i,j)\log Q_{\bm{\theta}}(i,j)$;
if the samples are drawn at the joint frequencies and weighted by $\rho_i=a_i$, the
empirical process is different, and the corresponding weighted ULLN must be proved
separately---the equal-row-weight form here cannot be applied directly. Assume $\Theta$ is
compact, $Q_{\bm{\theta}}(i,j)$ is continuous and uniformly positive on $\Theta$, and the
population objective has a unique minimizer
$\bm{\theta}^{\dagger}\in\arg\min_{\bm{\theta}\in\Theta}\mathcal{L}_{Q^{\star}}(\bm{\theta})$.
Then the function family $\{-\log Q_{\bm{\theta}}(i,j):\bm{\theta}\in\Theta\}$ is uniformly
bounded and equicontinuous on the finite state space, and the operator-level empirical
objective $\mathcal{L}_{n,Q^{\star}}$ satisfies the uniform law of large numbers
\[
\sup_{\bm{\theta}\in\Theta}\bigl|\mathcal{L}_{n,Q^{\star}}(\bm{\theta})-\mathcal{L}_{Q^{\star}}(\bm{\theta})\bigr|\xrightarrow{p}0.
\]

Take any convergent subsequence $\widehat{\bm{\theta}}_{n_r}$; its limit
$\bar{\bm{\theta}}$ satisfies
$\mathcal{L}_{Q^{\star}}(\bar{\bm{\theta}})\le\mathcal{L}_{Q^{\star}}(\bm{\theta})$
($\bm{\theta}\in\Theta$) by empirical minimality and the ULLN; unique minimization gives
$\bar{\bm{\theta}}=\bm{\theta}^{\dagger}$. All convergent subsequences have the same limit,
so $\widehat{\bm{\theta}}_n\xrightarrow{p}\bm{\theta}^{\dagger}$ and
$Q_{\widehat{\bm{\theta}}_n}\xrightarrow{p}Q^{\dagger}$. If the projection is not unique
(the population objective has multiple global minima), the statement can only be phrased as
convergence of the distance from the estimator to the set of minimizers
$\mathcal{S}^{\dagger}:=\arg\min_{\bm{\theta}\in\Theta}\mathcal{L}_{Q^{\star}}(\bm{\theta})$:
$\mathrm{dist}(\widehat{\bm{\theta}}_n,\mathcal{S}^{\dagger})\xrightarrow{p}0$. If
$\widehat{\bm{\theta}}_n$ and $\bm{\theta}^{\dagger}$ both lie in the same compact convex
regular domain $\Theta_{\mathrm{reg}}$ and $Q^{\dagger}$ lies in its model image, T3 gives
the local stability transfer
$\|\widehat{\bm{\theta}}_n-\bm{\theta}^{\dagger}\|\le L_\Theta\|Q_{\widehat{\bm{\theta}}_n}-Q^{\dagger}\|$.

\subsection{Numerical verification}

See Section~\ref{sec:exp} for details.

\section{Experiments}\label{sec:exp}

\paragraph{Protocol.}
All experiments fix the numpy and PyTorch random seeds for reproducibility; the main
setting is $\varepsilon=0.2$, $K\in\{4,5,6\}$, and the number of features
$F\in\{6,8,10,15\}$, with E6/E7/E10 additionally scanning $\varepsilon$; marginals are
generated randomly and the sample size ranges from 1000 to 40000; Sinkhorn plans are
computed in double precision. The self-contained implementation includes a differentiable
Sinkhorn, the feature-parameterized cost, Adam with cosine annealing, gradient clipping,
and non-finite restart. The aggregated results are in
\texttt{09\_iot\_theory/output/experiments\_e1\_e10.json}.

\subsection{T1: Identifiability (E6, E7)}

\begin{table}[tbhp]
\centering
\begin{tabular}{lccc}
\toprule
Setting & Rank & Condition number & Direction recovery correlation \\
\midrule
Full rank & 6 & 2.58 & 0.999 \\
Collinear (feature 2 $\approx$ feature 1) & 6 & 2474 & 0.958 \\
Pure gauge feature & 5 & --- & not recoverable \\
\bottomrule
\end{tabular}
\caption{Comparison of identifiability settings}
\label{tab:identifiability}
\end{table}

Verification of the marginal lower bound: $K{=}4,F{=}6$ already reaches full rank $6$ at
$L=1$; $K{=}3,F{=}5$ is capped at $(K-1)^2=4$ and $K{=}4,F{=}15$ is capped at $8$, and
adding $L$ has no effect; for $K{=}5,F{=}10$ the condition number improves with $L$ from
numerical ill-posedness (with a single marginal $\mathcal{S}_1$ is nearly singular, with
condition number $\approx10^{16}$, recorded as $\infty$) down to $2582$---an improvement in
numerical well-posedness, not a repair of the rank (T1 has proved that the rank is
independent of $L$). Collinearity damages the condition number and unique identifiability
(cond $2.58\to2474$), while the direction remains recoverable (corr $0.958$); the pure
gauge feature is the only true failure point.

\paragraph{Condition-number divergence as $\varepsilon\to0$ (E6, 10 seeds).}
For $\varepsilon\in\{0.5,0.3,0.2,0.1,0.05,0.02\}$ the log means are
$\log\mathrm{cond}(\mathcal{S}_L)\in\{5.19,\,6.83,\,3.53,\,4.82,\,10.77,\,15.75\}$, rising
by about $10.6$ log units (a factor of about $3.8\times10^4$) from $\varepsilon=0.5$ to
$\varepsilon=0.02$; the small-$\varepsilon$ tail shows a divergent trend for the given
features and marginals. The middle
segment ($0.2,0.1$) fluctuates because random features occasionally come close to the gauge
directions, showing that the actual condition number need not be strictly monotone. This
corresponds to the degeneration of $\pi_{\min}\to0$ in the core spectral bound
\eqref{eq:38}; the trend is an empirical observation for the given settings.

\begin{figure}[tbhp]
\centering
\includegraphics[width=0.75\textwidth]{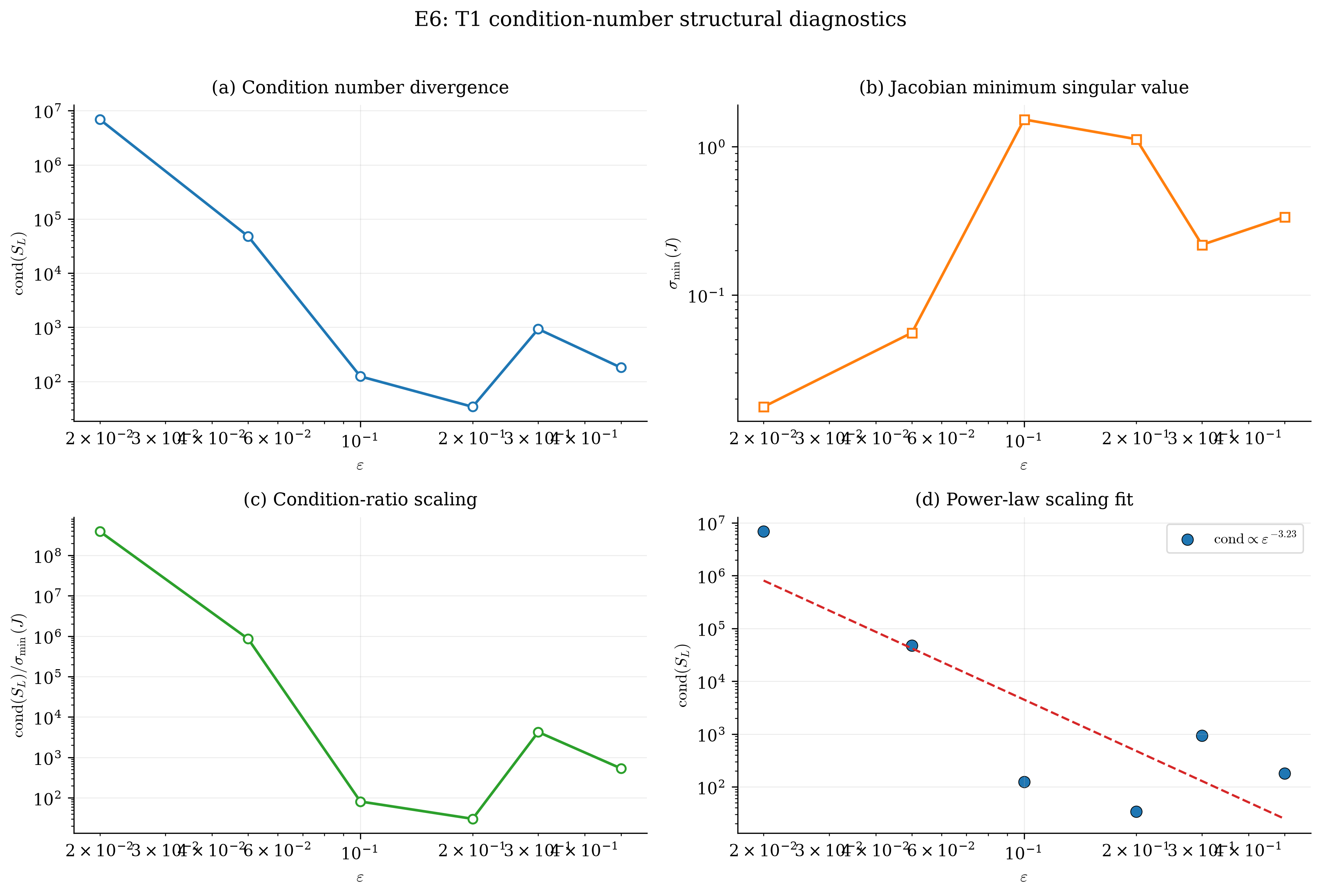}
\caption{Condition-number divergence as $\varepsilon\to0$}
\label{fig:e6_cond}
\end{figure}

\paragraph{Generality (E7).}
The divergence trend is reproduced under
$(K,F,\varepsilon)\in\{(6,8,0.2),(4,6,0.2),(8,12,0.2),(6,8,0.1),(6,8,0.3)\}$ (from
$\varepsilon=0.5$ to $0.05$, cond rises $699\to5.8\times10^7$,
$3.3\times10^4\to1.2\times10^5$, $478\to9.5\times10^4$, $444\to5.4\times10^4$, and
$342\to1.3\times10^7$ respectively).

\begin{figure}[tbhp]
\centering
\includegraphics[width=0.75\textwidth]{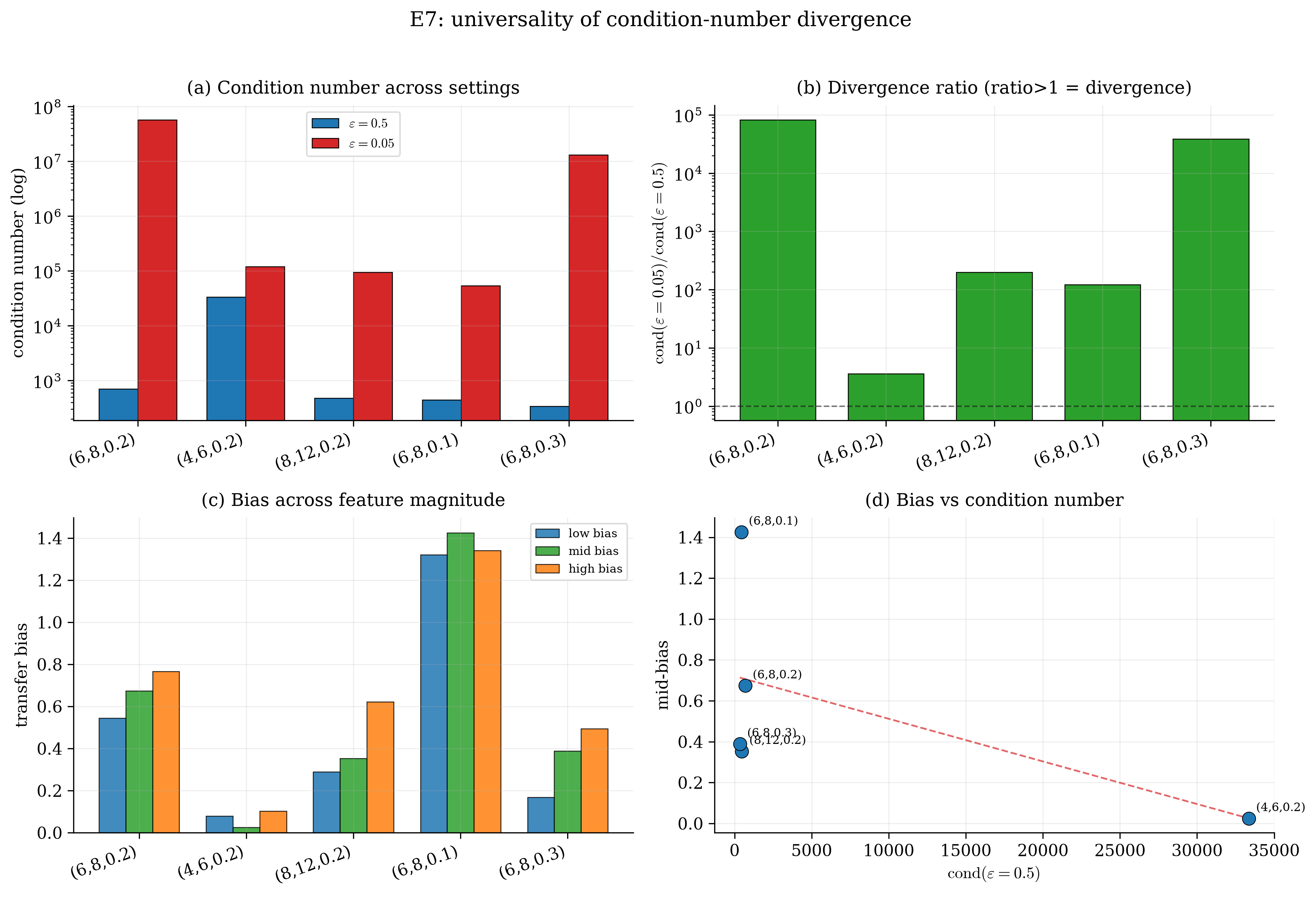}
\caption{Generality verification}
\label{fig:e7_generality}
\end{figure}

\FloatBarrier

\subsection{T2: Sparsistency (E1, E3, E8, E9)}

\paragraph{Recovery curve (E1).}
The support recovery probability is estimated from $100$ seeds per sample size over
$n\in\{200,500,800,1200,2000,3000,5000,8000,12000,20000\}$, with binomial 95\% confidence
intervals.
Figure~\ref{fig:sparsistency} shows a four-panel diagnostic: (a) sample-complexity phase
transition, with recovery probability rising from $0.08$ at $n=200$ to $0.93$ at
$n=8000$, with a slight dip to $0.90$ at $n=12000$ before reaching $0.96$ at $n=20000$;
(b) concentration mechanism decomposition, where the score-event
probability $\|G_n\|_\infty\le t_n$ rises from $0.19$ to $1.00$ and the row-count event
probability from $0.89$ to $1.00$; (c) failure-probability rate diagnostic, with the
linear fit of $-\log\widehat P_{\mathrm{fail}}$ against $n t_n^2/A_{\max}$ verifying the
exponential decay rate $C_2$ of Theorem~\ref{thm:T2}; (d) error decomposition, where the
false-positive rate drops from $0.26$ to $0.008$ and the false-negative rate from $0.05$
to $0$, confirming that support failure is primarily driven by false positives.

\begin{figure}[tbhp]
\centering
\includegraphics[width=0.95\textwidth]{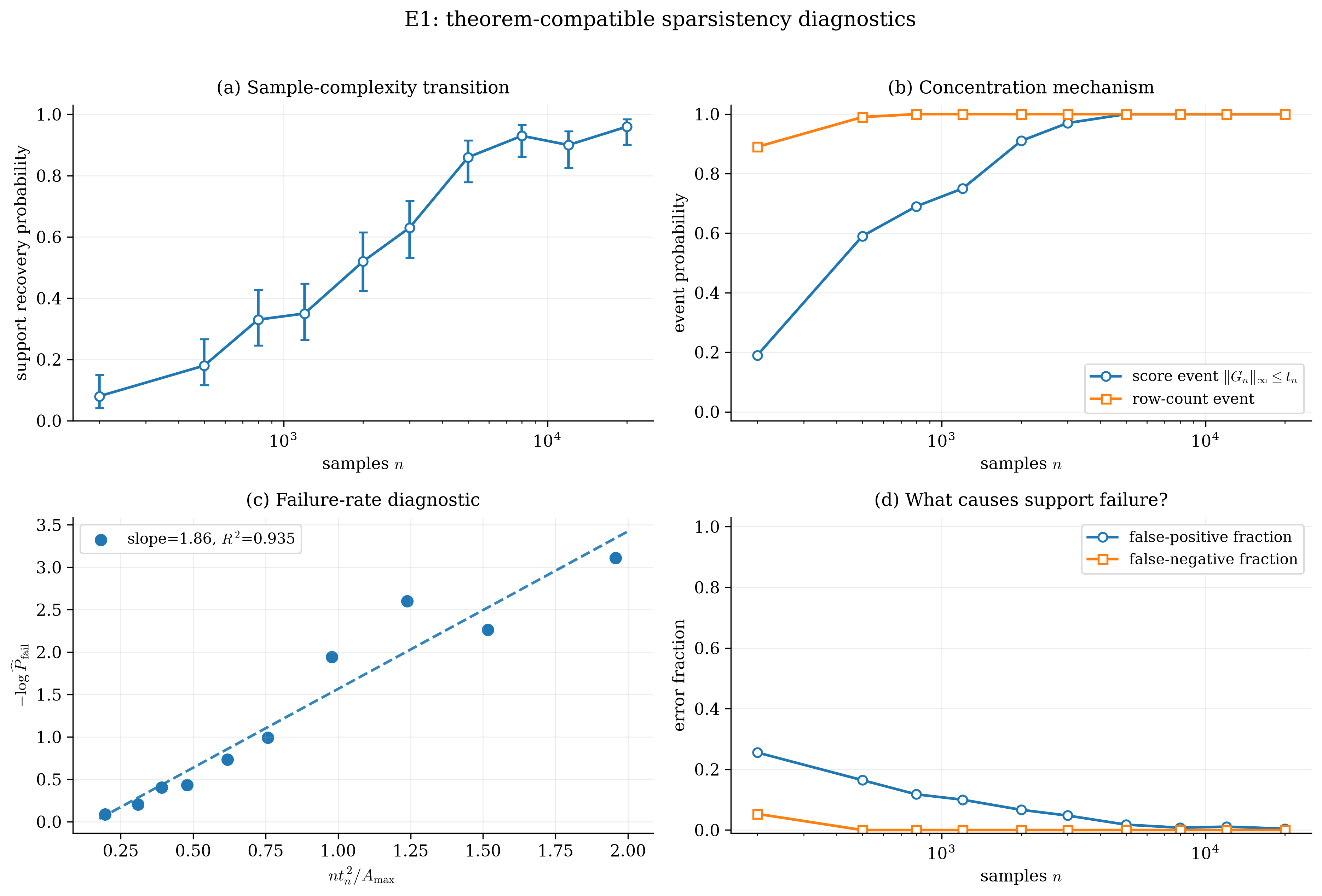}
\caption{Theorem-compatible sparsistency diagnostics: (a) sample-complexity phase
transition; (b) concentration mechanism decomposition; (c) failure-probability rate
diagnostic; (d) decomposition of support failure causes}
\label{fig:sparsistency}
\end{figure}

\paragraph{Voting (E3, 100 seeds).}
Each restart is run on a different bootstrap subsample to decorrelate the failures; with
vote counts $R\in\{1,3,5,7,9\}$ the recovery probabilities are
$\{0.20,0.40,0.30,0.60,0.50\}$, and the single-restart success rate is $0.47$. Voting
raises the recovery rate from $0.47$ to $0.50$--$0.60$ with fluctuations; voting is a conditional stabilizer, the gain being limited by a residual systematic failure component.

\begin{figure}[tbhp]
\centering
\includegraphics[width=0.75\textwidth]{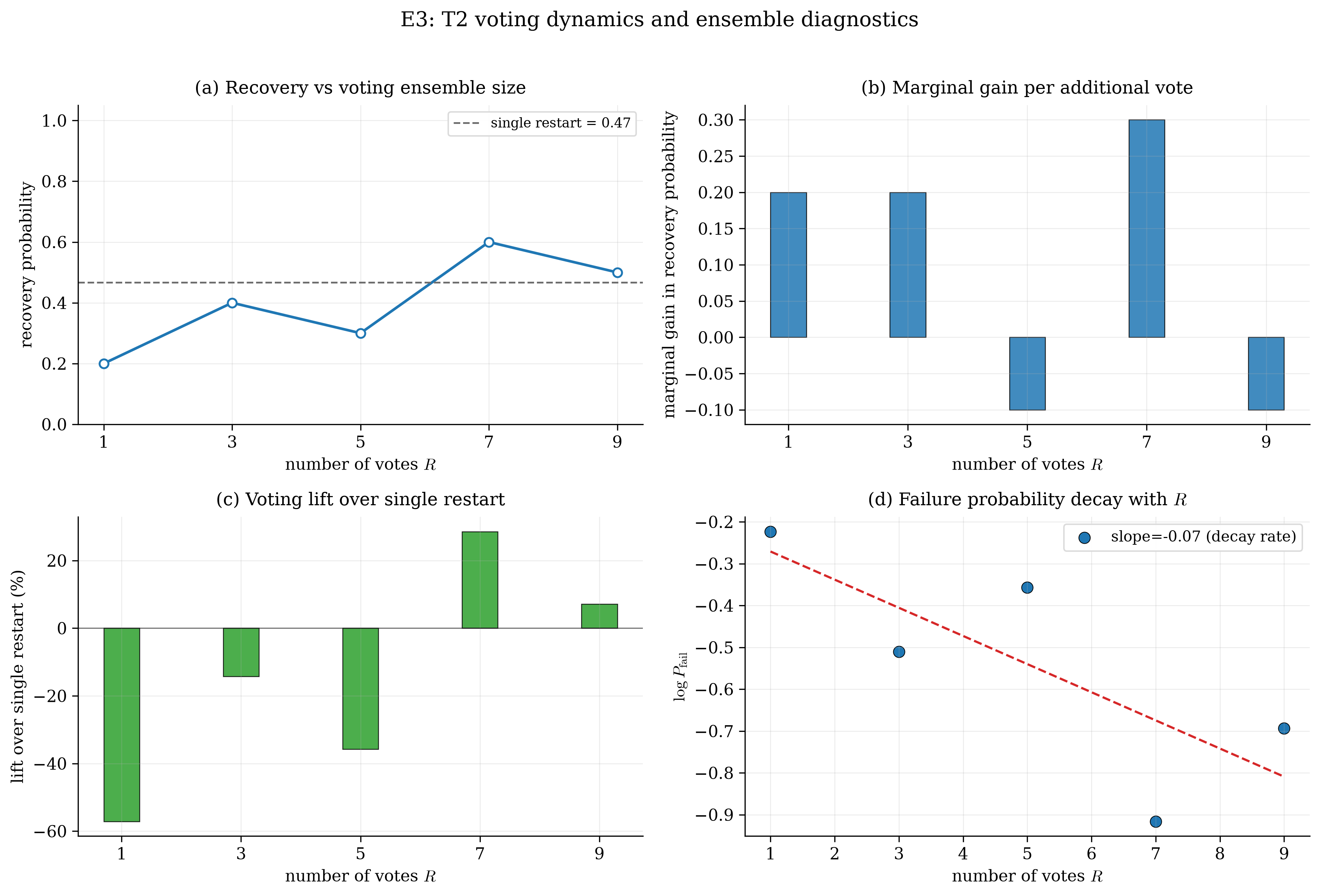}
\caption{Effect of the voting mechanism on the recovery probability}
\label{fig:e3_voting}
\end{figure}

\paragraph{Numerical verification of irrepresentability (E8b, 40 seeds).}
The irrepresentability quantity is computed on the population OT information matrix
$H=\nabla^2\ell(\theta^*)$ (averaged over $L=8$ marginal pairs), landing in $[1.36,7.43]$
with mean $3.19$; the condition $<1$ is satisfied $0/40$ times. Although IR is not
satisfied, recovery still succeeds in E1b, so the condition is positioned as a strictly
sufficient one: it is generically violated on unstructured features, while the recovery
conclusion of the theorem still holds empirically. Lemma~\ref{lem:mutual_incoherence} gives
a sufficient condition verifiable a priori from the feature structure, but that condition
is stronger than directly checking $H$-IR; in the experimental setting of this paper, the
sufficient condition of the lemma is not met, and the directly computed $H$-IR condition
fails as well; consequently, the lemma does not apply in this setting. The failure of the
sufficient condition while recovery still succeeds indicates that the condition, though
sufficient, is not necessary in this setting.

\begin{figure}[tbhp]
\centering
\includegraphics[width=0.75\textwidth]{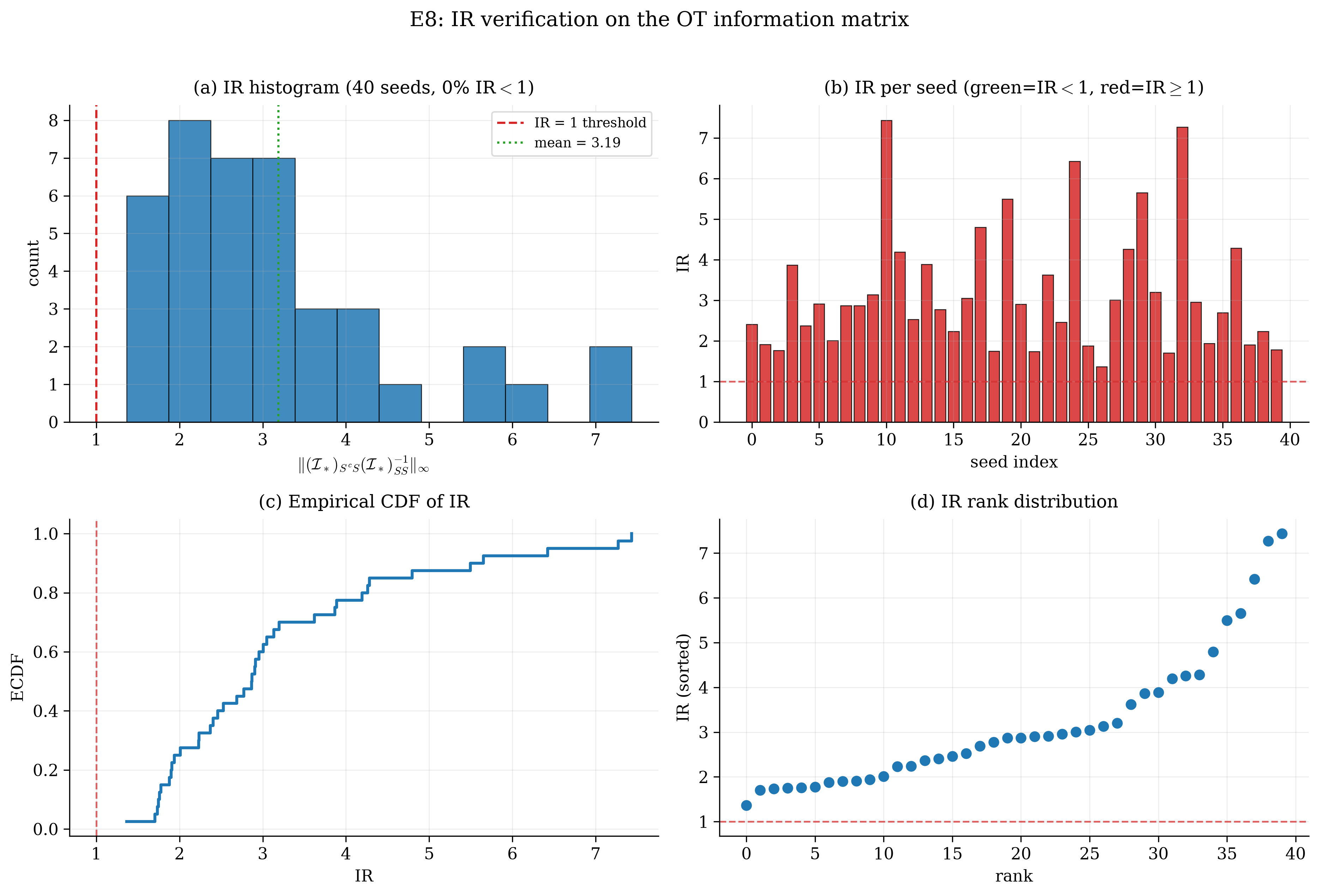}
\caption{Numerical verification of the irrepresentability condition}
\label{fig:e8_ir}
\end{figure}

\paragraph{Comparison of Bernstein constants (E9, 8 seeds).}
The ratio of the Hoeffding constant $C_2=2/\Delta_{\max}^2$ to the Bernstein refinement
$C_2^{\mathrm{B}}$ lies in $[13.7,33.9]$ (mean $24.4$): the variance-aware Bernstein
constant is $1$--$2$ orders of magnitude larger than the range-based Hoeffding constant,
confirming that the refinement is substantially sharper.

\begin{figure}[tbhp]
\centering
\includegraphics[width=0.75\textwidth]{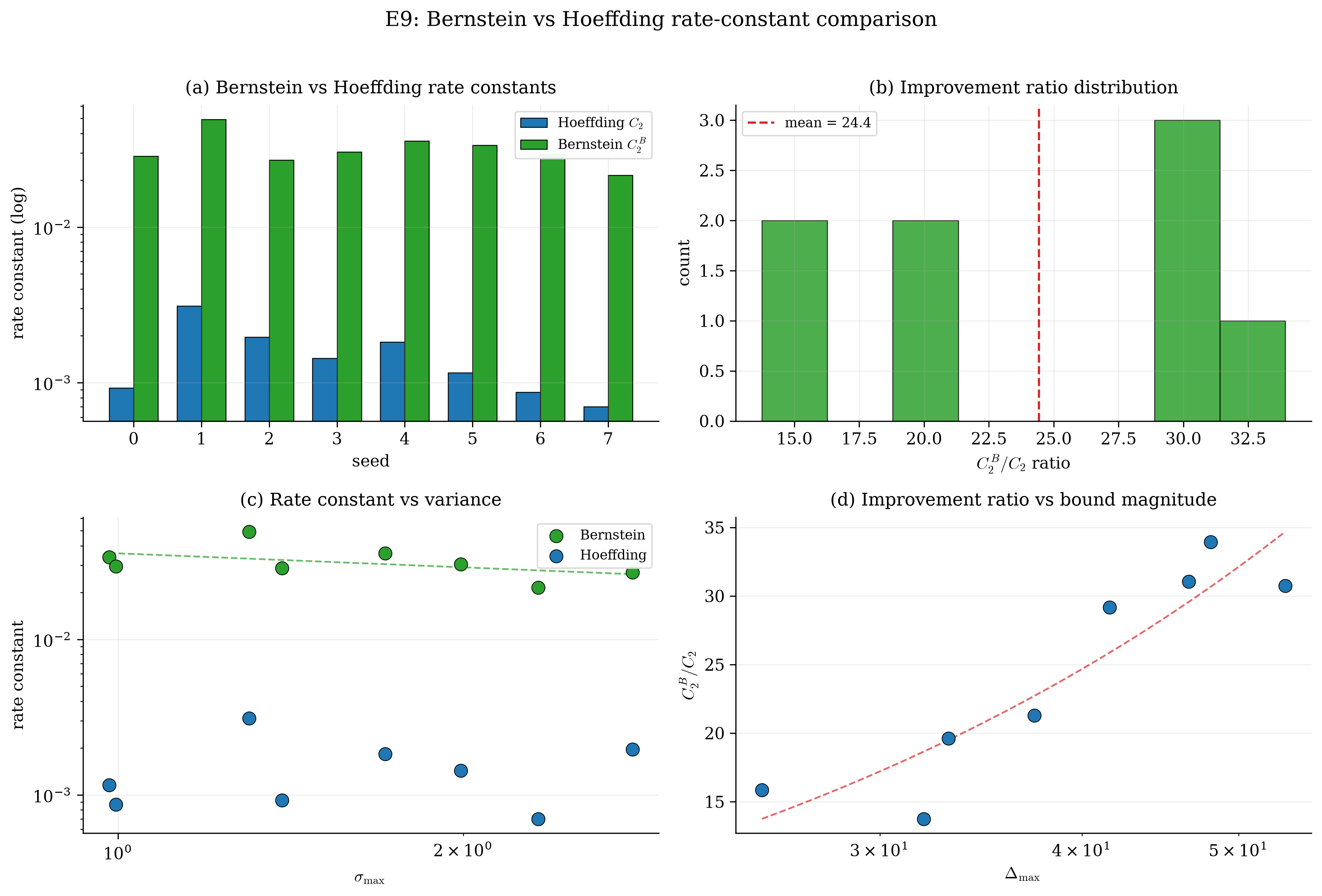}
\caption{Comparison of the Bernstein and Hoeffding constants}
\label{fig:e9_bernstein}
\end{figure}

\FloatBarrier

\subsection{T3: Well-posedness (E2)}

\begin{table}[tbhp]
\centering
\begin{tabular}{lc}
\toprule
Perturbation $\delta Q$ & Transfer error \\
\midrule
0.0003 & 0.0018 \\
0.001  & 0.0030 \\
0.003  & 0.0067 \\
0.01   & 0.0076 \\
\bottomrule
\end{tabular}
\caption{Perturbation transfer error}
\label{tab:perturbation}
\end{table}

\paragraph{The $\varepsilon'$-bias V-shape (E2, 40 seeds).}
The transfer bias is averaged over $40$ seeds with mean $\pm$ standard error reported; on
the grid $\varepsilon'\in\{0.05,0.1,0.2,0.3,0.5\}$ it is
$0.742\pm0.037,\,0.718\pm0.046,\,0.663\pm0.020,\,0.724\pm0.032,\,0.819\pm0.047$.
The quadratic fit has significant positive curvature ($p=0.024$, $R^2=0.77$), so the
V-shape is statistically supported; the empirical minimum is at $\varepsilon'\approx0.2$,
matching the generating $\varepsilon=0.2$. In the experiments of this paper, the V-shape is an empirical observation that provides finite-sample support for the local minimality of the
squared bias. The original Frobenius-norm bias $B(\varepsilon')$ is theoretically
non-differentiable at the truth (the V-shaped cusp), and the V-shape observed in the
experiment itself supports this fact; the squared bias removes the cusp issue.

\begin{figure}[tbhp]
\centering
\includegraphics[width=0.75\textwidth]{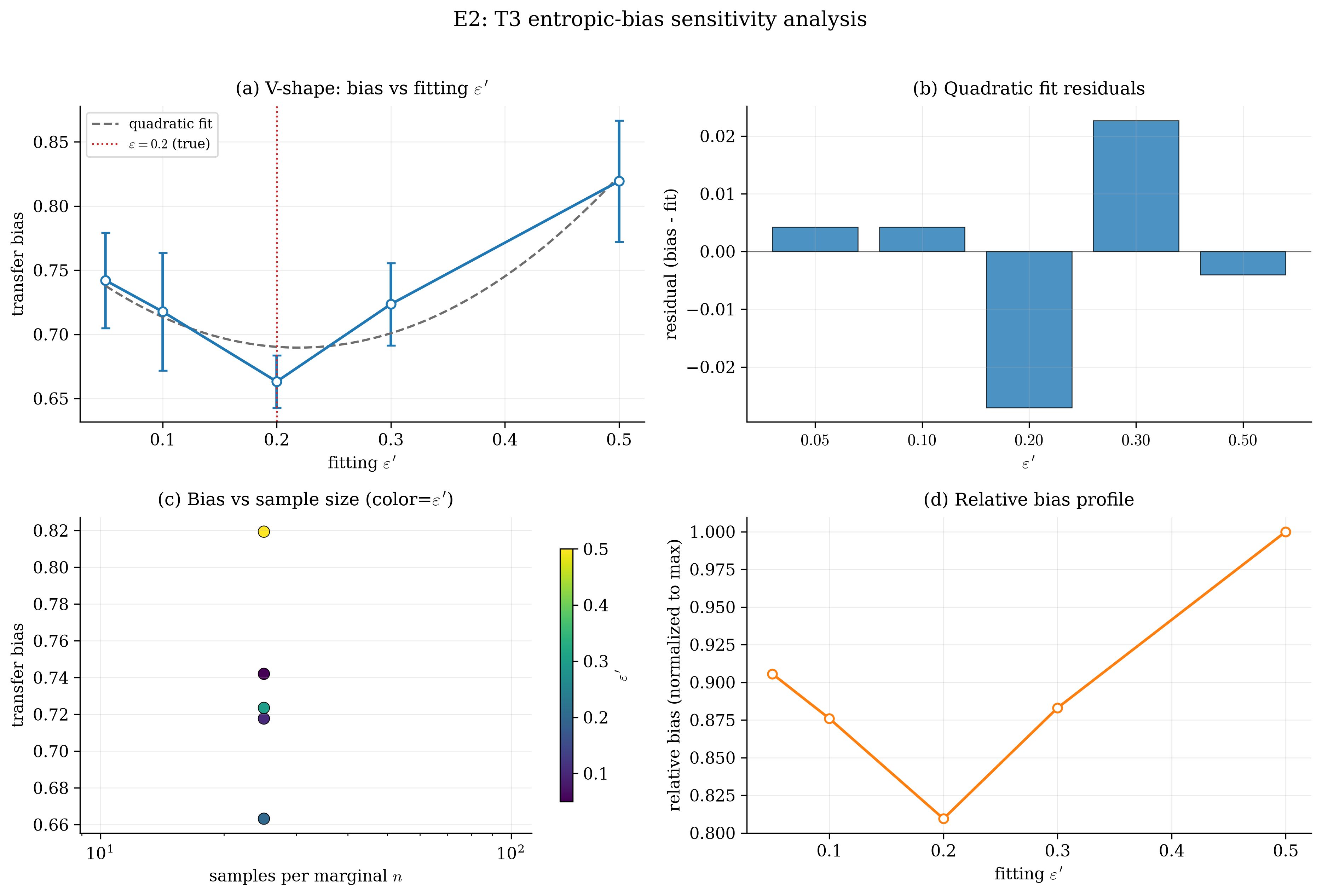}
\caption{The $\varepsilon'$-bias V-shape}
\label{fig:e2_epsbias}
\end{figure}

\FloatBarrier

\subsection{T4: Convergence (E4)}

\paragraph{Comparison of initialization schemes (E4, 40 restarts).}
The success rates of the three schemes---random / multiscale (coarse warm-start) /
prior-guided (initialized on the true support)---are $0.25$, $\textbf{0.95}$, $0.20$
respectively, with mean cross-entropies $14.39$, $14.34$, $15.03$. The multiscale coarse
warm-start raises the success rate from $0.25$ to $0.95$ (about $3.8\times$) with the
lowest objective: local strong convexity (Theorem~\ref{thm:T4}) guarantees that
fixed-step-size gradient descent decreases monotonically within a basin, and the multiscale
scheme raises the fraction of initializations landing in the basin containing the global
optimum. The actual optimization in this paper uses Adam, cosine annealing, and gradient
clipping; their convergence is an empirical phenomenon---Theorem~\ref{thm:T4} addresses
fixed-step-size gradient descent and does not cover adaptive optimizers such as Adam. The
prior-guided scheme performs worst, because fixing the true support injects the correct
active set but biases the magnitudes during optimization. Failures under random
initialization are systematic: a restart stuck in a spurious basin does not escape by
switching to a different random initialization.

\begin{figure}[tbhp]
\centering
\includegraphics[width=0.75\textwidth]{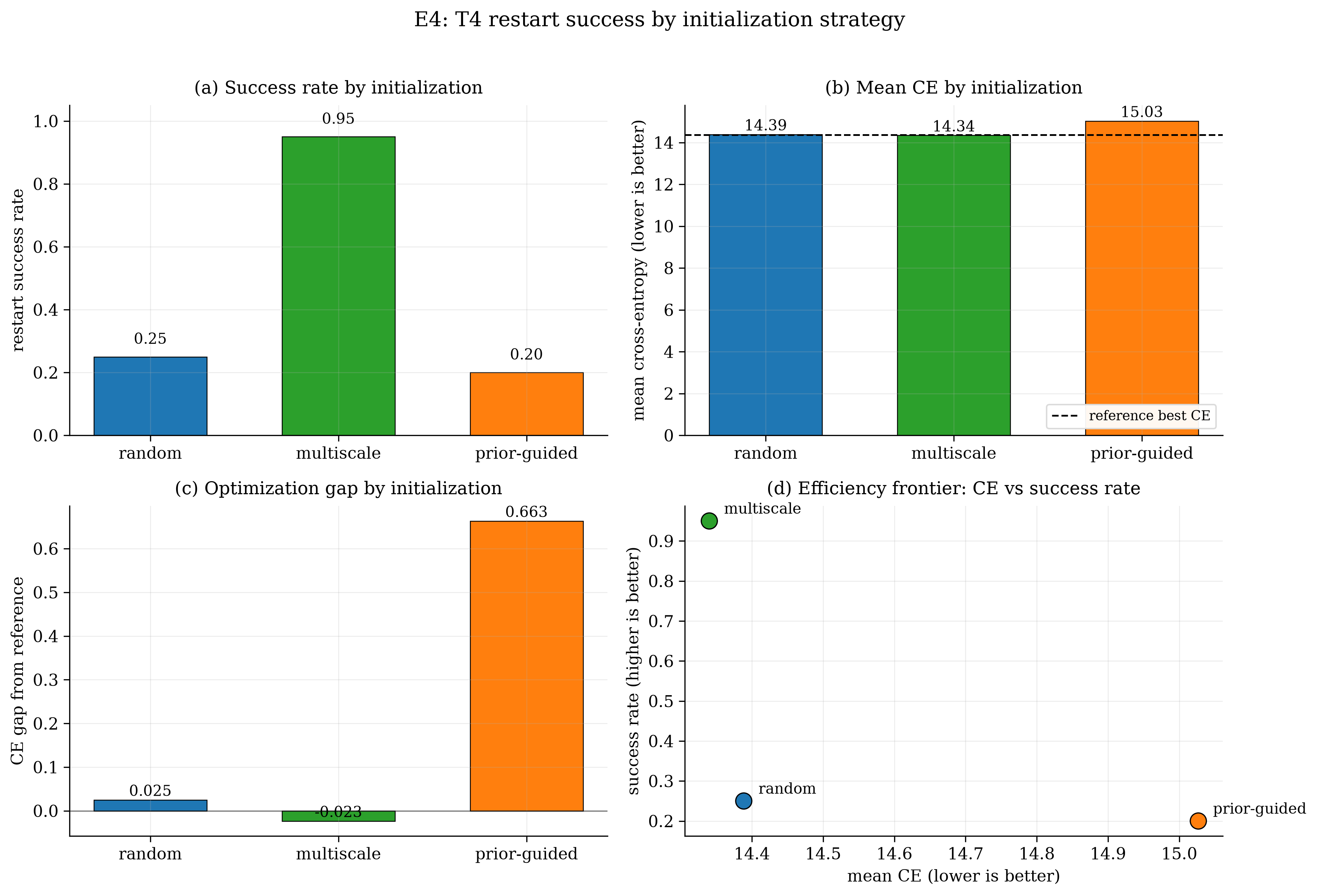}
\caption{Comparison of initialization schemes}
\label{fig:e4_init}
\end{figure}

\FloatBarrier

\subsection{O5: Misspecification (E5)}

Under a non-OT generating mechanism (row softmax plus noise), the projection residual is
$4.50$, larger than the true-OT residual $2.43$ but much smaller than the random-model
residual $10.63$ (ratio $0.42$); the true-OT residual $2.43$ is the total cross-entropy
over all source states (about $0.4$ nats per state), which under correct specification
should approach the entropy $H(Q^*)$ of the true distribution, with the gap to
$H(Q^*)$ reflecting finite-sample estimation error. The fitted
$\bm{\theta}$ collapses toward $0$ (magnitude $0.03$--$0.09$), indicating that the data
cannot be explained by an OT cost.

\paragraph{H\"older exponent across settings (E5).}
The H\"older exponent $\alpha$ of the projection map is estimated by a log--log regression
of the fitted-parameter displacement against the observation perturbation, reproduced under
four $(K,F,\varepsilon)$ settings: $\alpha_{\mathrm{eff}}\in\{0.353,0.266,0.369,0.298\}$
for $(6,8,0.2),(5,6,0.2),(8,12,0.3),(6,8,0.1)$ respectively (per-setting
$R^2\in[0.53,0.92]$, $p\in[4\times10^{-5},0.025]$), all lying in $(0,1)$ and away from the
degenerate endpoint $1$. $\alpha_{\mathrm{eff}}$ is a setting-dependent empirical exponent
estimate; the cross-setting variation reflects the influence of $\varepsilon$ and problem
dimension on the condition number of the projection map.

The $\varepsilon=0.1$ setting yields $\alpha_{\mathrm{eff}}=0.298$, the lowest among the
four. A quantitative analysis reveals that this setting is contaminated by the Adam
optimization residual: comparing the projection results after 500 iterations (the
experimental setting) and 2000 iterations (closer to the true minimum), the optimization
residual is $\|\bm{\theta}_{500}-\bm{\theta}_{2000}\|=0.267$, which exceeds the true
parameter displacement at the three smallest perturbation amplitudes
($10^{-5},3\times10^{-5},10^{-4}$)---$0.107$, $0.093$, and $0.248$, with residual ratios
$2.51$, $2.87$, and $1.08$, respectively. After removing these three residual-dominated
points, the log-log fit over the remaining six signal points yields $\alpha=0.048$
($p=0.67$, $R^2=0.05$), losing statistical significance. At $\varepsilon=0.1$, the elevated
condition number (T1) drives the projection map into an ill-conditioned regime; although the
CE has flattened after 500 iterations ($\Delta\mathrm{CE}\approx10^{-5}$), the parameters
continue to drift, and the optimization residual contaminates the log-log slope estimate at
small perturbations. In the other three settings ($\varepsilon\ge0.2$), the optimization
residual is only $0.003$--$0.016$, and all perturbation responses are well above the
residual (ratios $<0.42$), so the $\alpha$ estimates are reliable.

\begin{figure}[tbhp]
\centering
\includegraphics[width=0.75\textwidth]{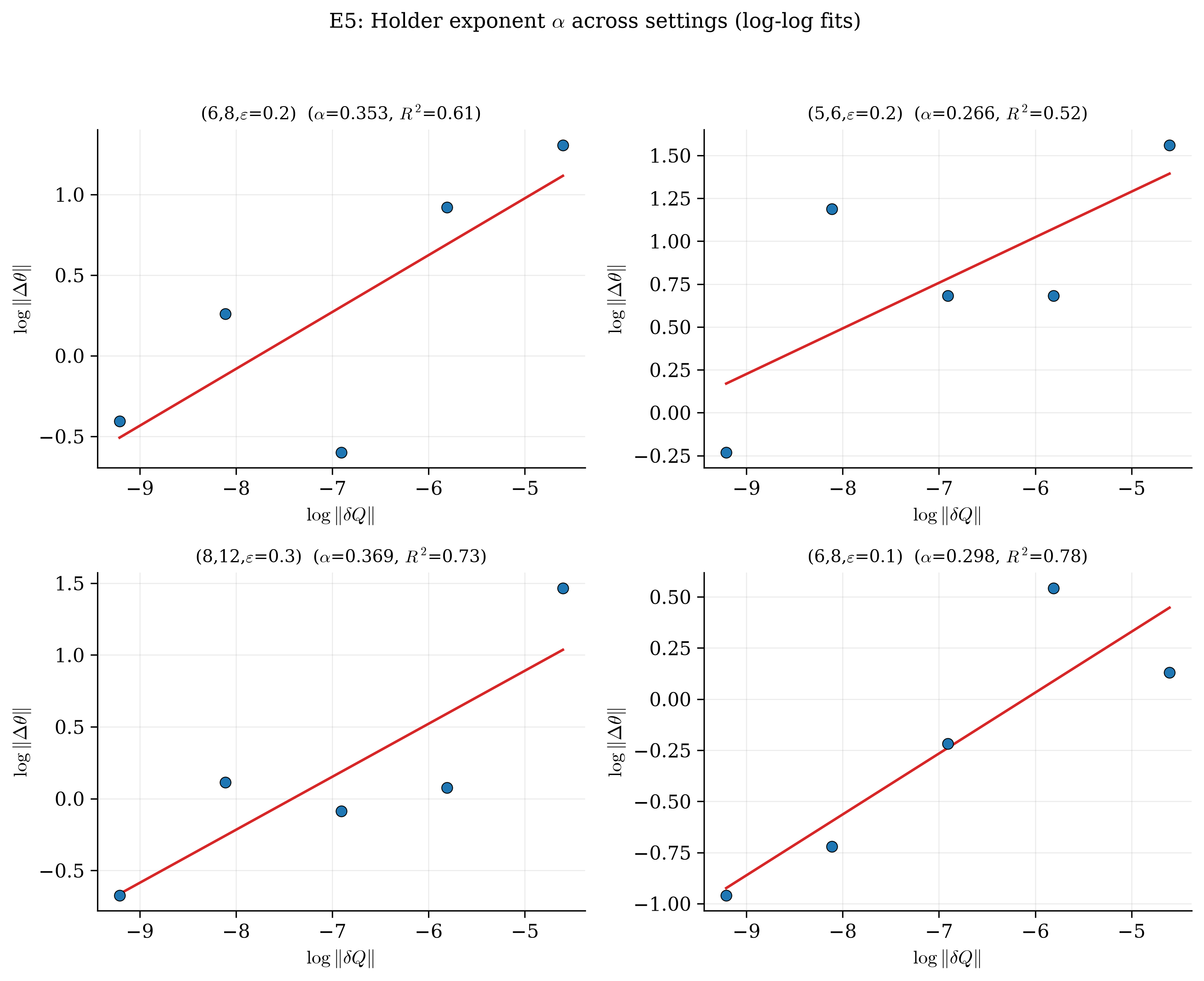}
\caption{Verification of the H\"older exponent across settings}
\label{fig:e5_holder}
\end{figure}

\begin{figure}[tbhp]
\centering
\includegraphics[width=0.75\textwidth]{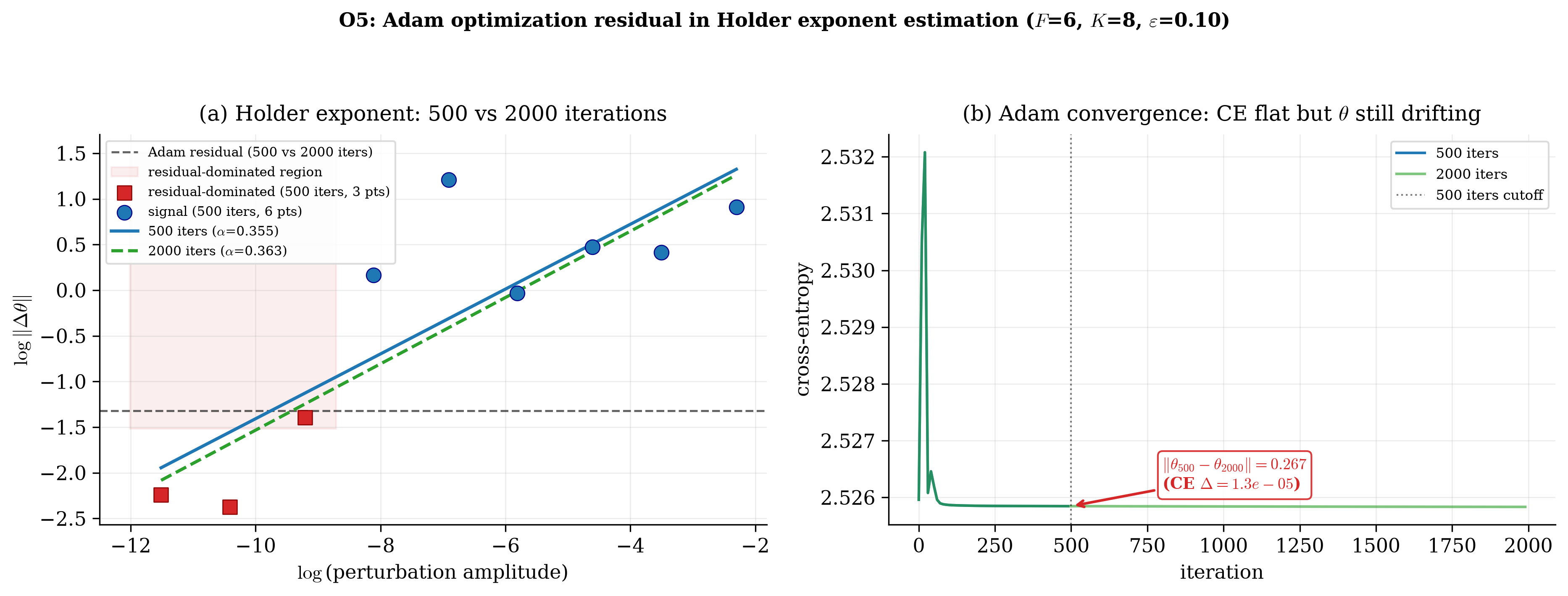}
\caption{Adam optimization residual analysis at $\varepsilon=0.1$: the CE has flattened
after 500 iterations while the parameters continue to drift; the responses at the three
smallest perturbation amplitudes are submerged by the optimization residual}
\label{fig:o5_adam}
\end{figure}

\FloatBarrier

\subsection{The \texorpdfstring{$\varepsilon$}{epsilon}-scaling law of
\texorpdfstring{$\pi_{\min}$}{pi-min} (E10)}

The minimum plan entry $\pi_{\min}(\varepsilon)$ is measured at
$\varepsilon\in\{0.05,0.1,0.2,0.3,0.5\}$, averaged over random marginals in log space.
An empirical fit of $\pi_{\min}\ge c\varepsilon^{\alpha}$ over the finite window
(log--log regression) gives the effective exponent $\alpha_{\mathrm{eff}}\approx19.9$
($R^2=0.89$, $p=0.015$).
These exponents are empirical fits over a finite window, not global polynomial lower
bounds; exponential decay $e^{-\Delta/\varepsilon}$ can also produce a large effective
power exponent over a finite window, and the observed
$\alpha_{\mathrm{eff}}\approx19.9$ may well be exactly this phenomenon. In the unstructured
case $\pi_{\min}$ collapses across many orders of magnitude (from $5.8\times10^{-13}$ at
$\varepsilon=0.5$ to $2.3\times10^{-31}$ at $\varepsilon=0.05$). The large effective exponent
$\alpha_{\mathrm{eff}}\approx19.9$ places the random features in the severely degenerate
regime. The associated Lipschitz constant $L\le\varepsilon/(\pi_{\min}\lambda_{\min}(\Sigma))$
inflates sharply in the degenerate regime; the effective information scale is
$\pi_{\min}(\theta,\varepsilon)\sqrt{\lambda_{\min}(\Sigma)}/\varepsilon$, and a threshold
cannot be given in terms of $\mathrm{cond}(\Sigma)^{-1}$ alone. The statement $\varepsilon\gg\mathrm{cond}(\Sigma)^{-1}$ in this paper is an empirical heuristic.

\begin{figure}[tbhp]
\centering
\includegraphics[width=0.75\textwidth]{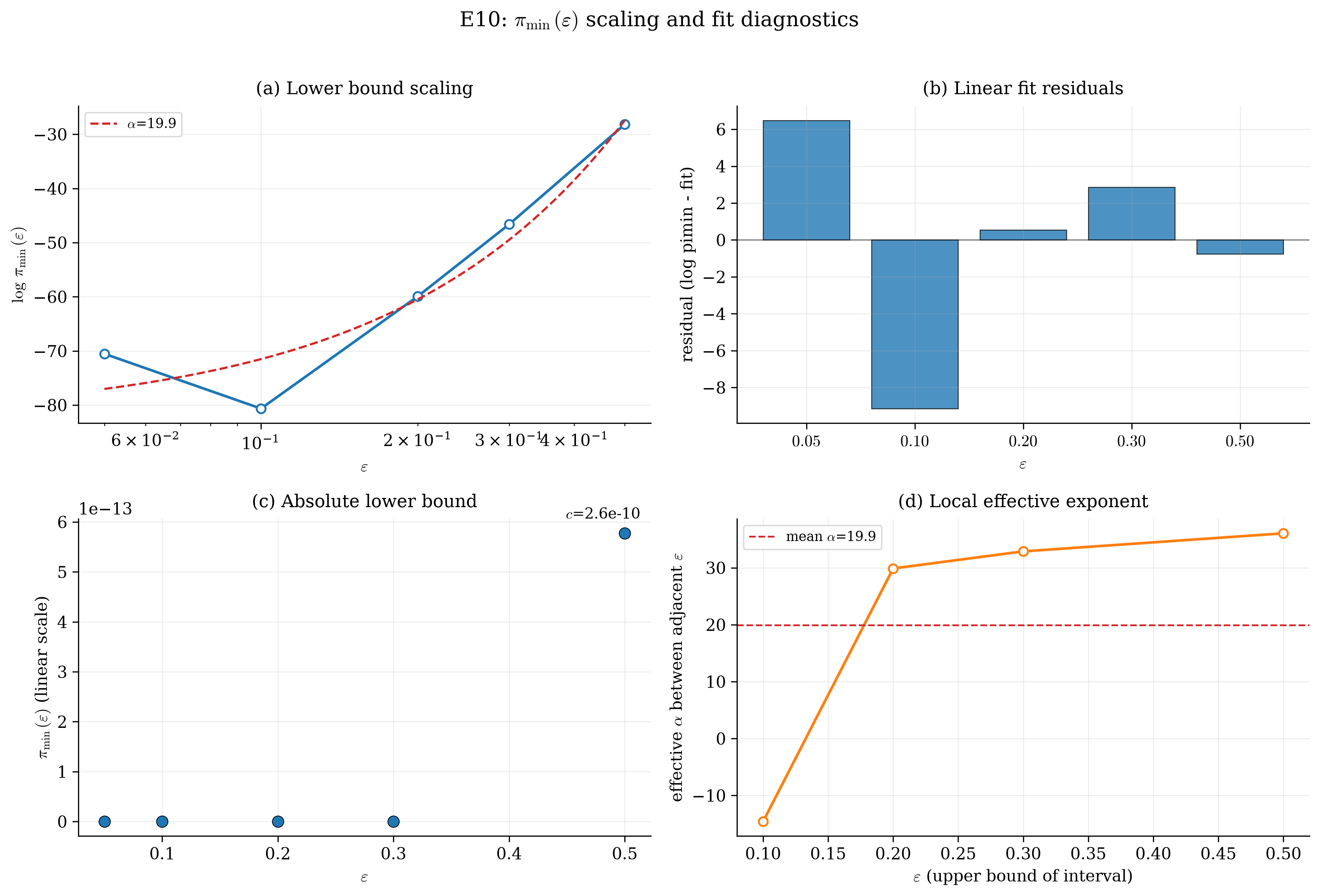}
\caption{The $\varepsilon$-scaling law of $\pi_{\min}$}
\label{fig:e10_pimin}
\end{figure}

\section{Conclusion}\label{sec:conclusion}

This paper establishes the statistical and algorithmic theory of IOT under a
feature-parameterized cost. The central technical contribution is the Sinkhorn
linearization and the spectral proxy (Section~3): the spectral sandwich of the restricted
Hessian yields a single core spectral bound, which, under the respective regularity
conditions, drives Theorem~\ref{thm:T1} (identifiability), Theorem~\ref{thm:T3}
(well-posedness), and Theorem~\ref{thm:T4} (population convergence with a conditional
finite-sample statement); Theorem~\ref{thm:T2} (sparsistency) additionally requires
irrepresentability of the actual OT information matrix, all-coordinate score concentration,
and a global selection condition; O5 (misspecification) interprets the estimation target as
the pseudo-true projection onto the OT model set. The main open problems: minimax lower
bounds for IOT estimation, global strong monotonicity without compactness, a proof of the
H\"older-continuity conjecture, and extensions to continuous state spaces. All numerical
verifications (E1--E10) are in Section~\ref{sec:exp}.

\FloatBarrier
\bibliographystyle{siamplain}
\bibliography{references}

\end{document}